\documentclass[11pt,a4paper]{article}
\usepackage[margin=25mm]{geometry}
\usepackage{lmodern}
\usepackage[sc,noBBpl]{mathpazo}
\usepackage[authoryear,round]{natbib}
\setcitestyle{citesep={;},aysep={,},yysep={;}}
\usepackage[T1]{fontenc}
\usepackage[utf8]{inputenc}
\usepackage{microtype}
\usepackage[table]{xcolor}
\usepackage{graphicx}
\usepackage{wrapfig}
\usepackage{needspace}
\usepackage{booktabs,multirow,makecell,array,tabularx,longtable}
\usepackage{amsmath,amssymb,mathtools,amsthm}
\usepackage{pifont}
\usepackage{caption,subcaption}
\usepackage{algorithm,algorithmic}
\usepackage{listings}
\usepackage[most]{tcolorbox}
\usepackage{placeins}
\usepackage{hyperref}
\usepackage{xurl}
\usepackage[capitalize,noabbrev]{cleveref}

\definecolor{citeblue}{rgb}{0,0.08,0.45}
\definecolor{linkred}{HTML}{990000}
\hypersetup{colorlinks=true,allcolors=black,citecolor=citeblue,linkcolor=linkred,pdfauthor={Liangyu Teng, Hengsong Liu, Juncen Guo, Jingyu Zhang, Yang Liu, Jing Liu, Liang Song},pdftitle={Which Models Work Well Together? Measuring Heterogeneity for LLM Team Selection}}
\AddToHook{env/wraptable/before}{\Needspace{12\baselineskip}}

\makeatletter
\renewcommand{\@seccntformat}[1]{\csname the#1\endcsname.\hspace{0.5em}}
\renewcommand{\section}{\@startsection{section}{1}{\z@}%
  {-2.5ex plus -0.5ex minus -0.2ex}{1ex plus 0.2ex}%
  {\normalfont\fontsize{13}{16}\selectfont\bfseries}}
\renewcommand{\subsection}{\@startsection{subsection}{2}{\z@}%
  {-2ex plus -0.5ex minus -0.2ex}{0.8ex plus 0.2ex}%
  {\normalfont\fontsize{12}{15}\selectfont\bfseries}}
\renewcommand{\subsubsection}{\@startsection{subsubsection}{3}{\z@}%
  {-1.5ex plus -0.5ex minus -0.2ex}{0.6ex plus 0.2ex}%
  {\normalfont\normalsize\bfseries}}
\renewcommand{\paragraph}{\@startsection{paragraph}{4}{\z@}%
  {1.25ex plus 0.4ex minus 0.2ex}{-1em}%
  {\normalfont\normalsize\bfseries}}
\makeatother

\definecolor{abstractcream}{HTML}{F7F0E3}
\renewenvironment{abstract}{%
  \begin{tcolorbox}[enhanced,breakable,colback=abstractcream,colframe=abstractcream,
    boxrule=0pt,arc=5pt,boxsep=0pt,left=14pt,right=14pt,top=12pt,bottom=12pt,
    before skip=6pt,after skip=12pt]
  \normalfont\normalsize\setlength{\parindent}{0pt}\ignorespaces
}{\end{tcolorbox}}

\makeatletter
\renewcommand{\@maketitle}{%
  \begingroup
  \setlength{\parindent}{0pt}\setlength{\parskip}{0pt}
  \vspace*{9pt}\nointerlineskip
  {\centering\fontsize{16}{19}\selectfont\bfseries\@title\par}
  \vspace{12pt}
  {\centering\normalsize\bfseries Fudan Institute on Networking Systems of AI\par}
  \vspace{18pt}
  \endgroup
}
\makeatother

\newcommand{\finishwrap}{%
  \par
  \ifnum\value{WF@wrappedlines}>1
    \vspace{\dimexpr\baselineskip*(\value{WF@wrappedlines}-1)\relax}%
  \fi
  \WFclear
}

\definecolor{scoregreen}{RGB}{0,120,0}
\definecolor{scorered}{RGB}{170,0,0}
\definecolor{scoregray}{RGB}{89,89,89}
\newcommand{\better}[1]{\textcolor{scoregreen}{#1}}

\newcommand{\scoregain}[2]{#1\raisebox{-0.35ex}{\fontsize{6}{6}\selectfont\better{$\uparrow$#2}}}
\newcommand{\modelicon}[1]{\raisebox{-0.18em}{\includegraphics[height=1.15em]{#1}}}
\newcommand{\modeltag}[2]{\modelicon{#1}~#2}
\newcommand{\removedtag}[2]{\colorbox{red!7}{\modeltag{#1}{#2}}}
\newcommand{\addedtag}[2]{\colorbox{green!9}{\modeltag{#1}{#2}}}
\newcommand{\xmark}{\ding{55}}
\newcommand{\ind}{\mathbf{1}}

\theoremstyle{plain}
\newtheorem{theorem}{Theorem}[section]
\newtheorem{proposition}[theorem]{Proposition}
\newtheorem{lemma}[theorem]{Lemma}
\newtheorem{corollary}[theorem]{Corollary}
\theoremstyle{definition}

\theoremstyle{remark}
\newtheorem{remark}{Remark}

\newcommand{\cmas}{C\textsuperscript{2}-MAS}
\newcommand{\HIerr}{\mathrm{HI}_{\mathrm{err}}}
\newcommand{\HIdist}{\mathrm{HI}_{\mathrm{dist}}}

\newcommand{\JSD}{\mathrm{JSD}}

\DeclareMathOperator*{\argmax}{arg\,max}

\newcommand{\E}{\mathbb{E}}
\renewcommand{\Pr}{\mathbb{P}}

\title{Which Models Work Well Together? Measuring Heterogeneity for LLM Team Selection}
\newcommand{\paperauthors}{%
  Liangyu Teng\textsuperscript{1},
  Hengsong Liu\textsuperscript{1},
  Juncen Guo\textsuperscript{1},
  Jingyu Zhang\textsuperscript{1},
  Yang Liu\textsuperscript{2},
  Jing Liu\textsuperscript{3},
  Liang Song\textsuperscript{1}%
}
\author{\paperauthors}
\date{}
\begin{document}
\maketitle
\begin{abstract}
The performance ceiling of an LLM team is constrained not only by individual model capabilities, but also by inter-member error resonance and predictive differences. Although heterogeneous teaming is often observed to be effective in practice, existing approaches lack complementarity metrics that are computable, interpretable, and optimizable, leaving team composition to rely on heuristics. We propose a heterogeneity-driven team selection framework that performs offline profiling to characterize individual capability along with two complementary signals: one captures decorrelation in error patterns to reduce co-failures, while the other measures divergence in predictive behavior to capture strategy diversity. We formulate team selection as a standardized quality--complementarity combinatorial objective and apply an efficient greedy search to select a small team from a candidate pool. Experiments across multiple benchmarks demonstrate that our framework consistently outperforms quality-only baselines under controlled candidate pools and team sizes, establishing reusable selection principles for multi-LLM systems.
\end{abstract}

\section{Introduction}
\label{sec:introduction}

Large language models (LLMs) are increasingly deployed as \emph{teams}: multiple models or agents collaborate through voting, debate, or orchestration to improve reasoning accuracy and robustness \citep{du2024improving, liang2024encouraging, wang2025mixture}.
In practical deployments, however, collaboration gains are tightly constrained by inference budget: each additional member introduces extra model calls and tokens, increasing end-to-end latency and cost.
Real systems must therefore keep teams small, drawing from a limited pool of deployable candidates.
These constraints make \emph{member selection} a primary design decision: under a fixed budget, which models should we call so that the team is both individually strong and behaviorally complementary?

A natural starting point is to pick the top-$k$ models by individual accuracy, but as illustrated in Figure~\ref{fig:motivation}(a), this strategy ignores a second axis that bounds team performance: \emph{similarity across members}.
Models can exhibit error resonance, failing on the same instances and concentrating probability on the same incorrect options; even when they agree on a label, they may share highly similar predictive behavior (confidence and preferences over alternatives), offering little strategy diversity.
Top-$k$-by-accuracy selection therefore tends to yield a strong yet redundant team whose blind spots largely overlap.
These two sources of redundancy (\textbf{co-failure} and \textbf{predictive similarity}) motivate selection criteria that go beyond quality and capture inter-model complementarity directly.

\begin{figure}[t]
    \centering
    \includegraphics[width=0.95\textwidth]{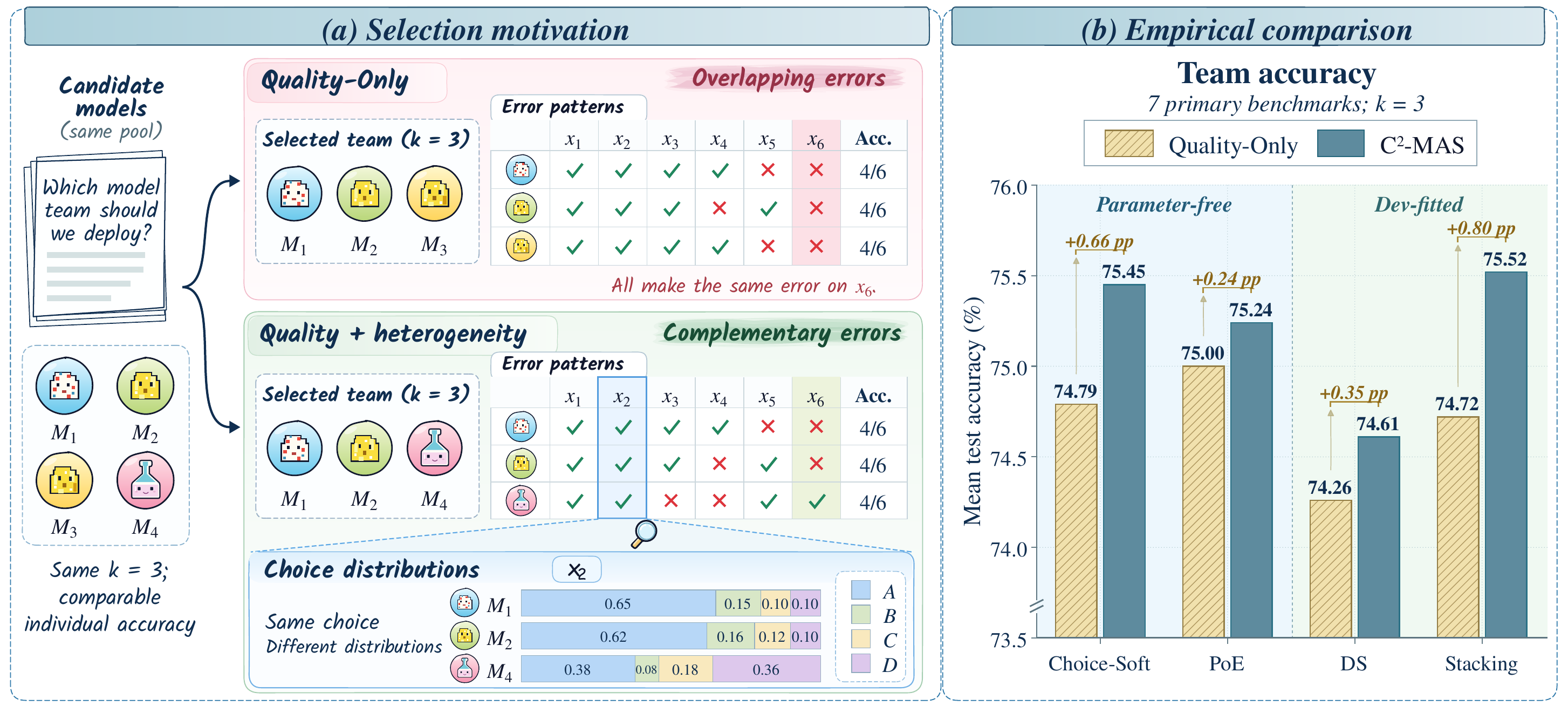}
    \caption{Illustration of heterogeneity-aware team selection.}
    \label{fig:motivation}
\end{figure}

While the value of heterogeneous teaming is widely recognized, practical team composition still relies largely on heuristics such as mixing model families.
Most recent work has instead focused on \emph{downstream} combination mechanisms such as output fusion, reranking, routing, and cascading, all of which take a callable model set and budget as given \citep{jiang2023llmblender, chen2024frugalgpt, shnitzer2024large, ong2025routellm}.
The \emph{upstream} question of which models to select in the first place remains comparatively underexplored.
We address this gap by formulating team selection as an offline, budgeted optimization problem: composing a compact and complementary team from development data, independently of the inference-time aggregation rule.
This requires complementarity signals that are computable, interpretable, and tractable to optimize, together with a selection procedure that scales to combinatorial search over candidate teams.

Driven by these requirements, we propose \textbf{C$^{2}$-MAS} (\textbf{C}ovariance-minimized and \textbf{C}apability-maximized \textbf{M}ulti-\textbf{A}gent \textbf{S}ystem), an offline profiling-and-selection pipeline that directly targets the two sources of redundancy identified above.
Given development data, we profile each candidate to estimate its individual capability together with two complementary heterogeneity signals:
\textbf{(1) error decorrelation}, which addresses co-failure by capturing whether models fail on different instances; and
\textbf{(2) distributional divergence}, which addresses predictive similarity by measuring disagreement in choice distributions over alternatives.
When multiple tasks are available, we average pairwise estimates across tasks to stabilize the complementarity signal.
We then select a size-$k$ team by optimizing a standardized quality--complementarity objective with an efficient multi-start greedy search.

Experiments on 13 multiple-choice benchmarks with 11 open-weight models show that heterogeneity signals yield consistent gains across aggregation rules (Figure~\ref{fig:motivation}(b)).
On the 7 primary benchmarks, \cmas{} reaches 75.52\% accuracy under \textsc{Stacking}, improving over the strongest selection baseline (Quality-Only, 74.72\%) by +0.80 pp \emph{without regressing on any task}, and over the random-k baseline by +3.58 pp.
On 6 additional held-out benchmarks, the gains persist under distribution shift, suggesting that the proposed signals capture transferable complementarity rather than task-specific artifacts. Our key contributions are summarized as follows:

\ding{182}~\textbf{Problem Formulation:} We frame team selection as an upstream, budget-constrained optimization problem, decoupled from downstream aggregation mechanisms.

\ding{183}~\textbf{Practical Solution:} We propose two interpretable heterogeneity metrics and a standardized quality--complementarity objective with efficient greedy search. Our analysis connects the two signals to collective errors and identifies conditions for accuracy gains over quality-only selection.

\ding{184}~\textbf{Empirical Validation:} We demonstrate consistent, non-regressive improvements across multiple benchmarks and aggregation rules, with gains that persist on held-out tasks under distribution shift.

\section{Related Work}
\label{sec:related_work}

\subsection{Multi-Agent and Multi-Model Collaboration}
LLM-based multi-agent systems organize collaboration through predefined roles and workflows \citep{li2023camel, hong2024metagpt, wu2024autogen}, and refine interaction through debate and voting \citep{du2024improving, choi2025debate}, communication structure optimization, and dynamic team formation \citep{zhuge2024gptswarm, zhang2025gdesigner, liu2024a}. Beyond agent interaction, multi-model methods perform fusion at the output, reasoning, or decoding stage \citep{jiang2023llmblender, wang2025mixture, huang2024ensemble}, while routing and cascading select models per input to balance quality and cost \citep{chen2024frugalgpt, shnitzer2024large, ong2025routellm}. Distinct from these approaches, we focus on task-level offline team selection, constructing fixed-size teams whose membership is determined independently of the subsequent aggregation rule.

\subsection{Ensemble Diversity and Team Selection}
Classical ensemble theory emphasizes individual accuracy and low error correlation, motivating diversity measures and validation-driven ensemble selection \citep{hansen1990neural, krogh1994neural, kuncheva2003measures, windeatt2005diversity, caruana2004ensemble}. Recent LLM ensemble selection methods exploit error or semantic diversity \citep{tekin2024llmtopla, cohen2026dfpe}, or adopt information-theoretic and label-based surrogate objectives \citep{turkmen2026dont, zhang2026complementary}. Building on this line of work, we jointly quantify error correlation and distributional divergence through a single offline profiling pass, combine these signals with model quality for team selection, and evaluate their utility and generality across multiple aggregation rules.

\section{Methodology}
\label{sec:method}

\cmas{} selects compact LLM teams through offline profiling and team search, followed by inference-time aggregation (Figure~\ref{fig:cmas_pipeline}). For each task, development data provide quality and heterogeneity estimates (\Cref{sec:profiling_signals}) for the standardized selection objective (\Cref{sec:selection}); the selected team's predictions are then combined by an aggregator (\Cref{sec:aggregation}).

\subsection{Preliminaries}
\label{sec:preliminaries}

We consider a collection of tasks $\mathcal{T}$. Each task $t\in\mathcal{T}$ has a label set
$\mathcal{Y}_t$ (e.g., $\{A,B,C,D\}$) and two disjoint splits: a development set
$D_t^{\mathrm{dev}}$ and a held-out test set $D_t^{\mathrm{test}}$. We assume a fixed pool of $m$
candidate models $\mathcal{M}=\{1,\ldots,m\}$ and select a team $S_t\subseteq\mathcal{M}$ of size
$k$ for each task $t$. Throughout, $D_t^{\mathrm{dev}}$ is used only for candidate profiling,
complementarity estimation, and training an aggregator (if needed), while $D_t^{\mathrm{test}}$ is used only for final evaluation.

To obtain comparable choice distributions, we use a short prompt to elicit a single option from the
model: ``Complete the sentence `The correct answer is ' by outputting exactly one letter from:
\texttt{<options>}''.\footnote{See \Cref{app:scoring_implementation} for full details.} Using the fixed probe prefix ``The correct answer is '', for each instance $x$
we query the model's log-probability $\ell_i(y\mid x)$ for each label $y\in\mathcal{Y}_t$ (i.e., the
next-token log-probability under this prefix) and normalize within $\mathcal{Y}_t$:
\begin{equation}
    p_i^t(y\mid x)
    = \frac{\exp(\ell_i(y\mid x))}{\sum_{y'\in\mathcal{Y}_t}\exp(\ell_i(y'\mid x))}.
    \label{eq:scoring_distribution}
\end{equation}
By construction, $p_i^t(\cdot\mid x)$ is a proper distribution over $\mathcal{Y}_t$. Based on this
distribution, model $i$ predicts $\hat{y}_i(x)=\argmax_{y\in\mathcal{Y}_t} p_i^t(y\mid x)$, and we
define the correctness indicator $C_i(x)=\mathbb{I}[\hat{y}_i(x)=y(x)]$, which equals
$1$ if the prediction is correct and $0$ otherwise. These definitions provide a
unified interface for the subsequent quality estimation, complementarity signals, and team
selection.

\begin{figure}[!t]
    \centering
    \includegraphics[width=\textwidth]{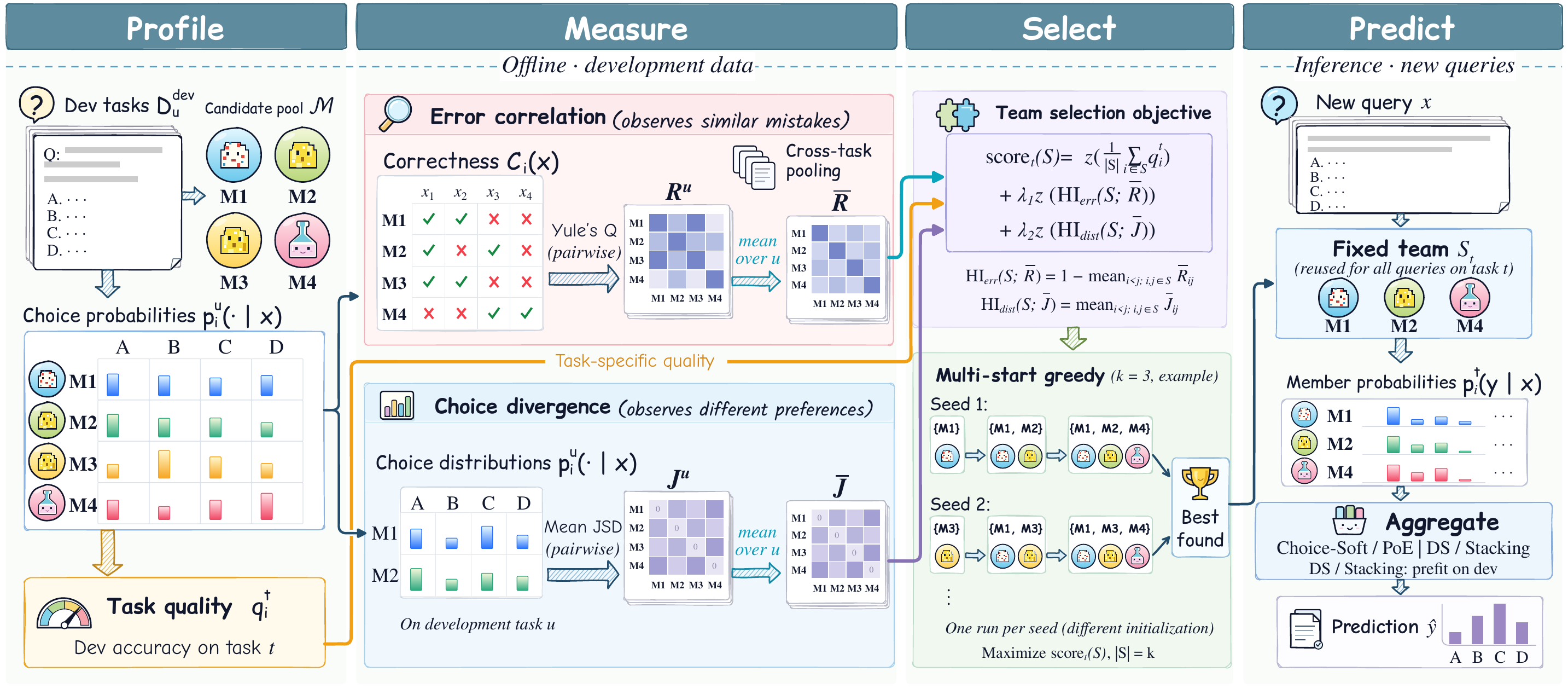}
    \caption{Overview of \cmas{} framework. Given a candidate pool of $m$ models, we (1)~profile each model on development data to obtain quality estimates and choice distributions; (2)~compute two heterogeneity index metrics ($\mathrm{HI}_{err}$ and $\mathrm{HI}_{dist}$) and stabilize them via cross-task pooling; (3)~select a team of size $k$ by optimizing a composite objective that balances quality and complementarity using multi-start greedy search; and (4)~aggregate the selected team's predictions to produce final outputs, evaluated on held-out test data.}
    \label{fig:cmas_pipeline}
\end{figure}

\subsection{Profiling Signals}
\label{sec:profiling_signals}

From $D_t^{\mathrm{dev}}$, we estimate per-model quality $\{q_i^t\}$ and two heterogeneity
index (HI) metrics: error decorrelation $\mathrm{HI}_{err}$ from an association matrix $R^t$,
and distributional divergence $\mathrm{HI}_{dist}$ from a choice-divergence matrix $J^t$.
These signals serve as inputs to the subsequent team selection procedure.

\paragraph{Quality.}
We measure the individual capability of model $i$ on task $t$ by its development-set accuracy,
denoted as $q_i^t$, computed as:
\begin{equation}
    q_i^t = \frac{1}{|D_t^{\mathrm{dev}}|}\sum_{x\in D_t^{\mathrm{dev}}} C_i(x).
    \label{eq:quality}
\end{equation}

\paragraph{Error Decorrelation ($\mathrm{HI}_{err}$).}
To capture complementarity at the error level, we summarize whether two models tend to succeed and
fail on the same instances. For a pair $(i,j)$, we form a $2\times 2$ contingency table on
$D_t^{\mathrm{dev}}$:
\begin{equation}
    N_{ab}^t(i,j)=\sum_{x\in D_t^{\mathrm{dev}}}\mathbb{I}\!\left[C_i(x)=a,\;C_j(x)=b\right],
    \label{eq:contingency}
\end{equation}
where $a,b\in\{0,1\}$. 
Let $D_{ij}^t = N_{11}^t N_{00}^t + N_{10}^t N_{01}^t$, we then compute Yule's $Q$ association statistic and define the pairwise error-correlation matrix
$R^t\in\mathbb{R}^{m\times m}$ as:
\begin{equation}
    R_{ij}^t =
    \begin{cases}
        \frac{N_{11}^t N_{00}^t - N_{10}^t N_{01}^t}{D_{ij}^t}, & D_{ij}^t > 0,\\
        0, & D_{ij}^t = 0.
    \end{cases}
    \label{eq:yule_q}
\end{equation}
where we write $N_{ab}^t \equiv N_{ab}^t(i,j)$ for readability.
Given a team $S$, we aggregate pairwise correlations into an error-decorrelation score:
\begin{equation}
    \mathrm{HI}_{err}(S;R^t)
    = 1 - \frac{1}{\binom{|S|}{2}}
    \sum_{\{i,j\}\subseteq S} R_{ij}^t .
    \label{eq:hi_err}
\end{equation}
Higher $\mathrm{HI}_{err}$ indicates that team members make mistakes on different instances.

\paragraph{Distributional Divergence ($\mathrm{HI}_{dist}$).}
Correctness correlations omit how models distribute probability over answer choices.
We therefore measure pairwise disagreement
directly in the choice distribution space using the Jensen--Shannon divergence (JSD) \citep{lin1991divergence,endres2003new}, computed with base-2 logarithms. For a pair
$(i,j)$, we average per-instance JSD on $D_t^{\mathrm{dev}}$ to obtain a matrix
$J^t\in\mathbb{R}^{m\times m}$:
\begin{equation}
    J_{ij}^t = \frac{1}{|D_t^{\mathrm{dev}}|}\sum_{x\in D_t^{\mathrm{dev}}}
    \mathrm{JSD}\!\left(p_i^t(\cdot\mid x),\, p_j^t(\cdot\mid x)\right).
    \label{eq:jsd_matrix}
\end{equation}
Given a team $S$, we aggregate pairwise divergences into a distributional-divergence score:
\begin{equation}
    \mathrm{HI}_{dist}(S;J^t) = \frac{1}{\binom{|S|}{2}}
    \sum_{\{i,j\}\subseteq S} J_{ij}^t .
    \label{eq:hi_dist}
\end{equation}
A larger $\mathrm{HI}_{dist}$ indicates greater disagreement in the models' choice distributions.

\paragraph{Cross-Task Pooling for Stable HI.}
Pairwise heterogeneity estimates can be noisy when $|D_t^{\mathrm{dev}}|$ is limited, which may
make team selection overly sensitive to idiosyncrasies of a single task's development split. When a
collection of tasks $\mathcal{T}$ is available, we pool pairwise heterogeneity across tasks to obtain a shared selection signal.
For each pairwise matrix, we compute the pooled estimates by averaging over tasks:
\begin{equation}
    \bar{R} = \frac{1}{|\mathcal{T}|}\sum_{u\in\mathcal{T}} R^{u},
    \qquad
    \bar{J} = \frac{1}{|\mathcal{T}|}\sum_{u\in\mathcal{T}} J^{u}.
    \label{eq:all_pooling}
\end{equation}
In our experiments, we use $\bar{R}$ and $\bar{J}$ as the heterogeneity signals for team selection.

\subsection{Heterogeneity-Aware Team Selection}
\label{sec:selection}

Given the development-set profiling results, our goal is to select a team
$S_t\subseteq\mathcal{M}$ of size $k$ that balances individual capability and inter-model
complementarity. We combine task-specific quality estimates $\{q_i^t\}$ with the cross-task
pooled heterogeneity matrices $\bar{R}$ and $\bar{J}$, which summarize correctness dependence
and differences in choice distributions, respectively.

We formulate team selection as maximizing a composite score comprising average member quality,
error decorrelation $\mathrm{HI}_{err}(S;\bar{R})$, and distributional divergence
$\mathrm{HI}_{dist}(S;\bar{J})$. Since these terms may have different scales across tasks and
candidate pools, we standardize each term with a standard-error-scaled $z$-score before
combining them.

Let $u(S)$ denote a team-level quantity computed as an average (e.g., the average quality,
$\mathrm{HI}_{err}(S;\bar{R})$, or $\mathrm{HI}_{dist}(S;\bar{J})$), and let $n_u(S)$ be the number of
items being averaged (e.g., $|S|$ for member-level quantities and $\binom{|S|}{2}$ for pairwise
quantities). We define:
\begin{equation}
    z(u(S)) = \frac{u(S)-\mu_u}{\sigma_u/\sqrt{n_u(S)}},
    \label{eq:z_score}
\end{equation}
where $(\mu_u,\sigma_u)$ are computed over the corresponding base items in the candidate pool: the
set $\{q_i^t\}$ for quality, the set $\{1-\bar{R}_{ij}\}$ for error decorrelation, and the set
$\{\bar{J}_{ij}\}$ for distributional divergence.

The \cmas{} selection score for task $t$ can then be formulated as:
\begin{equation}
    \mathrm{score}_t(S)
    = z\!\left(\frac{1}{|S|}\sum_{i\in S} q_i^t\right)
    + \lambda_1\, z\!\left(\mathrm{HI}_{err}(S;\bar{R})\right)
    + \lambda_2\, z\!\left(\mathrm{HI}_{dist}(S;\bar{J})\right),
    \label{eq:objective}
\end{equation}
\noindent\begin{minipage}[t]{\textwidth}
\setlength{\intextsep}{6pt}
\setlength{\parskip}{6pt}
\begin{wrapfloat}{algorithm}{r}{0.55\textwidth}
    \small
    \captionsetup{justification=raggedright,singlelinecheck=false}
    \caption{Multi-start forward greedy selection for~task~$t$}
    \label{alg:greedy}
    \begin{algorithmic}[1]
        \STATE {\bfseries Input:} Candidates $\mathcal{M}$, team size $k$, qualities $\{q_i^t\}$,
        pooled matrices $\bar{R},\bar{J}$, weights $(\lambda_1,\lambda_2)$
        \STATE {\bfseries Output:} Selected team $S_t^*$
        \STATE $S_t^* \leftarrow \emptyset$, $\mathrm{best} \leftarrow -\infty$
        \FOR{each seed $s\in\mathcal{M}$}
            \STATE $S\leftarrow\{s\}$
            \WHILE{$|S|<k$}
                \STATE $j^* \leftarrow \argmax_{j\in\mathcal{M}\setminus S}\; \mathrm{score}_t(S\cup\{j\})$
                \STATE $S \leftarrow S\cup\{j^*\}$
            \ENDWHILE
            \IF{$\mathrm{score}_t(S)>\mathrm{best}$}
                \STATE $\mathrm{best}\leftarrow \mathrm{score}_t(S)$, $S_t^*\leftarrow S$
            \ENDIF
        \ENDFOR
        \STATE \textbf{return} $S_t^*$
    \end{algorithmic}
\end{wrapfloat}

where $\lambda_1,\lambda_2\ge 0$ are trade-off coefficients for
$\mathrm{HI}_{err}$ and $\mathrm{HI}_{dist}$.
The selection objective is $\max_{|S|=k}\mathrm{score}_t(S)$.

To efficiently solve this combinatorial optimization problem, we adopt a multi-start forward greedy procedure: starting from a seed model, we iteratively add the candidate that yields the largest score improvement until reaching size $k$, and we repeat this process for each seed in $\mathcal{M}$, returning the best team found.
The full procedure is summarized in \mbox{Algorithm~\ref{alg:greedy}}.

\subsection{Aggregation}
\label{sec:aggregation}
\addtocounter{WF@wrappedlines}{-3}
\vspace{6pt}

Our method is agnostic to the downstream aggregator, as team selection relies solely on capability
and complementarity statistics obtained through offline profiling. To examine whether the gains
from team selection persist across aggregation mechanisms, we evaluate the same selected teams
using \textsc{Choice-Soft}, \textsc{PoE}, \textsc{DS}, and \textsc{Stacking}. \textsc{Choice-Soft} and
\textsc{PoE} use fixed rules based on the mean and normalized product of member probabilities,
respectively, whereas \textsc{DS} and \textsc{Stacking} learn aggregation parameters from development
data to account for members' predictive characteristics and reliability differences. Full definitions
and training details are provided in \Cref{app:aggregation_rules}.

\finishwrap
\end{minipage}

\subsection{Theoretical Insights}
\label{sec:theoretical_insights}

We analyze how correctness dependence and choice-distribution divergence jointly affect team
accuracy, and when their benefits compensate for a reduction in individual model quality.

\paragraph{A choice-probability model.}
Fix a task distribution and three-member teams on four choices. With the true answer listed first,
a correct member outputs $p^{\rm c}=(a,u,u,u)$; an incorrect member outputs
$p^w=(a,b,c,c)$ with the last three coordinates permuted to place $b$ on a wrong
answer $w$. Assume
\begin{equation}
    0<c<a<b,\quad a+b+2c=1,\quad
    u=(b+2c)/3,\quad a>(2b+u)/3.
    \label{eq:joint_model}
\end{equation}
Members' correctness and preferred wrong answers may be arbitrarily dependent.
For a team $S$, write $\bar q_S=\frac13\sum_{i\in S}\Pr(C_i=1)$,
$e_S=1-\bar q_S$, and let $D_S$ be the mean pairwise probability of both members
being incorrect. Let $K_S$ be the mean probability of both being incorrect
\emph{and preferring the same wrong answer}, and let $J_S$ be their mean JSD on
this task. Define $d=\JSD(p^w,p^{w'})$ for $w\ne w'$, $h=\JSD(p^{\rm c},p^w)$,
and $\theta=2h/d$.

\begin{proposition}[Shared errors and wrong-answer concentration]
\label{prop:joint_failure}
Under \eqref{eq:joint_model}, $0<\theta<1$ and
\begin{equation}
    K_S=\theta e_S+(1-\theta)D_S-\frac{J_S}{d}.
    \label{eq:joint_collision}
\end{equation}
For either \textnormal{\textsc{Choice-Soft}} or \textnormal{\textsc{PoE}},
$\Pr(\hat y_S\ne Y)\le K_S$. Both aggregators fail exactly when all three
members prefer the same wrong answer.
\end{proposition}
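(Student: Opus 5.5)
The plan is to prove the three claims in the order the aggregator argument needs them: first the identity \eqref{eq:joint_collision}, by reducing every pairwise JSD to one of three constants; then the aggregator characterization, by direct comparison of scores; and finally the bound $\Pr(\hat y_S\ne Y)\le K_S$.

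\textbf{Step 1: pairwise JSD on an instance.} Fix a pair $i\ne j$ and condition on an instance. Under the model, each member's output is $p^{\rm c}$ or some $p^w$, so the per-instance JSD takes exactly three values:
\begin{itemize}
\item $0$ if both members are correct, or both are wrong with the same preferred answer;
\item $h$ if exactly one member is wrong (JSD is invariant under a common permutation of coordinates, so $h$ does not depend on $w$);
\item $d$ if both are wrong with different preferred answers.
\end{itemize}

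\textbf{Step 2: the identity \eqref{eq:joint_collision}.} Write $e_i=\Pr(C_i=0)$, $D_{ij}=\Pr(C_i=C_j=0)$, and $K_{ij}$ for the probability that both are wrong on the same answer. Taking expectations in Step 1 gives
\[
\E[\JSD_{ij}]=h\,(e_i+e_j-2D_{ij})+d\,(D_{ij}-K_{ij}).
\]
Averaging over the three pairs of $S$, the mean of $(e_i+e_j)/2$ over pairs equals $e_S$. Hence
\[
J_S=2h(e_S-D_S)+d(D_S-K_S).
\]
Dividing by $d$ and rearranging gives \eqref{eq:joint_collision} with $\theta=2h/d$. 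No assumption on the dependence between members is used here, which matches the statement.

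\textbf{Step 3: $0<\theta<1$.} Positivity is immediate. We have $u=(b+2c)/3<b$ because $c<b$, so $p^{\rm c}\ne p^w$ and $h>0$. Likewise $p^w\ne p^{w'}$ for $w\ne w'$ because $b\ne c$, so $d>0$.

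The real content is $2h<d$, and I expect this to be the main obstacle. I will use the representation $\JSD(P,Q)=H\bigl(\tfrac{P+Q}{2}\bigr)-\tfrac12\bigl(H(P)+H(Q)\bigr)$.

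For $d$, the mixture of $p^w$ and $p^{w'}$ is $(a,\tfrac{b+c}2,\tfrac{b+c}2,c)$, and both endpoints have entropy $H(p^w)$.

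For $h$, the mixture of $p^{\rm c}$ and $p^w$ is $(a,\tfrac{u+b}2,\tfrac{u+c}2,\tfrac{u+c}2)$.

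I would try to show $2h<d$ in the following way:
\begin{itemize}
\item Move from $p^w$ toward $p^{\rm c}$ along the segment that averages the three wrong coordinates. Since $u$ is exactly the mean of $(b,c,c)$, $p^{\rm c}$ is the symmetrization of $p^w$ over the wrong coordinates.
\item Apply strict convexity of JSD along this line to bound $h$ by a fraction of $d$.
\item If that does not suffice, fall back on a direct two-variable inequality in $(b,c)$, with $a=1-b-2c$, over the constrained region.
\end{itemize}

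\textbf{Step 4: aggregators.} Every member, correct or not, assigns mass $a$ to the true answer. Under \textsc{Choice-Soft}, the true answer therefore scores $3a$. A wrong option $y$ receives $u$ from each correct member, $b$ from each wrong member preferring $y$, and $c$ from each other wrong member.

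\emph{If all three members prefer the same wrong answer $w$,} the soft average is $p^w$ and the PoE distribution is $\propto(p^w)^3$. In both cases the argmax is $w$, because $b>a$, so the team fails.

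\emph{Otherwise,} at most two members put $b$ on any single wrong option $y$. The largest possible score for $y$ is $2b+u$, since $u>c$, and the assumption $a>(2b+u)/3$ makes the true answer win strictly.

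For \textsc{PoE} the same case analysis applies to products. The worst wrong option has product $b^2u$, and by AM--GM
\[
b^2u\le\Bigl(\tfrac{2b+u}{3}\Bigr)^3<a^3.
\]
All other configurations have smaller products, since $c<u$. Because every comparison is strict, there are no ties, and both aggregators fail exactly on the all-same-wrong event.

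\textbf{Step 5: the bound.} The all-same-wrong event is contained in the event that any fixed pair $\{i,j\}$ is wrong on the same answer. So its probability is at most each $K_{ij}$, and hence at most their mean $K_S$.
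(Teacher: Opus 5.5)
Your Steps 1, 2, 4 and 5 are correct and match the paper's proof essentially line by line. This covers the three-valued pairwise JSD, the averaged identity $J_S=2h(e_S-D_S)+d(D_S-K_S)$, the $2b+u$ and $b^2u$ bounds with AM--GM, and the containment of the all-same-wrong event in each pair's same-wrong event.

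The gap is Step 3. The inequality $2h<d$, which is the whole content of $\theta<1$, is left unproved, and the convexity route you lead with cannot prove it. Because $p^{\rm c}=\frac{1}{3}(p^{w_1}+p^{w_2}+p^{w_3})$ and $\JSD(p^{w_k},p^{w_1})\in\{0,d\}$, convexity yields only $h\le\frac{1}{3}(0+d+d)=\frac{2}{3}d$, i.e.\ $\theta\le\frac{4}{3}$. Strictness does not improve the constant. Since $p^{w_1}$ is a vertex of the triangle spanned by the wrong templates, no convex decomposition of the pair $(p^{\rm c},p^{w_1})$ into template pairs does better.

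You need $h<\frac{1}{2}d$, and the margin is thin. Neither $d$ nor $h$ involves $a$, and both are homogeneous of degree one in $(b,c)$, so $2h/d$ depends only on $t=b/c$. It tends to $\frac{2}{3}$ as $t\to1$ but to $\log_2 3-\frac{2}{3}\approx0.918$ as $t\to\infty$. Restricting to the admissible region in your fallback does not help either: every ratio $t>1$ is realized by some $(a,b,c)$ satisfying \eqref{eq:joint_model}.

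What is required is your fallback carried out as a sharp one-variable inequality, which is what the paper does. With $\varphi(z)=z\ln z$, the $\ln c$ terms cancel and $d-2h=\frac{c}{\ln2}F(t)$, where
\[
F(t)=2\varphi\!\left(\tfrac{2t+1}{3}\right)+4\varphi\!\left(\tfrac{t+5}{6}\right)-3\varphi\!\left(\tfrac{t+2}{3}\right)-2\varphi\!\left(\tfrac{t+1}{2}\right).
\]
One checks three facts. First, $F(1)=F'(1)=0$. Second, $F''(t)$ has the sign of $-2t^2+7t+13$, so it is positive on $(1,t_*)$ and negative beyond $t_*=(7+\sqrt{153})/4$. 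Third, $\lim_{t\to\infty}F'(t)=\frac{5}{3}\ln2-\ln3>0$, which is equivalent to $32>27$. Thus $F'$ rises from $0$ and then decreases to a positive limit, so $F'>0$ and hence $F>0$ on $(1,\infty)$.

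Without an argument of this precision, your proposal proves \eqref{eq:joint_collision} (which does not itself need $\theta<1$), the failure characterization, and $\Pr(\hat y_S\ne Y)\le K_S$. It does not prove $0<\theta<1$.
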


\begin{remark}[Complementarity of the two signals]
\label{rem:correctness_information}
At fixed marginal accuracies, Yule's $Q$ increases strictly with pairwise
double-fault probability (\Cref{prop:yule_q_monotonicity}). Correctness dependence
therefore characterizes shared errors, while choice-space JSD supplies information
about their concentration on particular answers. Equation~\eqref{eq:joint_collision}
connects both signals to a source of collective failure beyond individual quality.
\end{remark}

\paragraph{Accuracy gains over quality-only selection.}
Let $S_Q$ be the size-three Quality-Only team, and abbreviate its quantities
by a subscript $Q$. Define the baseline correction
$\varepsilon_Q=\min\{K_Q,\,e_Q-(D_Q+K_Q)/2\}\ge0$.

\begin{proposition}[A sufficient condition for improvement]
\label{prop:quality_only_gain}
Under \eqref{eq:joint_model}, the team $S$ returned by Algorithm~\ref{alg:greedy},
evaluated with the same \textnormal{\textsc{Choice-Soft}} or \textnormal{\textsc{PoE}}
aggregator as $S_Q$, satisfies
\begin{equation}
    \operatorname{Acc}(S)-\operatorname{Acc}(S_Q)
    \ge (1-\theta)(D_Q-D_S)+\frac{J_S-J_Q}{d}
    -\theta(\bar q_Q-\bar q_S)-\varepsilon_Q.
    \label{eq:quality_only_gain}
\end{equation}
A positive right-hand side guarantees strictly higher team accuracy.
\end{proposition}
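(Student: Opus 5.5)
The plan is to write both accuracies through the failure event described in \Cref{prop:joint_failure}, and to bound the two failure probabilities in opposite directions. For any team $T$, let $A_T$ denote the event that all three members of $T$ prefer the same wrong answer. By \Cref{prop:joint_failure}, under \textsc{Choice-Soft} or \textsc{PoE} we have $\operatorname{Acc}(T)=1-\Pr(A_T)$. Hence
\[
\operatorname{Acc}(S)-\operatorname{Acc}(S_Q)=\Pr(A_Q)-\Pr(A_S).
\]
For the selected team I will use the upper bound $\Pr(A_S)\le K_S$ from \Cref{prop:joint_failure}. This step does not use the fact that $S$ comes from Algorithm~\ref{alg:greedy}; the inequality will hold for any size-three team $S$. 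The remaining work is a matching lower bound for the Quality-Only team,
\[
\Pr(A_Q)\ge K_Q-\varepsilon_Q .
\]

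For this lower bound I will argue pointwise over outcomes. Fix one realization of the three members. Let $W\in\{0,1,2,3\}$ be the number of incorrect members, and let $X$ be the number of member pairs that are both incorrect and prefer the same wrong answer. Averaging over the three pairs gives
\[
e_Q=\E[W]/3,\qquad D_Q=\E\bigl[\tbinom{W}{2}\bigr]/3,\qquad K_Q=\E[X]/3 .
\]
The structural point is that $X$ can only take the values $0$, $1$ or $3$, and $X=3$ holds exactly on $A_Q$. I will then verify the pointwise inequality
\[
\ind_{A_Q}\ \ge\ \tfrac12 X+\tfrac16\tbinom{W}{2}-\tfrac13 W
\]
by checking cases:
\begin{itemize}
\item $W\le1$: here $X=0$, so the right-hand side is at most $0$.
\item $W=2$: here $X\le1$, so the right-hand side is $X/2-1/2\le0$.
\item $W=3$: the right-hand side is $X/2-1/2$, which equals $1$ if $X=3$ and is at most $0$ if $X\le1$.
\end{itemize}
Taking expectations gives
\[
\Pr(A_Q)\ge \tfrac32K_Q+\tfrac12D_Q-e_Q=K_Q-\bigl(e_Q-(D_Q+K_Q)/2\bigr).
\]
Combining this with the trivial bound $\Pr(A_Q)\ge0=K_Q-K_Q$ yields $\Pr(A_Q)\ge K_Q-\varepsilon_Q$. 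The same computation shows $\varepsilon_Q\ge0$: the bound is attained at $X=3$ and the right-hand side never exceeds $1$, so $e_Q-(D_Q+K_Q)/2\ge0$.

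Finally, I will substitute the identity \eqref{eq:joint_collision} for both $S$ and $S_Q$. This gives
\[
K_Q-K_S=\theta(e_Q-e_S)+(1-\theta)(D_Q-D_S)+\frac{J_S-J_Q}{d}.
\]
Using $e_Q-e_S=-(\bar q_Q-\bar q_S)$, we get
\[
\operatorname{Acc}(S)-\operatorname{Acc}(S_Q)\ge K_Q-\varepsilon_Q-K_S,
\]
and the right-hand side equals that of \eqref{eq:quality_only_gain}. Strict improvement follows whenever this quantity is positive.

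I expect the main obstacle to be the lower bound on $\Pr(A_Q)$. The difficulty is finding the right linear combination of $X$, $\binom{W}{2}$ and $W$ so that the coefficients reproduce exactly the form of $\varepsilon_Q$. The structure $X\in\{0,1,3\}$ is what makes the pointwise inequality tight at $W=3$. A second point to check is that the ``exactly when'' clause of \Cref{prop:joint_failure} is used as an equality $\operatorname{Acc}=1-\Pr(A)$, including how ties are broken. That equality is needed for both teams, and for $S_Q$ the inequality has to run in the direction opposite to the bound $\Pr(A_S)\le K_S$.
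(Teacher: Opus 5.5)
Your proof is correct and follows the paper's route: it combines $\mathcal R_S\le K_S$ from \Cref{prop:joint_failure} with the lower bound $\mathcal R_Q\ge K_Q-\varepsilon_Q$ (\Cref{lem:joint_risk_bounds}) and then substitutes \eqref{eq:joint_collision}, and your pointwise inequality in $(W,X)$ is a compressed form of the paper's seven-pattern identity $e-\frac{D+K}{2}-(K-\mathcal R)=\frac{p_1}{3}+\frac{p_{2d}+p_{111}}{2}$. The only loose spot is your side justification of $\varepsilon_Q\ge0$, which is cleanest via the pointwise bound $\ind_{A_Q}\le X/3$; the displayed inequality does not need it.
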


\begin{remark}[Quality--heterogeneity trade-off]
\label{rem:quality_complementarity}
Equation~\eqref{eq:quality_only_gain} shows that a team with lower mean member accuracy
can still outperform Quality-Only. A sufficient condition is that the weighted gains
from reduced shared errors and increased choice-distribution divergence exceed the
quality penalty and the baseline correction $\varepsilon_Q$. For a baseline whose
members make identical predictions, $\varepsilon_Q=0$, so these gains need only outweigh
the quality penalty. \Cref{app:joint_failure,app:quality_gain} give complete proofs;
\Cref{app:selection_gain} connects the condition to the standardized score,
pooled profiles, and the terminal teams generated by multi-start search.
\end{remark}

\section{Experiments}
\label{sec:experiments}

We evaluate \cmas{} on various benchmarks to address the following research questions:

\noindent\textbf{RQ1 (Effectiveness and Generalization).}
Does heterogeneity-aware team selection yield consistent gains over quality-only baselines, both on the primary task set and on out-of-distribution (OOD) benchmarks?

\noindent\textbf{RQ2 (Mechanism).}
How do error decorrelation ($\mathrm{HI}_{err}$) and distributional divergence ($\mathrm{HI}_{dist}$) individually contribute to team performance, and what underlies these gains?

\noindent\textbf{RQ3 (Robustness).}
Is \cmas{} robust to the choice of heterogeneity weights $(\lambda_1,\lambda_2)$ and to limited profiling data?

\subsection{Setup}
\label{sec:exp_setup}

\paragraph{Tasks and Benchmarks.}
We evaluate \cmas{} on 7 primary datasets and test its OOD generalization on 6 held-out datasets. The 13 public multiple-choice benchmarks span four domains: (1) \emph{math reasoning}, AQUA-RAT~\citep{ling2017program} and MathQA~\citep{amini2019mathqa}; (2) \emph{natural science}, ARC-Challenge~\citep{clark2018think} and OpenBookQA~\citep{mihaylov2018can}; (3) \emph{reading comprehension and logical reasoning}, RACE~\citep{lai2017race}, ReClor~\citep{yu2020reclor}, and LogiQA2~\citep{liu2023logiqa}; and (4) \emph{general or domain knowledge}, MMLU~\citep{hendrycks2021measuring}, MMLU-Pro~\citep{wang2024mmlu}, C-EVAL~\citep{huang2023c}, CommonsenseQA~\citep{talmor2019commonsenseqa}, MedQA~\citep{jin2021medqa}, and StrategyQA~\citep{geva2021did}. For efficiency, we randomly subsample both the dev and test splits to 1,068 examples each.

\paragraph{Models.}
We use $m=11$ open-weight chat models in the 7--9B parameter class for the candidate pool. The pool includes
Llama-3.1-8B-Instruct~\citep{grattafiori2024llama}, Qwen3-8B~\citep{yang2025qwen3}, Gemma-2-9B-IT~\citep{team2024gemma}, Ministral-3-8B-Instruct~\citep{liu2026ministral}, Granite-3.3-8B-Instruct~\citep{granite2024granite},
GLM-4-9B-Chat~\citep{glm2024chatglm}, Nemotron-Nano-9B-v2~\citep{basant2025nvidia}, EuroLLM-9B-Instruct~\citep{martins2025eurollm}, OLMo-3-7B-Instruct~\citep{olmo2025olmo}, Apertus-8B-Instruct~\citep{apertus2025apertus},
and InternLM3-8B-Instruct~\citep{cai2024internlm2}.

\paragraph{Baselines.}
We compare:
(1) \emph{Self-Consistency} \citep{wang2023self}: the best single model on the dev set sampled $k=3$ times with stochastic decoding and aggregated by majority vote;
(2) \emph{Random-k}: uniformly sample $k=3$ models without replacement and report mean accuracy over 100 samples;
(3) \emph{Caruana} \citep{caruana2004ensemble}: forward ensemble selection on $D_t^{\mathrm{dev}}$ with replacement (allow repeats to express non-uniform member frequencies);
(4) \emph{Quality-Only}: select $k=3$ by maximizing mean development accuracy (i.e., $\lambda_1=\lambda_2=0$);
and (5) \emph{\cmas{}}: our heterogeneity-aware selection with $k=3$.

\paragraph{Implementation details.}
We select teams by maximizing the standardized objective in \Cref{eq:objective} with weights $(\lambda_1,\lambda_2)=(0.13, 0.05)$. For aggregation, DS uses additive smoothing with $\varepsilon=10^{-3}$ when estimating the class prior and confusion matrices; Stacking trains a linear combiner on $D_t^{\mathrm{dev}}$ and evaluates on $D_t^{\mathrm{test}}$ ($L_2$ coefficient $\beta=0.003$); the other aggregators are parameter-free. For statistical testing, we use paired bootstrap with 2,000 resamples to compute confidence intervals (CI) for accuracy differences on the test set (paired by instance).

\begin{table}[t]
    \centering
    \caption{Test accuracy (\%) on the 7 primary benchmarks; teams of size $k=3$
are selected from $m=11$ candidates. $\uparrow$: gain over Random-$k$ (pp);
\textbf{bold}: best within each aggregation block. Held-out and full results:
\Cref{tab:ood_results,tab:complete_results}.
ARC-C: ARC-Challenge; CSQA: CommonsenseQA; OBQA: OpenBookQA.}
    \label{tab:main_results}
    \begingroup
    \footnotesize
    \setlength{\tabcolsep}{0.8pt}
    \renewcommand{\arraystretch}{1.08}
    \begin{tabular*}{\linewidth}{@{\extracolsep{\fill}}cl*{8}{l}@{}}
        \toprule[1.25pt]
        \multicolumn{2}{c}{\multirow[c]{2}{*}{\textbf{Method}}}
        & \multicolumn{7}{c}{\textbf{Dataset}}
        & \multirow[c]{2}{*}{\textbf{Avg.}} \\
        \multicolumn{2}{c}{} 
        & \textbf{ARC-C} & \textbf{CSQA} & \textbf{LogiQA2} & \textbf{MedQA} & \textbf{MMLU} & \textbf{MMLU-Pro} & \textbf{OBQA} & \\

        \midrule[1.1pt]
        \rowcolor[rgb]{0.93,0.93,0.93}
        \multicolumn{10}{c}{\textbf{Closed-Source Models (for reference)}} \\
        \multicolumn{2}{l}{Gemini-2.5-Flash} & $92.70$ & $81.65$ & $60.96$ & $77.43$ & $81.37$ & $58.61$ & $92.70$ & $77.92$ \\
        \multicolumn{2}{l}{GPT-4o} & $94.10$ & $83.33$ & $64.98$ & $84.74$ & $84.64$ & $50.19$ & $91.85$ & $79.12$ \\
        \midrule
        \rowcolor[rgb]{0.93,0.93,0.93}
        \multicolumn{10}{c}{\textbf{Single-Model Aggregation}} \\
        \multicolumn{2}{l}{Self-Consistency} & $89.14$ & $78.65$ & $67.32$ & $61.70$ & $73.41$ & $44.76$ & $88.58$ & $71.94$ \\
        \midrule
        \rowcolor[rgb]{0.93,0.93,0.93}
        \multicolumn{10}{c}{\textbf{Team Selection Baselines}} \\
	        \multirow[c]{4}{*}{\rotatebox[origin=c]{90}{\textit{Choice-Soft}}} & Random-$k$ & $87.61$ & $78.87$ & $62.49$ & $60.50$ & $72.08$ & $43.15$ & $85.08$ & $69.97$ \\
	        & Caruana & $89.98$ & $81.65$ & $70.51$ & $63.11$ & $73.69$ & $48.22$ & $\mathbf{90.54}$ & $73.96$ \\
	        & Quality-Only & $\mathbf{91.48}$ & $81.46$ & $\mathbf{70.88}$ & $\mathbf{66.39}$ & $\mathbf{76.59}$ & $47.00$ & $89.70$ & $74.79$ \\
	        & \textbf{\cmas{} (Ours)} & \scoregain{$\mathbf{91.48}$}{3.87} & \scoregain{$\mathbf{83.90}$}{5.02} & \scoregain{$\mathbf{70.88}$}{8.39} & \scoregain{$\mathbf{66.39}$}{5.89} & \scoregain{$\mathbf{76.59}$}{4.51} & \scoregain{$\mathbf{48.50}$}{5.35} & \scoregain{$90.45$}{5.37} & \scoregain{$\mathbf{75.45}$}{5.48} \\
	        \midrule
	        \multirow[c]{4}{*}{\rotatebox[origin=c]{90}{\textit{PoE}}} & Random-$k$ & $87.58$ & $79.20$ & $63.04$ & $60.81$ & $72.30$ & $43.41$ & $84.79$ & $70.16$ \\
	        & Caruana & $90.54$ & $83.05$ & $\mathbf{71.44}$ & $65.07$ & $74.06$ & $\mathbf{49.34}$ & $\mathbf{90.82}$ & $74.91$ \\
	        & Quality-Only & $\mathbf{91.57}$ & $82.68$ & $70.69$ & $\mathbf{65.36}$ & $\mathbf{76.69}$ & $48.69$ & $89.33$ & $75.00$ \\
	        & \textbf{\cmas{} (Ours)} & \scoregain{$\mathbf{91.57}$}{3.99} & \scoregain{$\mathbf{83.90}$}{4.70} & \scoregain{$70.69$}{7.65} & \scoregain{$\mathbf{65.36}$}{4.55} & \scoregain{$\mathbf{76.69}$}{4.39} & \scoregain{$48.97$}{5.56} & \scoregain{$89.51$}{4.72} & \scoregain{$\mathbf{75.24}$}{5.08} \\
	        \midrule
	        \multirow[c]{4}{*}{\rotatebox[origin=c]{90}{\textit{DS}}} & Random-$k$ & $87.56$ & $78.00$ & $62.90$ & $60.87$ & $71.52$ & $42.46$ & $84.96$ & $69.75$ \\
	        & Caruana & $86.24$ & $79.49$ & $68.82$ & $63.95$ & $72.94$ & $\mathbf{47.47}$ & $89.61$ & $72.65$ \\
	        & Quality-Only & $\mathbf{90.92}$ & $81.18$ & $\mathbf{70.32}$ & $\mathbf{65.17}$ & $\mathbf{75.47}$ & $47.28$ & $89.51$ & $74.26$ \\
	        & \textbf{\cmas{} (Ours)} & \scoregain{$\mathbf{90.92}$}{3.36} & \scoregain{$\mathbf{82.87}$}{4.87} & \scoregain{$\mathbf{70.32}$}{7.42} & \scoregain{$\mathbf{65.17}$}{4.30} & \scoregain{$\mathbf{75.47}$}{3.95} & \scoregain{$47.10$}{4.64} & \scoregain{$\mathbf{90.45}$}{5.49} & \scoregain{$\mathbf{74.61}$}{4.86} \\
	        \midrule
	        \multirow[c]{4}{*}{\rotatebox[origin=c]{90}{\textit{Stacking}}} & Random-$k$ & $89.30$ & $80.20$ & $65.54$ & $62.82$ & $73.55$ & $44.82$ & $87.35$ & $71.94$ \\
	        & Caruana & $90.54$ & $82.77$ & $\mathbf{70.60}$ & $64.42$ & $73.88$ & $48.78$ & $\mathbf{91.57}$ & $74.65$ \\
	        & Quality-Only & $\mathbf{91.67}$ & $81.74$ & $\mathbf{70.60}$ & $\mathbf{65.54}$ & $\mathbf{76.12}$ & $47.19$ & $90.17$ & $74.72$ \\
	        & \textbf{\cmas{} (Ours)} & \scoregain{$\mathbf{91.67}$}{2.37} & \scoregain{$\mathbf{84.18}$}{3.98} & \scoregain{$\mathbf{70.60}$}{5.06} & \scoregain{$\mathbf{65.54}$}{2.72} & \scoregain{$\mathbf{76.12}$}{2.57} & \scoregain{$\mathbf{48.97}$}{4.15} & \scoregain{$\mathbf{91.57}$}{4.22} & \scoregain{$\mathbf{75.52}$}{3.58} \\
	        \bottomrule[1.25pt]
	    \end{tabular*}
    \endgroup
\end{table}

\raggedbottom
\subsection{Effectiveness of \cmas{} (RQ1)}
\label{sec:main_results}

We report primary results on 7 datasets in Table~\ref{tab:main_results} and OOD generalization on the remaining 6 in \Cref{tab:ood_results}.

\paragraph{Primary Results.}
Under strict development-test separation, \cmas{} delivers small but consistent improvements over the Quality-Only baseline. As shown in Table~\ref{tab:main_results}, \cmas{} reaches 75.52\% with the \textsc{Stacking} aggregator, outperforming Quality-Only (74.72\%) by $+0.80$\,pp (95\% CI: [0.44, 1.16]), while also surpassing both Self-Consistency and Random-$k$ by $+3.58$\,pp.

\paragraph{OOD Generalization.}
To assess robustness under distribution shifts, we evaluate \cmas{} on 6 held-out benchmarks. \cmas{} achieves 66.74\% average accuracy, surpassing Quality-Only (66.43\%, +0.31 pp) and significantly outperforming Self-Consistency (64.45\%, $+2.29$\,pp) and Random-$k$ (61.12\%, $+5.62$\,pp), confirming that heterogeneity signals capture intrinsic complementarity that transfers beyond the primary task set.

\paragraph{Non-regression and Consistency.}
Beyond average gains, safety is critical for practical deployment. Under \textsc{Stacking}, the wins come from the three benchmarks where \cmas{} selects a different team from Quality-Only (CommonsenseQA, MMLU-Pro, and OpenBookQA), with a mean gain of $+1.87$\,pp. On the remaining four benchmarks the teams are identical and predictions match Quality-Only by construction (\Cref{app:team_composition}). In each differing case, \cmas{} replaces a redundant high-accuracy member with a more complementary one rather than trading quality for diversity. \cmas{} thus acts as a \emph{safe plugin}: it preserves the strongest baseline's performance and adds gains where complementarity helps. The pattern holds across all four aggregators, with the largest margin under Stacking where the learned combiner benefits most from heterogeneous inputs.

\subsection{Mechanism Analysis: Why Heterogeneity Matters? (RQ2)}
\label{sec:mechanism}

While the main results confirm the effectiveness of our approach, we further investigate the source of these gains through quantitative ablation studies (\Cref{tab:ablation}) and qualitative visualization (\Cref{fig:mds}).

\begin{figure}[!t]
    \centering
    \begin{subfigure}[t]{0.32\linewidth}
        \centering
        \includegraphics[width=\linewidth]{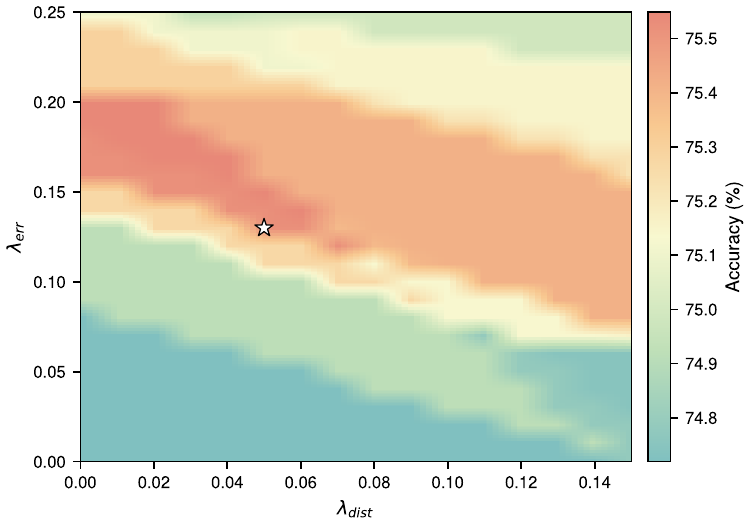}
        \caption{Sensitivity of\\heterogeneity weights.}
        \label{fig:lambda_sensitivity}
    \end{subfigure}
    \hfill
    \begin{subfigure}[t]{0.32\linewidth}
        \centering
        \includegraphics[width=\linewidth]{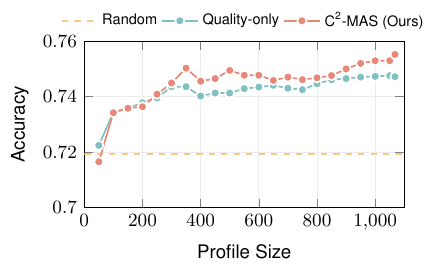}
        \caption{Accuracy vs.\ profiling size.}
        \label{fig:dev_size_performance}
    \end{subfigure}
    \hfill
    \begin{subfigure}[t]{0.32\linewidth}
        \centering
        \includegraphics[width=\linewidth]{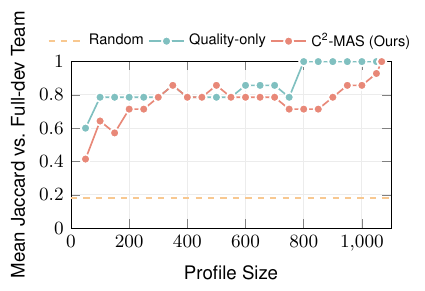}
        \caption{Selection stability (Jaccard vs.\ full-dev team).}
        \label{fig:dev_size_stability}
    \end{subfigure}
    \caption{
    Robustness and data efficiency of \cmas{}.
    (a) Test accuracy under \textsc{Stacking} across heterogeneity weights; $(0,0)$ denotes Quality-Only and $\star$ marks the configuration used in our main experiments.
    (b--c) Test accuracy and selection stability versus profiling size, with stability measured by mean Jaccard similarity to the full-dev team.
    }
    \label{fig:sensitivity}
\end{figure}

\begingroup
\setlength{\intextsep}{6pt}
\begin{wraptable}{r}{0.55\textwidth}
    \centering
    \captionsetup{justification=raggedright,singlelinecheck=false,skip=2pt}
    \begingroup
    \small
    \setlength{\tabcolsep}{1.7pt}
    \renewcommand{\arraystretch}{1.0}
      \begin{tabular}{@{}c@{\hspace{3pt}}c@{\hspace{5pt}}|ccccc@{}}
        \toprule
        $\HIerr$ & $\HIdist$ & \makecell{\textsc{Choice-}\\\textsc{Soft}} & \textsc{PoE} & \textsc{DS} & \textsc{Stacking} & \textsc{Overall} \\
        \midrule
        \xmark & \xmark & 74.79 & 75.00 & 74.26 & 74.72 & 74.69 \\
        \checkmark & \xmark & 75.24 & 75.20 & \textbf{74.64} & 75.27 & 75.09 \\
        \checkmark & \checkmark & \textbf{75.45} & \textbf{75.24} & 74.61 & \textbf{75.52} & \textbf{75.21} \\
        \bottomrule
      \end{tabular}
    \endgroup
    \caption{Ablation study of \cmas{}. The first row corresponds to the Quality-Only baseline.}
    \label{tab:ablation}
\end{wraptable}

\paragraph{Contribution of Profiling Signals.}\mbox{}\\
\mbox{Table~\ref{tab:ablation}} presents the performance impact of removing individual heterogeneity signals.
(1) \emph{Error Decorrelation}: With $\mathrm{HI}_{err}$, all four aggregators improve over Quality-Only, supporting the value of reducing shared errors. (2) \emph{Distributional Divergence}: Adding $\mathrm{HI}_{dist}$ further improves overall accuracy, indicating that choice probabilities contain complementary information. (3) \emph{Synergy}: The combined signals achieve the highest overall score, consistent with their distinct roles in profiling complementarity.

\finishwrap
\endgroup

\noindent\begin{minipage}[t]{\textwidth}
\setlength{\intextsep}{6pt}
\begin{wrapfigure}{r}{0.42\textwidth}
    \centering
    \captionsetup{justification=raggedright,singlelinecheck=false}
    \includegraphics[width=\linewidth]{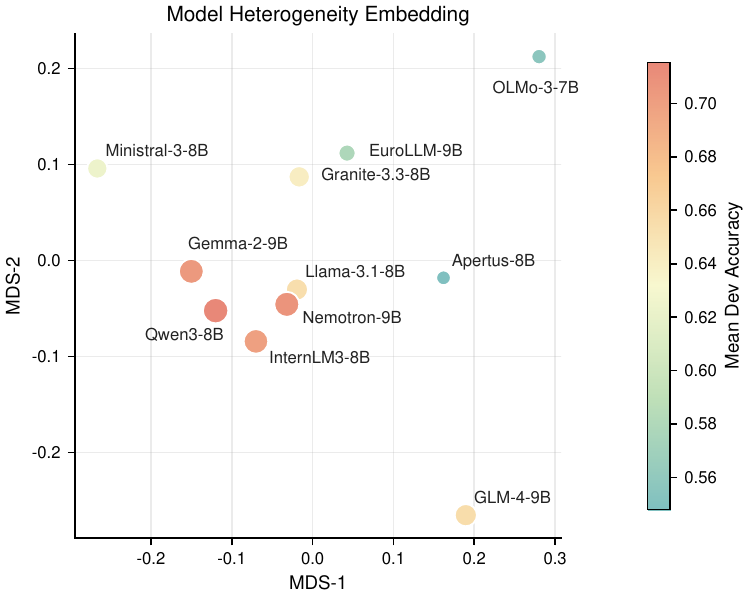}
    \caption{MDS visualization of model heterogeneity.}
    \label{fig:mds}
\end{wrapfigure}

\paragraph{Visualization of the Heterogeneity Landscape.}
To examine the limitations of Quality-Only selection, we visualize model heterogeneity using multidimensional scaling (MDS). The embedding uses pairwise Yule's $Q$ and choice-space JSD averaged across the seven primary benchmarks. Models with similar prediction patterns cluster together. As shown in Figure~\ref{fig:mds}, high-accuracy models (e.g., Qwen3, Gemma-2) tend to cluster tightly in the center. This reveals a key phenomenon: the strongest models are often the most similar, likely sharing knowledge boundaries and blind spots. Consequently, a Quality-Only strategy that simply selects the top-$k$ models often results in a highly redundant team. In contrast, distinct models with slightly lower accuracy (e.g., GLM-4) are scattered along the periphery. \cmas{} excels by identifying and recruiting these ``outlier'' members while maintaining a quality baseline. These diverse perspectives fill the blind spots of strong models, thereby breaking the performance ceiling of homogeneous teams.

\finishwrap
\end{minipage}

\Needspace{8\baselineskip}
\subsection{Robustness and Efficiency (RQ3)}
\label{sec:robustness}

For real-world deployment, sensitivity to hyperparameters and dependence on cold-start data are key concerns. We verify the robustness of \cmas{} as follows.

\paragraph{Sensitivity to Weights.}
Figure~\ref{fig:sensitivity}(a) shows broad regions of high accuracy away from the Quality-Only origin $(0,0)$. The selection score is affine in $(\lambda_1,\lambda_2)$, so the greedy output is constant within regions where its comparison signs remain fixed (\Cref{app:selection_stability}). This explains why nearby weights can select the same team. The observed high-accuracy regions indicate low sensitivity to local weight changes in the evaluated range.

\paragraph{Data Efficiency and Stability.}
Figures~\ref{fig:sensitivity}(b) and (c) examine profiling data size. \cmas{} retains its advantage with a few hundred development instances, while Jaccard similarity to the team selected using the full dev set rises rapidly and saturates early. Together, these results show that reliable team composition and competitive accuracy can be achieved with limited development data, supporting the practical efficiency of our framework.

\section{Conclusion}
\label{sec:conclusion}

We proposed \cmas{}, a heterogeneity-driven framework for selecting complementary LLM teams under budget constraints. By explicitly modeling \textbf{error decorrelation} and \textbf{distributional divergence}, our approach quantifies inter-member complementarity beyond individual capability. Extensive experiments validate the effectiveness, robustness, and efficiency of \cmas{}, demonstrating that it consistently outperforms quality-only baselines by recruiting complementary models to fill the blind spots of top-performing candidates. Future work could extend heterogeneity profiling to open-ended generation and explore how complementary teams collaborate through multi-turn debate and iterative verification.

\FloatBarrier
\label{page:main_end}

\section*{Ethics Statement}
Our experiments use only publicly available multiple-choice benchmarks under their original licenses, and we evaluate open-weight models alongside two closed-source APIs (GPT-4o, Gemini-2.5-Flash) accessed through their official endpoints in accordance with the providers' terms of use. No human subjects, crowdsourced annotation, or personally identifiable information are involved.

A risk specific to aggregation-based methods is that biases or errors present in individual models may be amplified, attenuated, or obscured when their outputs are combined. Although our heterogeneity metrics are designed to diversify error patterns, they do not directly target social bias, and practitioners deploying \cmas{} should pair team selection with bias auditing of the resulting predictions.

\section*{Reproducibility Statement}
The candidate models, benchmarks, selection weights, and aggregation settings are described in \Cref{sec:exp_setup,sec:method}. The appendix provides the choice-scoring implementation, paired-bootstrap procedure, hyperparameter selection protocol, and complete results (\Cref{app:implementation,app:complete_results}).

\Needspace{10\baselineskip}
\section*{Authors and Affiliations}
\begingroup
\small\setlength{\parindent}{0pt}\setlength{\parskip}{0pt}
\raggedright
\paperauthors\par
\vspace{9pt}
\renewcommand{\arraystretch}{1.12}
\begin{tabularx}{\linewidth}{@{}r@{\hspace{0.65em}}>{\raggedright\arraybackslash}X@{}}
\textsuperscript{1} & College of Intelligent Robotics and Advanced Manufacturing, Fudan University, Shanghai, China\\[4pt]
\textsuperscript{2} & Tongji University, Shanghai, China\\[4pt]
\textsuperscript{3} & College of Future Information Technology, Fudan University, Shanghai, China
\end{tabularx}\par
\endgroup

\bibliography{references}
\bibliographystyle{plainnat}

\clearpage
\appendix
\pdfbookmark[0]{Appendix Contents}{appendix.contents}
\section*{Appendix Contents}
\label{app:contents}

\begingroup
\makeatletter
\c@tocdepth=2\relax
\parskip=0pt
\newcommand{\appendixcontentsline}[2]{%
  \ifnum#1=1\relax
    \addvspace{0.5\baselineskip}%
    \@dottedtocline{1}{0em}{2.2em}%
      {\hyperref[#2]{\bfseries\numberline{\ref*{#2}}\nameref*{#2}}}%
      {\hyperref[#2]{\bfseries\pageref*{#2}}}%
  \else
    \@dottedtocline{2}{1.5em}{2.8em}%
      {\hyperref[#2]{\numberline{\ref*{#2}}\nameref*{#2}}}%
      {\hyperref[#2]{\pageref*{#2}}}%
  \fi
}

\appendixcontentsline{1}{app:notations}
\appendixcontentsline{1}{app:extended_related_work}
\appendixcontentsline{2}{app:rw_multi_agent}
\appendixcontentsline{2}{app:rw_multi_model}
\appendixcontentsline{1}{app:theory}
\appendixcontentsline{2}{app:correctness_dependence}
\appendixcontentsline{2}{app:joint_failure}
\appendixcontentsline{2}{app:quality_gain}
\appendixcontentsline{2}{app:soft_recovery}
\appendixcontentsline{2}{app:selection_stability}
\appendixcontentsline{2}{app:pooling_geometry}
\appendixcontentsline{2}{app:selection_gain}
\appendixcontentsline{2}{app:stacking_theory}
\appendixcontentsline{1}{app:implementation}
\appendixcontentsline{2}{app:scoring_implementation}
\appendixcontentsline{2}{app:bootstrap}
\appendixcontentsline{2}{app:hyperparameter_selection}
\appendixcontentsline{2}{app:selection_cost}
\appendixcontentsline{2}{app:aggregation_rules}
\appendixcontentsline{1}{app:additional_experiments}
\appendixcontentsline{2}{app:complete_results}
\appendixcontentsline{2}{app:team_composition}
\appendixcontentsline{2}{app:robustness_ablation}
\appendixcontentsline{2}{app:pooled_stability_results}
\appendixcontentsline{1}{app:limitations}

\makeatother
\endgroup

\clearpage
\let\appendixsection\section
\renewcommand{\section}{\FloatBarrier\appendixsection}
\let\appendixsubsection\subsection
\renewcommand{\subsection}{\FloatBarrier\appendixsubsection}

\section{Notations}
\label{app:notations}

\begingroup
\small
\begin{longtable}{p{0.29\linewidth}p{\dimexpr0.71\linewidth-4\tabcolsep\relax}}
\caption{Notations and Definitions}\label{tab:notations}\\
\toprule
\textbf{Notation} & \textbf{Definition} \\
\midrule
\endfirsthead
\multicolumn{2}{l}{\textit{Table \thetable{} continued.}}\\
\toprule
\textbf{Notation} & \textbf{Definition} \\
\midrule
\endhead
\midrule
\multicolumn{2}{r}{\textit{Continued on next page.}}\\
\endfoot
\bottomrule
\endlastfoot

    $\mathcal{T}$ & Set of tasks used for cross-task pooling. \\
    $t$ & Task / benchmark index. \\
    $D_t^{\mathrm{dev}},\, D_t^{\mathrm{test}}$ & Development and test splits for task $t$ (strict dev$\rightarrow$test). \\
    $\mathcal{Y}_t$ & Label / option set for task $t$ (multiple-choice). \\
    $x,\,y$ & An instance and its gold label ($y\in\mathcal{Y}_t$). \\
    
    $\mathcal{M}=\{1,\ldots,m\}$ & Index set of the $m$ candidate agents (callable LLMs). \\
    $m$ & Number of candidate agents. \\
    $k$ & Team size budget. \\
    $S\subseteq\mathcal{M}$ & A team of agents, with $|S|=k$. \\
    $i\in\mathcal{M}$ & Index of a candidate agent. \\
    
    $p_i(\cdot\mid x)$ & Agent $i$'s aligned choice distribution over $\mathcal{Y}_t$ for instance $x$ (task index omitted when clear). \\
    $\hat{y}_i(x)$ & Predicted label of agent $i$, $\hat{y}_i(x)=\arg\max_{y\in\mathcal{Y}_t}p_i(y\mid x)$. \\
    $C_i(x)\in\{0,1\}$ & Correctness indicator of agent $i$ on example $x$. \\
    $N_{ab}(i,j)$ & Count of examples where $(C_i(x),C_j(x))=(a,b)$ for $a,b\in\{0,1\}$. \\
    $R(i,j)$ & Error-correlation / complementarity statistic between agents $i$ and $j$ (e.g., Yule's $Q$). \\
    $q_i^t$ & Estimated individual quality of agent $i$ on task $t$ (development accuracy). \\
    $\bar{R}$ & Pooled Yule's $Q$ association matrix estimated from development data. \\
    $\bar{J}$ & Pooled distribution-divergence matrix estimated from development data (across tasks if applicable). \\
    $\mathrm{HI}_{\mathrm{err}}(S;\bar{R})$ & Error-level heterogeneity / complementarity of team $S$ (computed from $\bar{R}$). \\
    $\mathrm{HI}_{\mathrm{dist}}(S;\bar{J})$ & Distribution-level heterogeneity / disagreement of team $S$ (computed from $\bar{J}$). \\
    $z(\cdot)$ & Standard-error-scaled normalization of a team average using candidate-pool base-item statistics. \\
    $\lambda_1,\,\lambda_2$ & Objective weights trading off capability and complementarity. \\
    $\mathrm{score}_t(S)$ & Standardized selection objective for task $t$ (see \Cref{eq:objective}). \\
    $S_t^{*}$ & Size-$k$ team returned by multi-start greedy search on $\mathrm{score}_t(S)$. \\
    $p_{\textsc{cs}}^t(y\mid x)$ & Choice-Soft aggregation distribution (mean of members' $p_i^t$). \\
    $p_{\textsc{poe}}^t(y\mid x)$ & PoE aggregation distribution (product of members' $p_i^t$, renormalized). \\
    $\pi^t(y)$ & Class prior estimated from $D_t^{\mathrm{dev}}$ (used in DS). \\
    $A_i^t(a,b)$ & DS confusion matrix entry estimating $\mathbb{P}(\hat{y}_i=b\mid y=a)$ on $D_t^{\mathrm{dev}}$. \\
    $\varepsilon$ & Additive smoothing constant for DS priors/confusions (e.g., $\varepsilon=10^{-3}$). \\
    $p_{\textsc{ds}}^t(y\mid x)$ & DS posterior (up to proportionality) combining $\pi^t$ and $\{A_i^t\}$. \\
    $\phi(x)$ & Stacking feature vector formed by concatenating members' $p_i^t(\cdot\mid x)$. \\
    $W,\,b$ & Linear stacking combiner parameters. \\
    $\beta$ & $L_2$ regularization coefficient for stacking. \\
    $\pi_{W,b}(y\mid x)$ & Stacking aggregation distribution, $\pi_{W,b}(y\mid x)=\mathrm{softmax}(W^\top\phi(x)+b)_y$. \\
    $\hat{y}_{S_t}(x)$ & Final team prediction on $x$ under a chosen aggregator. \\
    $\bar q_S,\,e_S$ & Mean member accuracy and error probability in the fixed-task theoretical model. \\
    $D_S,\,K_S,\,J_S$ & Mean pairwise double-fault probability, same-wrong-answer probability, and JSD in that model. \\
    $d,\,h,\,\theta$ & Divergence between distinct wrong-answer templates, divergence between correct and incorrect templates, and $\theta=2h/d$. \\
    $\varepsilon_Q$ & Baseline correction in the accuracy comparison with Quality-Only. \\
    \end{longtable}
\endgroup

\section{Additional Related Works}
\label{app:extended_related_work}

\subsection{Multi-Agent LLM Systems}
\label{app:rw_multi_agent}

Much prior work builds multi-agent collaboration on predefined roles and fixed workflows, ranging from role-playing paradigms \citep{li2023camel, hong2024metagpt} and workflow-based dialogue orchestration \citep{wu2024autogen, qian2024chatdev, chen2024agentverse} to multi-agent simulation environments for modeling collective and social behaviors \citep{park2023generative, zhou2024sotopia, tang2025gensim, piao2025agentsociety}. To improve reliability, studies model interaction as structured group decision processes such as debate \citep{du2024improving, liang2024encouraging}, vote-based aggregation \citep{choi2025debate}, and consensus mechanisms \citep{chen2024reconcile}, and analyze protocol interventions to understand mechanisms and failure modes \citep{estornell2024multi}. More recent work treats the interaction structure as an optimization target, using learnable communication graphs \citep{zhuge2024gptswarm, zhang2025gdesigner}, sparse or hierarchical messaging \citep{li2024improving, qian2025scaling, wang2025talk}, pruning and economical communication pipelines \citep{zhang2025agentprune}, or dynamic team formation \citep{liu2024a} to improve scalability and efficiency.

\subsection{Multi-Model Collaboration, Output Fusion, and Routing}
\label{app:rw_multi_model}

Multi-model collaboration is often organized as an ensemble pipeline that generates candidates, selects among them, and fuses outputs \citep{lu2024merge, chen2025harnessing}. Methods operate at the output level via pairwise ranking \citep{jiang2023llmblender}, consistency-based selection \citep{chen2024universal}, or consensus aggregation \citep{chen2024reconcile}; during reasoning through iterative refinement \citep{wang2025mixture} or tree-search-based decision making \citep{park2025ensembling}; or at decoding time by combining token-level distributions \citep{huang2024ensemble, o2023contrastive, shen2024learning, wang2025speculate}. For deployment, routing and cascading dynamically choose models per input to trade quality for cost \citep{chen2024frugalgpt, shnitzer2024large, ong2025routellm}, with systematic benchmarks characterizing the quality--cost trade-off \citep{hu2024routerbench, ding2025bestroute} and unified analyses of when such strategies are effective \citep{dekoninck2025a}.

\section{Theoretical Analysis and Proofs}
\label{app:theory}

The analysis follows the two questions in \Cref{sec:theoretical_insights}: how the two
heterogeneity signals characterize collective errors, and when their benefits compensate
for a reduction in member quality. Classical ensemble analyses relate diversity to margins
and loss decompositions \citep{tang2006analysis,bian2022diversity,wood2023unified}.
We develop the connection through a choice-probability model, prove an accuracy comparison
with Quality-Only, and relate its terms to the standardized selection objective.
Further results cover soft recovery, profiling perturbations, cross-task pooling, and the
member-specific information available to Stacking. JSD uses base-$2$ logarithms;
cross-entropy and the auxiliary logarithmic calculations below use natural logarithms.

\subsection{Correctness-Level Dependence}
\label{app:correctness_dependence}

Correctness correlations summarize which examples models solve together. We first record their
connection to shared errors, using the classical diversity statistics studied by
\citet{yule1900association,kuncheva2003measures,tang2006analysis}.
Fix a task distribution $\mathcal D$; an empirical development distribution is also allowed.
Write $q_i=\E[C_i]$, $E_i=1-C_i$, and
$\mathrm{DF}_{ij}=\Pr(E_i=E_j=1)$.

\begin{lemma}[Fixed-marginal dependence]
\label{prop:yule_q_monotonicity}
For fixed $q_i,q_j\in(0,1)$, Yule's $Q$, correctness covariance, and double-fault probability
are strictly increasing functions of the joint-correct probability $t=\Pr(C_i=C_j=1)$ over
the interior of its feasible interval. In particular,
\begin{equation}
    \mathrm{DF}_{ij}=(1-q_i)(1-q_j)+\operatorname{Cov}(C_i,C_j).
    \label{eq:double_fault_identity}
\end{equation}
\end{lemma}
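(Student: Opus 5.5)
The plan is to parametrize the whole $2\times 2$ joint distribution of $(C_i,C_j)$ by the single free quantity $t=\Pr(C_i=C_j=1)$ once the marginals are fixed. Then each of the three statistics becomes an explicit function of $t$, and each can be differentiated or inspected directly. With $q_i,q_j$ fixed, the cell probabilities are
\begin{equation*}
    p_{11}=t,\quad p_{10}=q_i-t,\quad p_{01}=q_j-t,\quad p_{00}=1-q_i-q_j+t .
\end{equation*}
Nonnegativity of these four cells gives the feasible interval
\begin{equation*}
    t\in\bigl[\max\{0,q_i+q_j-1\},\ \min\{q_i,q_j\}\bigr].
\end{equation*}
On the interior of this interval all four cells are strictly positive. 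That interval is nondegenerate, since $q_i,q_j\in(0,1)$.

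\textbf{Covariance and double fault.} Here I would simply compute.
\begin{equation*}
    \operatorname{Cov}(C_i,C_j)=t-q_iq_j,
\end{equation*}
which is strictly increasing in $t$ with slope $1$. Likewise
\begin{equation*}
    \mathrm{DF}_{ij}=p_{00}=1-q_i-q_j+t .
\end{equation*}
Rewriting,
\begin{equation*}
    1-q_i-q_j+t=(1-q_i)(1-q_j)+(t-q_iq_j),
\end{equation*}
which is exactly \eqref{eq:double_fault_identity}. This also shows that $\mathrm{DF}_{ij}$ is strictly increasing with slope $1$.

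\textbf{Yule's $Q$.} Write $Q=(P-M)/(P+M)$ with $P=p_{11}p_{00}$ and $M=p_{10}p_{01}$. On the interior $M>0$, so I can write
\begin{equation*}
    Q=\frac{r-1}{r+1},\qquad r=P/M .
\end{equation*}
The map $r\mapsto (r-1)/(r+1)$ is strictly increasing on $(0,\infty)$, so it suffices to show that $r$, or equivalently $\log r$, is strictly increasing in $t$. Differentiating,
\begin{equation*}
    \frac{d}{dt}\log r=\frac1{t}+\frac1{1-q_i-q_j+t}+\frac1{q_i-t}+\frac1{q_j-t}>0,
\end{equation*}
since every denominator is positive on the interior. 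This is the only step needing any care: restricting to the interior guarantees both that $D_{ij}>0$ and that the log-derivative is defined, so the degenerate convention $R_{ij}=0$ in \eqref{eq:yule_q} never applies there.

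I expect no real obstacle. The one point to check is that the interior is nonempty and that $D_{ij}=P+M>0$ on it, and both follow from $q_i,q_j\in(0,1)$. The statement holds verbatim for an empirical development distribution, since nothing beyond finite cell probabilities is used. As a consequence, at fixed marginals, $Q$, the covariance, and $\mathrm{DF}_{ij}$ are all strictly increasing functions of one another, which is the form used in Remark~\ref{rem:correctness_information}.
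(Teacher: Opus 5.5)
Your proposal is correct and follows the paper's proof essentially step for step. It uses the same one-parameter cell parametrization in $t$, the same slope-one expressions for $\operatorname{Cov}(C_i,C_j)$ and $\mathrm{DF}_{ij}$ that give \eqref{eq:double_fault_identity}, and the same log-odds-ratio derivative composed with the increasing map $r\mapsto(r-1)/(r+1)$.
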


\begin{proof}
The four joint probabilities are
$p_{11}=t$, $p_{10}=q_i-t$, $p_{01}=q_j-t$, and
$p_{00}=1-q_i-q_j+t$, where
$\max\{0,q_i+q_j-1\}\le t\le\min\{q_i,q_j\}$.
Thus $\operatorname{Cov}(C_i,C_j)=t-q_iq_j$ and
$\mathrm{DF}_{ij}=1-q_i-q_j+t$, giving \Cref{eq:double_fault_identity}; both have derivative one.
In the interior, all four cells are positive. For the odds ratio
$O(t)=p_{11}p_{00}/(p_{10}p_{01})$,
\begin{equation}
    \frac{d}{dt}\log O(t)
    =\frac1t+\frac1{1-q_i-q_j+t}+\frac1{q_i-t}+\frac1{q_j-t}>0.
\end{equation}
Since $Q=(O-1)/(O+1)$ and $dQ/dO=2/(O+1)^2>0$, $Q$ is strictly increasing as well.
Endpoint values follow by continuity. Finally,
$\operatorname{Cov}(E_i,E_j)=\operatorname{Cov}(C_i,C_j)$ because $E_i=1-C_i$.
\end{proof}

With individual qualities held fixed, $\HIerr$ therefore favors pairs with fewer shared errors.
For two members, an oracle that succeeds whenever either member is correct has error probability
$\mathrm{DF}_{ij}$. Distributional analysis below examines how soft aggregation can recover
information even on examples where every member's top prediction is wrong.

\paragraph{Classical majority-vote bound.}
For completeness, the standard one-sided variance argument gives another interpretation of
correctness dependence. For binary-label prediction, let $k\ge3$ be odd and let correctness indicators have common mean
$p\in(1/2,1)$ and common pairwise correlation $\rho$, and put
$\bar C=k^{-1}\sum_i C_i$. Expanding the variance and applying Cantelli's inequality gives
\begin{equation}
\begin{aligned}
    V(\rho):=\operatorname{Var}(\bar C)
        &=\frac{p(1-p)}{k}\bigl(1+(k-1)\rho\bigr),\\
    \Pr(\text{majority vote errs})
        =\Pr(\bar C\le1/2)
        &\le\frac{V(\rho)}{V(\rho)+(p-1/2)^2}.
\end{aligned}
\label{eq:classical_vote_bound}
\end{equation}
Indeed, the variance contains $k$ diagonal terms $p(1-p)$ and $k(k-1)$ off-diagonal
terms $\rho p(1-p)$, all divided by $k^2$. Cantelli's inequality applied to
$\bar C-p\le-(p-1/2)$ yields the second line.
Writing $A=(p-1/2)^2>0$, the derivative of this upper bound is
$A p(1-p)(k-1)/[k(V(\rho)+A)^2]>0$ throughout the feasible correlation range.
This is a classical bound on binary majority voting; the subsequent results analyze the
choice distributions and selection rule used by \cmas{} directly.

\subsection{Joint Characterization of Collective Failure}
\label{app:joint_failure}

We work on one fixed distribution $\mathcal D$ of labeled examples. The same identities
hold for the uniform distribution on a finite set. Task indices are suppressed within
this analysis; the target-task and pooled profiles are distinguished explicitly in
\Cref{app:selection_gain}.

\paragraph{Model and pairwise quantities.}
All candidate members follow \eqref{eq:joint_model}, with common constants $a,b,c,u$.
For each example, let $w_i\ne y$ denote the preferred answer of an incorrect member.
Writing the true answer first, the distributions are
\[
    p^{\rm c}=(a,u,u,u),\qquad
    p^{w_1}=(a,b,c,c),\quad
    p^{w_2}=(a,c,b,c),\quad
    p^{w_3}=(a,c,c,b).
\]
The correctness indicators and the $w_i$ have an arbitrary joint distribution across
members and examples. Since $c<u<b$ and $a>(2b+u)/3$, we have $a>u$,
so these templates have the stated unique preferred answers. The parameter region
has nonempty interior; for example, $(a,b,c)=(0.42,0.48,0.05)$ satisfies all inequalities.

For a three-member team, define
\begin{align}
    \bar q_S&=\frac13\sum_{i\in S}q_i,\qquad e_S=1-\bar q_S,\notag\\
    D_S&=\frac13\sum_{\{i,j\}\subset S}\Pr(C_i=C_j=0),\notag\\
    K_S&=\frac13\sum_{\{i,j\}\subset S}
        \Pr(C_i=C_j=0,\ w_i=w_j),\notag\\
    J_S&=\frac13\sum_{\{i,j\}\subset S}
        \E[\JSD_2(p_i,p_j)].
    \label{eq:joint_pair_quantities}
\end{align}
The event defining $K_S$ evaluates $w_i,w_j$ only when both members are incorrect.
Here $J_S$ is the population counterpart of the task-specific $\HIdist$.
The pairwise terms in $D_S$ are determined by the marginal qualities and Yule's $Q$
on nondegenerate correctness tables, by \Cref{prop:yule_q_monotonicity}.

\paragraph{Divergence constants.}
Let
\[
    g(x,z)=\frac12\left[x\log_2\frac{2x}{x+z}
                           +z\log_2\frac{2z}{x+z}\right].
\]
The two relevant divergences are
\begin{equation}
    d=2g(b,c)>0,\qquad h=g(b,u)+2g(c,u)>0,\qquad \theta=\frac{2h}{d}.
    \label{eq:joint_divergence_constants}
\end{equation}
The true-answer coordinate contributes zero to both divergences. Two correct
members, or two incorrect members preferring the same answer, have identical
distributions. A correct--incorrect pair contributes $h$, and two incorrect
members preferring different answers contribute $d$.

\begin{lemma}[Positive coefficients in the joint decomposition]
\label{lem:joint_theta}
For $b>c>0$ and $u=(b+2c)/3$, the constants in
\eqref{eq:joint_divergence_constants} satisfy $0<2h<d$.
\end{lemma}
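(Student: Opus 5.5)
The inequality $h>0$ is immediate, so the real content is $2h<d$, i.e.\ $g(b,u)+2g(c,u)<g(b,c)$. I would first record the elementary properties of $g$. It is symmetric. It is positively homogeneous of degree one, $g(sx,sz)=s\,g(x,z)$ for $s>0$. It is nonnegative and vanishes iff $x=z$, since $g(x,z)$ is $\tfrac12(x+z)$ times a binary JSD. Its partial derivatives are
\[
\partial_x g(x,z)=\tfrac12\log_2\frac{2x}{x+z},\qquad \partial_z g(x,z)=\tfrac12\log_2\frac{2z}{x+z}.
\]
Because $c<u<b$ (indeed $u$ is a strict convex combination of $b$ and $c$), we get $g(b,u)>0$ and $g(c,u)>0$, which gives $h>0$, and likewise $d>0$.

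A naive convexity bound does not work. Writing $u=\tfrac13 b+\tfrac23 c$ and using convexity of $g$ in each argument gives only $g(b,u)+2g(c,u)\le\tfrac43 g(b,c)$, which is too weak. So I would use a one-variable monotonicity argument instead. By homogeneity I normalize $c=1$ and $b=r>1$, so that $u=u(r)=(r+2)/3$. I then define
\[
F(r)=g(r,1)-g(r,u(r))-2g(1,u(r)).
\]
At $r=1$ all arguments coincide, so $F(1)=0$, and it suffices to show $F'(r)>0$ for $r>1$. Using the partial derivatives and $u'(r)=1/3$, I expect
\[
2\ln 2\cdot F'(r)=\ln\frac{r+u}{r+1}-\frac13\ln\frac{2u}{r+u}-\frac23\ln\frac{2u}{1+u}=:G(r).
\]
Here the terms $\ln 2r$ cancel.

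The main obstacle is proving $G(r)>0$ for all $r>1$. I would first try the same trick once more. At $r=1$ we have $u=1$ and every logarithm is zero, so $G(1)=0$, and I would differentiate $G$ as a rational-function log-derivative. Substituting $u=(r+2)/3$, the quantities $r+u=(4r+2)/3$, $1+u=(r+5)/3$ and $2u=(2r+4)/3$ are all linear in $r$. Hence $G'(r)$ is a sum of terms $\pm\alpha/(\text{linear})$. Clearing denominators should reduce $G'(r)>0$ to positivity of a low-degree polynomial in $r-1$ on $r>1$, which I would check coefficient by coefficient. As a sanity check, $G(r)\to \ln 4-\tfrac13\ln\tfrac12-\tfrac23\ln 2=\ln 4-\tfrac13\ln 2>0$ as $r\to\infty$. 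I would also verify the second-order behavior at $r=1$ to confirm the sign is not accidentally zero to leading order.

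If the polynomial route becomes messy, the fallback is to rewrite $G(r)$ as a difference of concave logarithms. I would then apply Jensen's inequality to $\tfrac13\ln\frac{2u}{r+u}+\tfrac23\ln\frac{2u}{1+u}\le\ln\bigl(\tfrac13\frac{2u}{r+u}+\tfrac23\frac{2u}{1+u}\bigr)$ and compare the resulting single logarithm with $\ln\frac{r+u}{r+1}$, which reduces to one rational inequality in $r$. Finally, $F(r)>0$ for $r>1$ yields $g(b,u)+2g(c,u)<g(b,c)$ by homogeneity, hence $2h<d$ and $0<\theta<1$.
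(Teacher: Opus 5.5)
Your homogeneity normalization and first-derivative computation are correct. With $c=1$ and $b=r$, your $G(r)$ is exactly $F'(t)$ in the paper's proof. The gap is in the step you flagged as the main obstacle: neither of your two routes proves $G>0$ on $(1,\infty)$.

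\textbf{The monotonicity route fails.} Clearing denominators gives
\[
G'(r)=\frac{8}{3(2r+1)}+\frac{2}{3(r+5)}-\frac{1}{r+2}-\frac{1}{r+1}
=\frac{-2r^2+7r+13}{(2r+1)(r+5)(r+2)(r+1)}.
\]
In $s=r-1$ the numerator is $-2s^2+3s+18$. It is negative for $r>r_*=(7+\sqrt{153})/4\approx4.84$; indeed $G'(r)\sim-1/r^2$. So $G$ is not increasing on $(1,\infty)$, and no coefficient-by-coefficient check can establish $G'>0$.

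\textbf{The Jensen fallback also fails.} It reduces to $\frac{r+u}{r+1}>\frac13\cdot\frac{2u}{r+u}+\frac23\cdot\frac{2u}{1+u}$. The two sides tend to $\frac43$ and $\frac32$, and the inequality is already false at $r=12$, where it reads $\frac{50}{39}>\frac{546}{425}$ (false).

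Your sanity check also contains a slip: $\frac{r+u}{r+1}\to\frac43$, not $4$. So the correct limit is $\lim_{r\to\infty}G(r)=\ln\frac43-\frac13\ln2=\frac53\ln2-\ln3\approx0.057$. This margin is thin, and the loss from Jensen's inequality, about $0.17$ in the limit, exceeds it.

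\textbf{The missing idea} is to use this limit as an ingredient of the proof, combined with the sign pattern of $G'$. Since $G'>0$ on $(1,r_*)$ and $G'<0$ on $(r_*,\infty)$:
\begin{itemize}
\item on $(1,r_*]$, $G$ increases strictly from $G(1)=0$, so $G>0$ there;
\item on $[r_*,\infty)$, $G$ decreases strictly to $\frac53\ln2-\ln3$, which is positive because $2^5>3^3$, so $G>0$ there as well.
\end{itemize}
Hence $G>0$ on $(1,\infty)$. Integrating from $r=1$ gives $F>0$, i.e.\ $2h<d$. This is exactly the paper's argument; its $F''$ is your $G'$. With that replacement, the rest of your proof goes through.
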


\begin{proof}
Positivity follows from the distinct templates. To prove $d-2h>0$, set
$t=b/c>1$ and $\varphi(z)=z\ln z$. Expanding the scalar divergences and
cancelling the terms containing $\ln c$ gives
\[
    d-2h=\frac{c}{\ln2}F(t),
\]
where
\[
    F(t)=2\varphi\!\left(\frac{2t+1}{3}\right)
        +4\varphi\!\left(\frac{t+5}{6}\right)
        -3\varphi\!\left(\frac{t+2}{3}\right)
        -2\varphi\!\left(\frac{t+1}{2}\right).
\]
Direct differentiation yields $F(1)=F'(1)=0$ and
\begin{equation}
    F''(t)=
    \frac{-2t^2+7t+13}{(t+2)(t+1)(2t+1)(t+5)}.
    \label{eq:joint_theta_derivative}
\end{equation}
This derivative is positive for $1<t<t_*=(7+\sqrt{153})/4$ and negative
for $t>t_*$. Thus $F'$ first increases from zero and then decreases to
\[
    \lim_{t\to\infty}F'(t)=\frac53\ln2-\ln3>0,
\]
where the final inequality is equivalent to $32>27$. Consequently $F'(t)>0$
for every $t>1$, and integration from $1$ gives $F(t)>0$. Dividing
$0<2h<d$ by $d$ proves $0<\theta<1$.
\end{proof}

\begin{proof}[Proof of Proposition~\ref{prop:joint_failure}]
For a pair $(i,j)$, write
$D_{ij}=\Pr(C_i=C_j=0)$ and
$K_{ij}=\Pr(C_i=C_j=0,w_i=w_j)$. The probability that exactly one
member is incorrect is $(1-q_i)+(1-q_j)-2D_{ij}$. The pairwise
divergence calculation above therefore gives
\begin{equation}
    \E[\JSD_2(p_i,p_j)]
    =h\bigl[(1-q_i)+(1-q_j)-2D_{ij}\bigr]
       +d(D_{ij}-K_{ij}).
    \label{eq:joint_pair_identity}
\end{equation}
Each member occurs in two of the three unordered pairs. Averaging yields
$J_S=2h(e_S-D_S)+d(D_S-K_S)$. Rearranging and using
\Cref{lem:joint_theta} establishes \eqref{eq:joint_collision}.

It remains to characterize aggregation. Every member assigns $a$ to the
true answer. If all three are incorrect and prefer the same wrong answer,
that answer has score $b>a$ under Choice-Soft and $b^3>a^3$ under PoE.
Both aggregators therefore err.

In every other case, fix an incorrect answer $r$. If at least one member is
correct, the three probabilities assigned to $r$ have sum at most $2b+u$
and product at most $b^2u$. If all members are incorrect but their preferred
answers are not all equal, at most two assign $b$ to $r$, so the corresponding
bounds are $2b+c<2b+u$ and $b^2c<b^2u$. Thus the true answer has a strictly
larger Choice-Soft score by \eqref{eq:joint_model}. For PoE, the same condition and
the arithmetic--geometric mean inequality imply
\[
    a^3>\left(\frac{2b+u}{3}\right)^3\ge b^2u.
\]
Hence both aggregators err exactly on the event that all three members prefer
the same wrong answer. On that event, every pair contributes one to the
same-wrong-answer indicator. Pointwise,
\[
    \ind\{\hat y_S\ne y\}
    \le\frac13\sum_{\{i,j\}\subset S}
          \ind\{C_i=C_j=0,\ w_i=w_j\}.
\]
Taking expectations gives $\Pr(\hat y_S\ne Y)\le K_S$.
\end{proof}

\subsection{Accuracy Gains over Quality-Only Selection}
\label{app:quality_gain}

We derive a lower bound on the baseline's error probability to accompany the
upper bound in \Cref{prop:joint_failure}. Their combination gives an accuracy
comparison between teams.

\begin{lemma}[Error bounds from pairwise profiles]
\label{lem:joint_risk_bounds}
Under \eqref{eq:joint_model}, let $\mathcal R_S$ be the error probability of
either Choice-Soft or PoE. Then
\begin{equation}
    \max\left\{0,\frac{D_S+3K_S-2e_S}{2}\right\}
    \le\mathcal R_S
    \le\min\{K_S,\ 1-3e_S+3D_S\}.
    \label{eq:joint_sharp_risk}
\end{equation}
Both endpoints are attainable for every feasible triple $(e_S,D_S,K_S)$.
In particular,
\begin{equation}
    0\le K_S-\mathcal R_S
    \le\varepsilon_S:=
       \min\left\{K_S,\ e_S-\frac{D_S+K_S}{2}\right\}.
    \label{eq:joint_baseline_correction}
\end{equation}
\end{lemma}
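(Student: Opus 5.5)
The plan is to reduce everything to a pointwise (per-example) analysis of a small finite set of configuration types, then take expectations. \Cref{prop:joint_failure} has already shown that both Choice-Soft and PoE err exactly on the event $Z$ that all three members are incorrect and prefer the same wrong answer. So $\mathcal R_S=\Pr(Z)$, and the aggregator enters only through this characterization.

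For each example, let $N\in\{0,1,2,3\}$ be the number of incorrect members, $P=\binom{N}{2}$ the number of doubly-incorrect pairs, and $M$ the number of doubly-incorrect pairs with $w_i=w_j$. By \eqref{eq:joint_pair_quantities}, $3e_S=\E[N]$, $3D_S=\E[P]$ and $3K_S=\E[M]$. The possible configurations, written as $(N,P,M;\ind_Z)$, are:
\begin{itemize}
\item $(0,0,0;0)$ and $(1,0,0;0)$;
\item $(2,1,1;0)$ and $(2,1,0;0)$, for two incorrect members with equal or different wrong answers;
\item $(3,3,3;1)$, $(3,3,1;0)$ and $(3,3,0;0)$, for three incorrect members whose wrong answers are all equal, exactly two equal, or all distinct.
\end{itemize}

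With this table, each inequality in \eqref{eq:joint_sharp_risk} becomes a pointwise inequality to check on seven cases, after which we take expectations. For the upper bounds I will verify $\ind_Z\le M/3$ and $\ind_Z\le 1-N+P$. The right-hand side $1-N+P$ takes the values $1,0,0,1$ for $N=0,1,2,3$, so both inequalities hold in every case. For the lower bound I will verify $\ind_Z\ge (P+3M-2N)/6$. This expression equals $1$ on the all-same type and $0$ on types $(0,0,0)$, $(2,1,1)$ and $(3,3,1)$, and it is negative on the remaining types. Combined with $\ind_Z\ge0$, taking expectations yields \eqref{eq:joint_sharp_risk}.

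Equation \eqref{eq:joint_baseline_correction} then follows by subtracting the lower bound from $K_S$:
\[
K_S-\max\Bigl\{0,\tfrac{D_S+3K_S-2e_S}{2}\Bigr\}=\min\Bigl\{K_S,\ e_S-\tfrac{D_S+K_S}{2}\Bigr\}.
\]
The inequality $K_S-\mathcal R_S\ge0$ is just the upper bound already proved.

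The main obstacle is attainability of the endpoints for every feasible triple. Since the joint law of correctness and wrong answers is arbitrary, the feasible set of $(e_S,D_S,K_S)$ is exactly the convex hull of the seven type vectors $(N,P,M)/3$. Attainable values of $\mathcal R_S$ at a fixed triple are then the values of the linear program ``optimize the weight on the all-same type subject to the three moment constraints.'' I would show that the extreme values are $\min\{K_S,1-3e_S+3D_S\}$ and $\max\{0,(D_S+3K_S-2e_S)/2\}$ by exhibiting explicit mixtures achieving them. For the maximum, I would use only types on which one of the two upper-bound inequalities is tight, and split into cases according to which of $K_S$ and $1-3e_S+3D_S$ is smaller. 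For the minimum, I would use types $(0,0,0)$, $(2,1,1)$, $(3,3,1)$ and the all-same type, where the lower-bound inequality is tight, or avoid the all-same type entirely when the lower bound is $0$.

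In each case I expect to solve a small linear system for the mixture weights and check that they are nonnegative and sum to one, using the triple's feasibility. That feasibility check is where the case analysis may become fiddly. If a clean case split fails, my fallback is LP duality: the pointwise inequalities are the dual certificates, and tightness follows once complementary-slackness-compatible primal mixtures exist on each face of the hull.
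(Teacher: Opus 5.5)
Your derivation of the two-sided bound and of \eqref{eq:joint_baseline_correction} is correct, and it is the paper's argument in pointwise form. Dropping the subscript $S$: the paper tabulates the same seven error patterns and writes $e,D,K,\mathcal R$ as linear functions of their probabilities. It then reads off $K-\mathcal R=(p_{2s}+p_{21})/3$, $e-\tfrac{D+K}{2}-(K-\mathcal R)=\tfrac{p_1}{3}+\tfrac{p_{2d}+p_{111}}{2}$ and $1-3e+3D-\mathcal R=p_0+p_{21}+p_{111}$. These are exactly the expectations of your three pointwise slacks.

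The gap is attainability: the lemma asserts it, and you only plan it. Describing the feasible set as the hull of the seven type vectors does not show that your restricted mixtures have nonnegative weights.

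Take the lower endpoint with $D+3K-2e>0$. There the support is forced onto the four tight types, whose vectors are affinely independent. So there is exactly one candidate mixture, and you must prove it is nonnegative throughout the feasible region where $D+3K-2e\ge0$. That is the whole content of the claim, and it is left undone.

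The missing ingredient is an explicit inequality description of the hull. It is exactly $\{0\le K\le D\le e,\ D\ge 2e-1,\ 3D\ge 3e-1\}$. Each inequality is another pointwise check on the seven types, and the vertices of this region are type vectors. With this description your plan closes:
\begin{itemize}
\item \emph{Lower endpoint, $D+3K-2e\ge0$:} the weights on $(0,0,0),(2,1,1),(3,3,1),(3,3,3)$ are $1-2e+D$, $3(e-D)$, $\tfrac32(D-K)$, $\tfrac12(D+3K-2e)$.
\item \emph{Upper endpoint, $1-3e+3D\le K$:} the weights on $(1,0,0),(2,1,1),(2,1,0),(3,3,3)$ are $3(1-2e+D)$, $3(K-1+3e-3D)$, $3(D-K)$, $1-3e+3D$.
\item \emph{Upper endpoint, $K\le 1-3e+3D$:} with $s=\max\{0,2D-e-K\}$, the weights on $(0,0,0),(1,0,0),(2,1,0),(3,3,3),(3,3,0)$ are $1-3e+3D-K-s$, $3(e-2D+K+s)$, $3(D-K-s)$, $K$, $s$.
\end{itemize}
Each list is nonnegative precisely by the facet inequalities and the case hypothesis.

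Your LP-duality fallback would not rescue a failed case split. Strong duality only identifies the optimum with the best dual certificate. Showing that your two pointwise certificates are the best ones requires either these primal mixtures or an enumeration of all dual vertices.

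The paper avoids the case analysis with one parametrized mixture. It sets $p_{3s}=f$, $\tau=\max\{f,2D-e,0\}$ and $z=\max\{0,3(K-D+\tau-f)\}$, and checks nonnegativity for every $f$ between the two endpoints. Taking $f=0$ covers your remaining case (lower endpoint $0$, six types, two free parameters). The construction also gives the stronger fact that every intermediate value of $\mathcal R$ is attained, not just the endpoints.
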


\begin{proof}
Suppress the team subscript. There are seven possible error patterns, summarized
below. The last four columns give the contribution of each pattern to the
mean member error, double-fault probability, same-wrong-answer probability,
and aggregation error.
\begin{center}
\small
\begin{tabular}{llcccc}
\toprule
Probability & Error pattern & $e$ & $D$ & $K$ & $\mathcal R$\\
\midrule
$p_0$ & No member incorrect & $0$ & $0$ & $0$ & $0$\\
$p_1$ & One member incorrect & $1/3$ & $0$ & $0$ & $0$\\
$p_{2s}$ & Two incorrect, same answer & $2/3$ & $1/3$ & $1/3$ & $0$\\
$p_{2d}$ & Two incorrect, different answers & $2/3$ & $1/3$ & $0$ & $0$\\
$p_{3s}$ & Three incorrect, same answer & $1$ & $1$ & $1$ & $1$\\
$p_{21}$ & Three incorrect, two answers & $1$ & $1$ & $1/3$ & $0$\\
$p_{111}$ & Three incorrect, three answers & $1$ & $1$ & $0$ & $0$\\
\bottomrule
\end{tabular}
\end{center}
These probabilities are nonnegative and sum to one. The table gives
\begin{align}
    e&=\frac{p_1+2p_{2s}+2p_{2d}}3+p_{3s}+p_{21}+p_{111},\notag\\
    D&=\frac{p_{2s}+p_{2d}}3+p_{3s}+p_{21}+p_{111},\notag\\
    K&=\frac{p_{2s}}3+p_{3s}+\frac{p_{21}}3,\qquad
    \mathcal R=p_{3s}.
    \label{eq:joint_pattern_moments}
\end{align}
In particular, $K-\mathcal R=(p_{2s}+p_{21})/3\ge0$. A second identity is
\[
    e-\frac{D+K}{2}-(K-\mathcal R)
       =\frac{p_1}{3}+\frac{p_{2d}+p_{111}}2\ge0.
\]
Since $\mathcal R\ge0$, these two identities imply
\eqref{eq:joint_baseline_correction} and the lower bound in
\eqref{eq:joint_sharp_risk}. Finally,
$1-3e+3D=p_0+p_{3s}+p_{21}+p_{111}\ge\mathcal R$ gives the second upper bound.

To show attainability, fix any feasible $(e,D,K)$ and choose any $f$ between
the two endpoints in \eqref{eq:joint_sharp_risk}. Set
\[
    \tau=\max\{f,\,2D-e,\,0\},\qquad
    z=\max\{0,\,3(K-D+\tau-f)\}.
\]
Feasibility gives $0\le K\le D\le e$ and $D\ge2e-1$.
The bounds on $f$ then imply
\[
    f\le\tau\le
    \min\{D,\ 1-3e+3D,\ f+\tfrac32(D-K)\}.
\]
For the final inequality, the only additional comparison is
$2D-e\le f+\tfrac32(D-K)$, which is exactly the stated lower bound on $f$.
It follows that $0\le z\le\min\{\tau-f,\,3(K-f)\}$.
Define
\begin{align*}
    p_{3s}&=f,&
    p_{21}&=z,&
    p_{111}&=\tau-f-z,\\
    p_{2s}&=3(K-f)-z,&
    p_{2d}&=3(D-\tau)-p_{2s},\\
    p_1&=3e-6D+3\tau,&
    p_0&=1-3e+3D-\tau.
\end{align*}
Every term is nonnegative; for $p_{2d}$ this follows from
$z\ge3(K-D+\tau-f)$. They sum to one, and substitution into
\eqref{eq:joint_pattern_moments} recovers $(e,D,K,\mathcal R)=(e,D,K,f)$.
Each pattern is realizable using the three incorrect choices. Thus the endpoints
are sharp given these aggregate pairwise quantities.
\end{proof}

\begin{proof}[Proof of Proposition~\ref{prop:quality_only_gain}]
Write $\mathcal R_S=1-\operatorname{Acc}(S)$, with the same aggregator for both
teams. By \Cref{lem:joint_risk_bounds},
$\mathcal R_Q\ge K_Q-\varepsilon_Q$ and $\mathcal R_S\le K_S$. Therefore
\[
    \operatorname{Acc}(S)-\operatorname{Acc}(S_Q)
       =\mathcal R_Q-\mathcal R_S
       \ge K_Q-K_S-\varepsilon_Q.
\]
Applying \eqref{eq:joint_collision} to each team and using
$e_Q-e_S=-(\bar q_Q-\bar q_S)$ gives \eqref{eq:quality_only_gain}.
Its positive right-hand side implies a strict accuracy improvement.
If the baseline members make identical predictions, then
$D_Q=K_Q=e_Q$, hence $\varepsilon_Q=0$.
\end{proof}

The correction quantifies how much pairwise error concentration can overstate
the baseline's aggregate error. Its exact excess is
$(p_{2s}+p_{21})/3$: these are cases with a same-wrong-answer pair that the
third member or soft aggregation resolves. The bound on this excess uses
only $(e_Q,D_Q,K_Q)$, with $K_Q$ supplied by the joint decomposition.
The full interval in \eqref{eq:joint_sharp_risk} also gives the sharper comparison
\[
    \operatorname{Acc}(S)-\operatorname{Acc}(S_Q)
    \ge
    \max\left\{0,\frac{D_Q+3K_Q-2e_Q}{2}\right\}
       -\min\{K_S,\ 1-3e_S+3D_S\}.
\]
Equation~\eqref{eq:quality_only_gain} separates this comparison into the two
complementarity gains and the member-quality difference.

\paragraph{Perturbations of the probability templates.}
The joint mechanism extends to a neighborhood of the common templates.
Suppose each actual distribution is within total variation $\delta$ of its
template on every example, where
\begin{equation}
    0\le\delta<c,\qquad
    2\delta<\min\left\{b-a,\ a-\frac{2b+u}{3}\right\}.
    \label{eq:joint_perturbation}
\end{equation}
A coordinate changes by at most $\delta$. An incorrect member retains its
preferred answer because $b-a>2\delta$; a correct member retains its preferred
answer because $a-u>a-(2b+u)/3>2\delta$.
For every pattern with successful aggregation, the correct Choice-Soft score is
at least $a-\delta$ and every wrong score is at most $(2b+u)/3+\delta$.
The same inequality gives
$(a-\delta)^3>(b+\delta)^2(u+\delta)$ by the arithmetic--geometric mean
inequality, preserving PoE recovery. The common wrong answer still wins in
the remaining pattern. Thus $e,D,K$ and both aggregation accuracies are unchanged.

By the JSD gradient argument in \eqref{eq:soft_js_perturbation_bound},
the observed divergence $J'_S$ satisfies
\[
    |J'_S-J_S|\le\eta_\delta,\qquad
    \eta_\delta=2\delta\log_2\frac1{c-\delta}.
\]
Consequently, defining
$K'_S=\theta e_S+(1-\theta)D_S-J'_S/d$ gives
$|K'_S-K_S|\le\eta_\delta/d$ and
$\mathcal R_S\le K'_S+\eta_\delta/d$.
The map $K\mapsto\min\{K,e-(D+K)/2\}$ is $1$-Lipschitz.
After clipping its value below at zero, the baseline correction computed with
$K'_Q$ differs from $\varepsilon_Q$ by at most $\eta_\delta/d$.
Replacing $J_S,J_Q,\varepsilon_Q$ in \eqref{eq:quality_only_gain} by these
perturbed quantities therefore preserves the lower bound after subtracting
$3\eta_\delta/d$.

\subsection{Soft Recovery from Dispersed Incorrect Choices}
\label{app:soft_recovery}

This section isolates recovery on examples where every member is incorrect,
extending the mechanism to general team sizes, larger label sets, and
perturbed choice distributions. We work on a
fixed distribution of labeled examples; the same calculations apply to the
uniform distribution on a finite profiling set. Throughout this section,
Jensen--Shannon divergence is measured in bits \citep{lin1991divergence}:
\begin{equation}
    \JSD_2(p,q)
    = \frac12\sum_r p_r\log_2\frac{2p_r}{p_r+q_r}
      +\frac12\sum_r q_r\log_2\frac{2q_r}{p_r+q_r}.
    \label{eq:soft_js_bits}
\end{equation}

\paragraph{Incorrect-choice occupancy and aggregation margins.}
Consider $k\ge2$ members and $L\ge4$ choices. On an example with true label $y$,
member $i$ has an incorrect preferred choice $w_i\ne y$ and distribution
\begin{equation}
    p_i(r)=
    \begin{cases}
        a, & r=y,\\
        b, & r=w_i,\\
        c, & r\notin\{y,w_i\},
    \end{cases}
    \qquad
    0<c<a<b,\qquad a+b+(L-2)c=1.
    \label{eq:soft_template}
\end{equation}
Thus every member predicts incorrectly, while the true label is each member's
second choice. For each incorrect label $r$, let
\begin{equation}
    n_r=\sum_{i=1}^k\ind\{w_i=r\},\qquad
    v_r=\frac{n_r}{k},\qquad \sum_{r\ne y}v_r=1.
    \label{eq:soft_occupancy}
\end{equation}
Define the divergence between two members with different incorrect preferred
choices and the mean pairwise divergence on this example by
\begin{align}
    d&=b\log_2\frac{2b}{b+c}+c\log_2\frac{2c}{b+c}>0,
    \label{eq:soft_pair_scale}\\
    j_S(x)&=\frac{1}{\binom{k}{2}}
        \sum_{i<j}\JSD_2(p_i,p_j).
    \label{eq:soft_instance_js}
\end{align}

\begin{lemma}[Occupancy and strict recovery]
\label{lem:soft_occupancy}
Under \eqref{eq:soft_template},
\begin{equation}
    j_S(x)
    =d\,\frac{k^2-\sum_{r\ne y}n_r^2}{k(k-1)}
    =d\,\frac{k}{k-1}\left(1-\sum_{r\ne y}v_r^2\right).
    \label{eq:soft_js_occupancy}
\end{equation}
Choice-Soft assigns the true label a strictly larger score than every incorrect
label if and only if
\begin{equation}
    \max_{r\ne y}v_r<\frac{a-c}{b-c}.
    \label{eq:soft_cs_threshold}
\end{equation}
For PoE, the corresponding necessary and sufficient condition is
\begin{equation}
    \max_{r\ne y}v_r<\frac{\log_2(a/c)}{\log_2(b/c)}.
    \label{eq:soft_poe_threshold}
\end{equation}
Equality in either threshold gives a tie between the true label and a highest
scoring incorrect label for that aggregator.
\end{lemma}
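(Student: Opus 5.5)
The plan is to prove the JSD identity pairwise and then settle both thresholds by comparing per-label aggregation scores directly.

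\textbf{JSD identity.} Under \eqref{eq:soft_template}, two members with the same incorrect preferred choice have identical distributions, so their divergence is zero. For $w_i\ne w_j$, the distributions differ only on coordinates $w_i$ and $w_j$, where the values are $(b,c)$ and $(c,b)$. The true-label coordinate and all remaining coordinates agree and contribute nothing. Each of the two differing coordinates contributes $\tfrac12[b\log_2\frac{2b}{b+c}+c\log_2\frac{2c}{b+c}]$, so
\[
\JSD_2(p_i,p_j)=d\,\ind\{w_i\ne w_j\},
\]
with $d$ as in \eqref{eq:soft_pair_scale}. Counting ordered pairs, the number with $w_i\ne w_j$ is $k^2-\sum_r n_r^2$, so the number of unordered such pairs is $(k^2-\sum_r n_r^2)/2$. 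Dividing by $\binom k2$ gives the first form of \eqref{eq:soft_js_occupancy}. Substituting $n_r=kv_r$ gives the second.

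\textbf{Choice-Soft threshold.} The mean score of the true label is exactly $a$. An incorrect label $r$ receives $b$ from $n_r$ members and $c$ from the others, so its score is $c+v_r(b-c)$. Hence the true label strictly beats $r$ iff $v_r<(a-c)/(b-c)$, and it beats every incorrect label iff this holds at $\max_r v_r$. Equality gives a tie with a maximizing label, which proves \eqref{eq:soft_cs_threshold} together with its equality clause.

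\textbf{PoE threshold.} Work with log-products, since normalization is common to all labels and does not affect comparisons. The true label gets $k\log a$. Label $r$ gets
\[
n_r\log b+(k-n_r)\log c=k\log c+kv_r\log(b/c).
\]
Comparing, the true label wins iff $v_r\log(b/c)<\log(a/c)$. Because $b>c$ the coefficient is positive, so this is $v_r<\log(a/c)/\log(b/c)$, and the base of the logarithm cancels. Taking the maximum over $r$ gives \eqref{eq:soft_poe_threshold}, and equality is again a tie.

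The only delicate point is the pair count in the JSD identity: the number of unordered pairs, not ordered ones, must be matched to $\binom k2$. Everything else is linear or monotone comparison. I should also note that $\sum_r v_r=1$ is used only for the second form of the identity, not for the thresholds.
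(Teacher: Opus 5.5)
Your proposal is correct and follows the paper's proof essentially step for step. Both arguments use the pairwise identity $\JSD_2(p_i,p_j)=d\,\ind\{w_i\ne w_j\}$, a count of disagreeing pairs (you count ordered pairs, while the paper computes $\binom{k}{2}-\sum_{r\ne y}\binom{n_r}{2}$, which is the same thing), and direct score comparisons: $a$ versus $c+(b-c)v_r$ for Choice-Soft, and $k\log a$ versus $k\log c+kv_r\log(b/c)$ for PoE.

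One small correction to your closing remark. Passing from $n_r$ to $v_r=n_r/k$ in the second form needs no normalization. The fact $\sum_{r\ne y}v_r=1$, i.e.\ every member has some $w_i\ne y$, is what makes $k^2-\sum_{r\ne y}n_r^2$ the correct count in the first form.
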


\begin{proof}
When $w_i=w_j$, the two distributions coincide. When $w_i\ne w_j$, they differ
only at these two incorrect labels, with probabilities $(b,c)$ and $(c,b)$.
Substitution into \eqref{eq:soft_js_bits} gives
\begin{equation}
    \JSD_2(p_i,p_j)=d\,\ind\{w_i\ne w_j\}.
    \label{eq:soft_pair_indicator}
\end{equation}
The number of unordered pairs with different incorrect preferred choices is
\[
    \binom{k}{2}-\sum_{r\ne y}\binom{n_r}{2}
    =\frac{k^2-\sum_{r\ne y}n_r^2}{2},
\]
which proves \eqref{eq:soft_js_occupancy}.

Choice-Soft assigns probability $a$ to the true label and
$c+(b-c)v_r$ to incorrect label $r$. Its margin against its strongest
incorrect competitor is therefore
\begin{equation}
    a-c-(b-c)\max_{r\ne y}v_r.
    \label{eq:soft_cs_margin}
\end{equation}
This margin is positive exactly under \eqref{eq:soft_cs_threshold}.
For PoE, the unnormalized scores are $a^k$ for the true label and
$b^{n_r}c^{k-n_r}$ for incorrect label $r$. The base-two log score ratio, divided by $k$,
against the strongest incorrect competitor is
\begin{equation}
    \log_2(a/c)-\max_{r\ne y}v_r\log_2(b/c).
    \label{eq:soft_poe_margin}
\end{equation}
Since $b>c>0$, this expression is positive exactly under
\eqref{eq:soft_poe_threshold}. The common positive PoE normalizer does not affect
the comparison. These expressions also establish the statements about ties.
\end{proof}

The divergence records the squared concentration $\sum_r v_r^2$ of incorrect
preferred choices, whereas recovery depends on their largest concentration
$\max_r v_r$. This distinction explains both the information supplied by
choice-space divergence and the information lost by averaging it. Within this
family, the PoE threshold is larger than the Choice-Soft threshold: strict
concavity of $\log_2$ gives
\[
    \frac{\log_2(a/c)}{\log_2(b/c)}>\frac{a-c}{b-c}
    \qquad(c<a<b).
\]

\paragraph{Three-member recovery and sharp bounds.}
\begin{proposition}[Distributional heterogeneity and recovery]
\label{prop:soft_recovery}
Let $k=3$ and suppose \eqref{eq:soft_template} holds on the event $H$ that
all three members predict incorrectly, with $\Pr(H)>0$ and $a>(2b+c)/3$.
Define $U=\E[j_S(X)\mid H]/d$, with $j_S,d$ from
\eqref{eq:soft_instance_js} and \eqref{eq:soft_pair_scale}.
For either Choice-Soft or PoE, the conditional recovery probability
$G=\Pr(\hat y_S=Y\mid H)$ satisfies the sharp bounds
\begin{equation}
    U\le G\le\min\{1,3U/2\}.
    \label{eq:recovery_main}
\end{equation}
Both aggregators recover exactly when the three preferred wrong answers are
not all identical.
\end{proposition}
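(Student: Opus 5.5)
Conditional on $H$, the proof reduces to counting the three possible occupancy patterns of the wrong answers, using Lemma~\ref{lem:soft_occupancy} with $k=3$. On $H$ the multiset $(n_r)_{r\ne y}$ takes one of three forms: all three members share a wrong answer, giving $(3)$; two share and one differs, giving $(2,1)$; or all differ, giving $(1,1,1)$. By \eqref{eq:soft_js_occupancy} with $k=3$,
\[
j_S/d=\tfrac{9-\sum n_r^2}{6},
\]
which equals $0$, $2/3$ and $1$ respectively.

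\textbf{Recovery on each pattern.} The maximal occupancies are $v_{\max}=1$, $2/3$ and $1/3$.
\begin{itemize}
\item When $v_{\max}=1$, both aggregators fail, since $(a-c)/(b-c)<1$ and $\log_2(a/c)<\log_2(b/c)$.
\item When $v_{\max}\le 2/3$, Choice-Soft recovers, because the hypothesis $a>(2b+c)/3$ is equivalent to $(a-c)/(b-c)>2/3$.
\item PoE also recovers, because its threshold strictly exceeds the Choice-Soft threshold by the concavity remark following Lemma~\ref{lem:soft_occupancy}.
\end{itemize}
This establishes the final sentence of the proposition: both aggregators recover exactly when the three preferred wrong answers are not all identical.

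\textbf{The two bounds.} Write $\pi_3,\pi_{21},\pi_{111}$ for the conditional probabilities of the three patterns given $H$. These are nonnegative and sum to one. From the values above,
\[
G=\pi_{21}+\pi_{111},\qquad U=\tfrac23\pi_{21}+\pi_{111}.
\]
The lower bound is $G-U=\pi_{21}/3\ge0$. For the upper bound, $\tfrac32U=\pi_{21}+\tfrac32\pi_{111}\ge G$, and $G\le1$ holds trivially.

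\textbf{Sharpness.} Each bound is attained by an explicit choice of pattern weights:
\begin{itemize}
\item $U=G$ holds whenever $\pi_{21}=0$, i.e.\ on any mixture of patterns $(3)$ and $(1,1,1)$.
\item $G=3U/2$ holds whenever $\pi_{111}=0$, i.e.\ on any mixture of patterns $(3)$ and $(2,1)$, covering all $U\le2/3$.
\item For $U\in(2/3,1]$, take a mixture of patterns $(2,1)$ and $(1,1,1)$ only. Then $G=1$, so $G=\min\{1,3U/2\}$.
\end{itemize}
Every pattern is realizable because $L\ge4$ leaves at least three incorrect labels. For intermediate values, moving weight continuously between these configurations fills the interval between the two bounds.

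The argument is mostly bookkeeping. The one step needing care is the threshold equivalence $a>(2b+c)/3 \iff (a-c)/(b-c)>2/3$, together with confirming that the PoE threshold, which the concavity remark places above the Choice-Soft one, also stays below $1$. That last fact holds because $a<b$ implies $\log_2(a/c)<\log_2(b/c)$.
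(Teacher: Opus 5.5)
Your proof is correct and follows the paper's argument essentially step for step. It uses the same three occupancy patterns with mean divergences $0$, $2d/3$, $d$, the same linear expressions $G=\pi_{21}+\pi_{111}$ and $U=\tfrac23\pi_{21}+\pi_{111}$, and the same extremal mixtures for sharpness. The one minor difference is that you derive recovery from the thresholds of Lemma~\ref{lem:soft_occupancy} plus the concavity remark, whereas the paper argues directly ($b>a$ in the unanimous case, and $a^3>((2b+c)/3)^3\ge b^2c$ by AM--GM for PoE); the two routes are equivalent.
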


\begin{proof}[Proof of Proposition~\ref{prop:soft_recovery}]
Condition on $H$, the event that all three members predict incorrectly. If all
three incorrect preferred choices coincide, that choice receives score $b>a$
under Choice-Soft and unnormalized score $b^3>a^3$ under PoE, so both aggregators
fail. Otherwise, no incorrect label is preferred by more than two members.
The condition $a>(2b+c)/3$ makes the Choice-Soft margin strictly positive even
for an incorrect label preferred by two members. Moreover,
\[
    a^3>\left(\frac{2b+c}{3}\right)^3\ge b^2c,
\]
so PoE also recovers in this case. Thus both aggregators recover on $H$ exactly
when the three incorrect preferred choices are not all equal.

Let $r_3$, $r_{21}$, and $r_{111}$ denote, conditional on $H$, the probabilities
of all three choices being equal, exactly two being equal, and all three being
different, respectively. They are nonnegative and sum to one.
Equation~\eqref{eq:soft_pair_indicator} gives mean pairwise divergences $0$,
$2d/3$, and $d$ in these three cases. Consequently, the quantities $U$ and $G$
in the proposition satisfy
\begin{equation}
    U=\frac23r_{21}+r_{111},\qquad
    G=r_{21}+r_{111}.
    \label{eq:soft_three_patterns}
\end{equation}
In particular,
\[
    G-U=\frac13r_{21}\ge0,\qquad
    \frac32U-G=\frac12r_{111}\ge0,
\]
and $G\le1$, proving $U\le G\le\min\{1,3U/2\}$.

Both endpoints are attainable for every $U\in[0,1]$. The lower endpoint follows
from $(r_3,r_{21},r_{111})=(1-U,0,U)$. For the upper endpoint, take
\begin{equation}
    (r_3,r_{21},r_{111})=
    \begin{cases}
        (1-3U/2,\,3U/2,\,0), & 0\le U\le2/3,\\
        (0,\,3(1-U),\,3U-2), & 2/3\le U\le1.
    \end{cases}
    \label{eq:soft_sharp_constructions}
\end{equation}
There are at least three incorrect labels because $L\ge4$, so all three
patterns can be realized. These constructions use the same $a,b,c$ and can
be combined with any fixed distribution of predictions outside $H$.

Finally, changing the joint pattern of $(w_1,w_2,w_3)$ on $H$ leaves each
correctness indicator equal to zero there. With $H$ and all predictions outside
$H$ fixed, the entire correctness vector is unchanged. Hence every member's
accuracy and every pair's correctness contingency table are unchanged, as is
Yule's $Q$, including the zero-denominator convention in \eqref{eq:yule_q}.
\end{proof}

\paragraph{Relation to overall accuracy and the profiled divergence.}
Let $\tau=\Pr(H)>0$, and write
\begin{equation}
    J_{\mathrm{all}}=\E[j_S(X)],\qquad
    J_{\mathrm{out}}=\E[\ind\{H^c\}j_S(X)].
    \label{eq:soft_overall_js_def}
\end{equation}
Here $J_{\mathrm{all}}$ is precisely the population version of
$\HIdist(S;J^t)$ for the fixed task distribution. For aggregator
$\mathcal{A}\in\{\textsc{cs},\textsc{poe}\}$, define
$A_{\mathrm{out}}^{\mathcal{A}}=
\Pr(H^c,\hat y_{\mathcal{A}}(X)=Y)$ and let
$\operatorname{Acc}_{\mathcal{A}}=\Pr(\hat y_{\mathcal{A}}(X)=Y)$.
Partitioning by $H$ gives
\begin{equation}
    J_{\mathrm{all}}=J_{\mathrm{out}}+\tau dU,\qquad
    \operatorname{Acc}_{\mathcal{A}}
        =A_{\mathrm{out}}^{\mathcal{A}}+\tau G.
    \label{eq:soft_overall_decomposition}
\end{equation}
The sharp conditional bounds therefore imply
\begin{equation}
\begin{aligned}
    A_{\mathrm{out}}^{\mathcal{A}}
       +\frac{J_{\mathrm{all}}-J_{\mathrm{out}}}{d}
    &\le \operatorname{Acc}_{\mathcal{A}}\\
    &\le A_{\mathrm{out}}^{\mathcal{A}}
       +\min\!\left\{\tau,
          \frac{3(J_{\mathrm{all}}-J_{\mathrm{out}})}{2d}\right\}.
\end{aligned}
    \label{eq:soft_overall_bounds}
\end{equation}
These bounds remain sharp when the predictions outside $H$ are fixed.

For example, suppose that with probability $q\in(0,1)$ all three members output
the same positive distribution whose unique preferred choice is correct, and
the remaining probability $1-q$ follows the model on $H$. Then $q_i=q$ and
$Q_{ij}=1$ for every pair, while $J_{\mathrm{out}}=0$ and
$A_{\mathrm{out}}^{\mathcal{A}}=q$. Equation~\eqref{eq:soft_overall_bounds}
reduces to
\begin{equation}
    q+\frac{J_{\mathrm{all}}}{d}
    \le \operatorname{Acc}_{\mathcal{A}}
    \le q+\min\!\left\{1-q,\frac{3J_{\mathrm{all}}}{2d}\right\}.
    \label{eq:soft_identical_background}
\end{equation}
Thus the existing choice-space divergence can quantify recovery opportunities
while the full correctness profile is held fixed. The calculation applies
task by task; the pooled $\HIdist$ is the arithmetic mean of the
corresponding taskwise divergences.

\paragraph{Stability under perturbations of member distributions.}
The recovery mechanism persists when the members' probabilities differ from
the common template. We use total variation in the convention
$\operatorname{TV}(p,q)=\tfrac12\|p-q\|_1$.

\begin{corollary}[Perturbed soft recovery]
\label{cor:soft_recovery_robust}
Let $k=3$, and let $p_i^0$ have the form \eqref{eq:soft_template} on an event
$H$ of positive probability. Suppose the actual distributions $p_i$ satisfy,
almost surely on $H$,
\begin{equation}
    \operatorname{TV}(p_i,p_i^0)\le\varepsilon
    \quad(i=1,2,3),\qquad
    0\le\varepsilon<c,\qquad b-a>2\varepsilon,
    \label{eq:soft_perturbation_assumptions}
\end{equation}
and
\begin{equation}
    a-\frac{2b+c}{3}>2\varepsilon.
    \label{eq:soft_perturbation_cs}
\end{equation}
Each member still has unique preferred choice $w_i\ne y$, and both Choice-Soft
and PoE recover on $H$ exactly when $(w_1,w_2,w_3)$ are not all equal. Let $G$
be this conditional recovery probability and set
\begin{equation}
    J_H=\E\!\left[
        \frac13\sum_{i<j}\JSD_2(p_i,p_j)\,\middle|\,H\right],
    \qquad
    \eta=2\varepsilon\log_2\frac{1}{c-\varepsilon}.
    \label{eq:soft_perturbed_js}
\end{equation}
With $d$ as in \eqref{eq:soft_pair_scale},
\begin{equation}
    \max\!\left\{0,\frac{J_H-\eta}{d}\right\}
    \le G\le
    \min\!\left\{1,\frac{3(J_H+\eta)}{2d}\right\}.
    \label{eq:soft_perturbed_bounds}
\end{equation}
For PoE alone, the conclusions hold with \eqref{eq:soft_perturbation_cs}
replaced by
\begin{equation}
    (a-\varepsilon)^3>(b+\varepsilon)^2(c+\varepsilon).
    \label{eq:soft_perturbation_poe}
\end{equation}
\end{corollary}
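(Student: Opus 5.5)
The plan has three steps: show the preferred answers survive perturbation, show the recovery event is unchanged, then transfer the sharp bounds of Proposition~\ref{prop:soft_recovery} through a JSD perturbation estimate.

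\emph{Preferred answers.} Under $\operatorname{TV}(p_i,p_i^0)\le\varepsilon$, every coordinate moves by at most $\varepsilon$. On $H$, coordinate $w_i$ of $p_i$ is therefore at least $b-\varepsilon$. The true label gets at most $a+\varepsilon$, and the other incorrect labels get at most $c+\varepsilon$. Since $b-a>2\varepsilon$ and $c<a$, the choice $w_i$ stays the unique argmax.

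\emph{Recovery event.} I would repeat the case analysis in the proof of Proposition~\ref{prop:soft_recovery} with perturbed bounds. If all three $w_i$ coincide, that label scores at least $b-\varepsilon$ per member, while the true label scores at most $a+\varepsilon$. Since $b-\varepsilon>a+\varepsilon$, both Choice-Soft and PoE fail; for PoE, compare the products of these per-member bounds. Otherwise no wrong label is preferred by more than two members. Under Choice-Soft the true label then scores at least $a-\varepsilon$, and any wrong label scores at most $(2(b+\varepsilon)+(c+\varepsilon))/3=(2b+c)/3+\varepsilon$, so \eqref{eq:soft_perturbation_cs} gives strict recovery. For PoE, the same condition together with AM--GM gives $(a-\varepsilon)^3>((2b+c)/3+\varepsilon)^3\ge(b+\varepsilon)^2(c+\varepsilon)$. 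Alternatively, \eqref{eq:soft_perturbation_poe} is assumed directly, which gives the PoE-only statement. Hence $G=r_{21}+r_{111}$ exactly as in \eqref{eq:soft_three_patterns}.

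\emph{Divergence transfer.} Once the recovery event is fixed, the bounds on $G$ follow from $U\le G\le\min\{1,3U/2\}$, where $U=(\frac23 r_{21}+r_{111})$ is the template value. It then suffices to show $|J_H-dU|\le\eta$. That gives $U\ge (J_H-\eta)/d$ and $U\le (J_H+\eta)/d$, hence \eqref{eq:soft_perturbed_bounds}; clipping at $0$ and $1$ is trivial.

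\emph{Main obstacle: the per-pair JSD bound.} The key technical step is
\[
\bigl|\JSD_2(p_i,p_j)-\JSD_2(p_i^0,p_j^0)\bigr|\le 2\varepsilon\log_2\frac1{c-\varepsilon}.
\]
Averaging it over pairs and conditioning on $H$ gives $|J_H-dU|\le\eta$.

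I would first try a gradient bound. Along the segment from $(p_i^0,p_j^0)$ to $(p_i,p_j)$, all coordinates stay at least $c-\varepsilon>0$. The partial derivative $\partial\JSD_2/\partial p_r=\frac12\log_2\bigl(2p_r/(p_r+q_r)\bigr)$ has absolute value at most $\frac12\log_2\bigl(1/(c-\varepsilon)\bigr)$, up to the constant shift. That shift is harmless because perturbations preserve total mass, so the directional derivative kills constants. The displacement has total $\ell_1$ norm at most $2\varepsilon$ in each argument. Combining the two arguments then gives the factor $2\varepsilon\log_2\frac{1}{c-\varepsilon}$.

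Two points need care. First, the per-coordinate log ratio must be bounded by $\log_2\bigl(1/(c-\varepsilon)\bigr)$ rather than by something worse; I would use $2p_r/(p_r+q_r)\in\bigl[2(c-\varepsilon),\,2/(1+c-\varepsilon)\bigr]$. Second, centering the gradient should save the factor $1/2$ from the $\ell_1$-to-TV conversion. If the constant does not come out exactly, I would instead bound each of the two KL-to-midpoint terms separately, using the same mean-value argument.
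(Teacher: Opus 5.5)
Your proposal is correct and follows the paper's proof essentially step for step. Coordinatewise $\varepsilon$-control preserves each $w_i$ and the all-equal/not-all-equal recovery rule, with AM--GM linking the Choice-Soft condition to the PoE one; a gradient bound for $\JSD_2$ along the segment between the template and perturbed pairs then gives $|J_H-dU|\le\eta$, which transfers $U\le G\le\min\{1,3U/2\}$.

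One simplification: there is no constant shift and no centering is needed. The partial derivative $\partial\JSD_2/\partial p_r$ is exactly $\frac12\log_2\frac{2p_r}{p_r+q_r}$, and the plain H\"older bound $\frac12\log_2\frac{1}{c-\varepsilon}\cdot(2\varepsilon+2\varepsilon)$ already equals $\eta$. That is how the paper concludes; centering could only tighten the constant.
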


\begin{proof}
For probability distributions, a total-variation bound of $\varepsilon$ implies
an absolute difference of at most $\varepsilon$ in every coordinate. Therefore
\[
    p_i(w_i)\ge b-\varepsilon>a+\varepsilon\ge p_i(y),
\]
and $p_i(w_i)>p_i(r)$ for every other incorrect label $r$, since $a>c$.
Thus each $w_i$ remains the unique preferred choice. If all three $w_i$ agree,
their common label strictly exceeds the true label for every member, so both
the arithmetic mean and the product favor an incorrect label.

If the $w_i$ are not all equal, any incorrect label is preferred by at most two
members. Its Choice-Soft score is at most $(2b+c)/3+\varepsilon$, whereas the
true label's score is at least $a-\varepsilon$.
Condition~\eqref{eq:soft_perturbation_cs} makes the latter strictly larger.
For PoE, any incorrect label has unnormalized score at most
$(b+\varepsilon)^2(c+\varepsilon)$, while the true label has score at least
$(a-\varepsilon)^3$. All factors are positive because $\varepsilon<c<a$.
Condition~\eqref{eq:soft_perturbation_poe} therefore suffices. It is also
implied by \eqref{eq:soft_perturbation_cs}, since
\[
    a-\varepsilon>
    \frac{2(b+\varepsilon)+(c+\varepsilon)}3
    \ge\bigl((b+\varepsilon)^2(c+\varepsilon)\bigr)^{1/3}.
\]
This proves the stated recovery rule for both aggregators.

To bound the change in divergence, write $F(p,q)=\JSD_2(p,q)$ and
$m_\varepsilon=c-\varepsilon>0$. Along the straight line from any template
pair $(p^0,q^0)$ to its perturbed pair $(p,q)$, every coordinate of both
distributions belongs to $[m_\varepsilon,1]$. Direct differentiation gives
\begin{equation}
    \frac{\partial F}{\partial p_r}
        =\frac12\log_2\frac{2p_r}{p_r+q_r},
    \qquad
    \frac{\partial F}{\partial q_r}
        =\frac12\log_2\frac{2q_r}{p_r+q_r}.
    \label{eq:soft_js_gradient}
\end{equation}
On this line,
\[
    m_\varepsilon
    \le\frac{2p_r}{p_r+q_r}\le\frac1{m_\varepsilon},
\]
and the same inequalities hold after interchanging $p_r$ and $q_r$.
Consequently, both gradient vectors have infinity norm at most
$\tfrac12\log_2(1/m_\varepsilon)$. Integrating the directional derivative
along the line yields
\begin{align}
    |F(p,q)-F(p^0,q^0)|
    &\le\frac12\log_2\frac1{m_\varepsilon}
       \bigl(\|p-p^0\|_1+\|q-q^0\|_1\bigr)\notag\\
    &\le 2\varepsilon\log_2\frac1{c-\varepsilon}=\eta.
    \label{eq:soft_js_perturbation_bound}
\end{align}
Averaging over the three pairs and then conditioning on $H$ preserves this
bound. If $U$ denotes the normalized conditional divergence of the template
distributions, we obtain $|J_H-dU|\le\eta$.
The recovered cases are exactly the same occupancy patterns as for the
templates, so \eqref{eq:soft_three_patterns} gives
$U\le G\le\min\{1,3U/2\}$. Combining these inequalities with
$(J_H-\eta)/d\le U\le(J_H+\eta)/d$ proves
\eqref{eq:soft_perturbed_bounds}.
\end{proof}

Every template satisfying the strict condition in
Proposition~\ref{prop:soft_recovery} admits a positive perturbation radius:
any
\[
    0<\varepsilon<
    \min\!\left\{c,\frac{b-a}{2},
        \frac12\left(a-\frac{2b+c}{3}\right)\right\}
\]
satisfies the corollary. Hence recovery is stable on a neighborhood of member
distributions, with an explicit error allowance for the observed JSD.

\subsection{Stability of Multi-Start Selection}
\label{app:selection_stability}

We analyze two finite profiling outputs for the same target task, candidate set, team size,
and heterogeneity weights. Each output is standardized separately. Let $P=\binom{m}{2}$,
$L_r=\binom{r}{2}$, and $E(S)=\{\{i,j\}:i,j\in S,\ i<j\}$. Edge vectors contain the $P$
upper-triangular entries, each counted once; all edge norms below use this convention.
For a vector $x\in\mathbb{R}^d$, define population standardization by
\begin{equation}
    C_d=I_d-\frac{\mathbf{1}\mathbf{1}^{\top}}{d},\qquad
    \sigma(x)=\frac{\|C_dx\|_2}{\sqrt d},\qquad
    Z(x)=
    \begin{cases}
        C_dx/\sigma(x), & \sigma(x)>0,\\
        0, & \sigma(x)=0.
    \end{cases}
    \label{eq:selection_base_standardization}
\end{equation}
Thus $\|Z(x)\|_2=\sqrt d$ for every nonconstant vector. The corresponding quality and
combined edge signals are
\begin{equation}
    u_i=Z(q^t)_i,\qquad
    v_{ij}=\lambda_1Z(1-\bar R)_{ij}+\lambda_2Z(\bar J)_{ij},
    \label{eq:selection_standardized_signals}
\end{equation}
where matrix standardization acts on the upper-triangular vector. The quality statistics are
computed over the $m$ candidates, and each HI channel is standardized over its $P$ base pairs.
For $|S|=r\ge2$, the implemented objective in its active normalization regime is exactly
\begin{equation}
    F(S)=\frac{1}{\sqrt r}\sum_{i\in S}u_i
    +\frac{1}{\sqrt{L_r}}\sum_{e\in E(S)}v_e,
    \qquad F(\{i\})=u_i.
    \label{eq:selection_sum_form}
\end{equation}
In particular, $u$ always uses the target task's quality estimates. Write the second profiling
output as $u',v'$, its score as $F'$, and the changes as
$\delta u=u'-u$, $\delta v=v'-v$, and $\delta F(S)=F'(S)-F(S)$.

\paragraph{Normalization in the implementation.}
The default normalizer uses population standard deviation (\texttt{std\_ddof=0}). It sets a
term to zero when $\sigma/\sqrt{n_{\rm items}}\le10^{-12}$. Hence
\eqref{eq:selection_sum_form} agrees with the numerical implementation when constant
channels are zero and every nonconstant channel remains active at each relevant team size.
Sufficient conditions are $\sigma(q^t)>10^{-12}\sqrt{k}$ for a nonconstant quality channel
and $\sigma(h)>10^{-12}\sqrt{L_k}$ for either nonconstant pooled HI channel, in each
profiling output. A channel suppressed at every relevant size can instead be represented
by zero in the effective profile. If a channel changes activation with team size, the same
comparison argument below applies to the effective size-dependent vectors
$u^{(r)},v^{(r)}$, using their perturbations at size $r=s+1$ for an extension and at size
$r=k$ for the final comparison.

\begin{proposition}[Stability of multi-start selection]
\label{prop:selection_stability}
Fix the candidates, $2\le k\le m$, weights, and tie-breaking, with each
nonconstant standardization denominator above the numerical cutoff at every
team size in both profiles. At a reference state of size $s<k$, the change
in the score difference between any two candidate extensions is at most
\begin{equation}
    B_s=\frac{\sqrt{2}\|\delta u\|_2+2\|\delta v\|_2}{\sqrt{s+1}}.
    \label{eq:stability_main}
\end{equation}
If every winning extension in every seed's reference path leads its alternatives
by more than $B_s$, and the final winner leads every other distinct terminal
team by more than $\sqrt2(\|\delta u\|_2+\|\delta v\|_2)$, all paths and the
returned team remain unchanged.
\end{proposition}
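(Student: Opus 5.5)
The proof rests on the sum form \eqref{eq:selection_sum_form}, which is linear in $(u,v)$. For a fixed team $S$ of size $r\ge2$,
\[
\delta F(S)=\frac{1}{\sqrt r}\sum_{i\in S}\delta u_i+\frac{1}{\sqrt{L_r}}\sum_{e\in E(S)}\delta v_e .
\]
The plan is to bound the change in a \emph{difference} of two scores, not each score separately, because shared members and shared edges cancel.

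\textbf{Extension step.} Fix a reference state $T$ with $|T|=s$, and compare two extensions $T\cup\{j\}$ and $T\cup\{l\}$, both of size $r=s+1$. Every node and edge inside $T$ cancels, so
\[
\delta F(T\cup\{j\})-\delta F(T\cup\{l\})=\frac{\delta u_j-\delta u_l}{\sqrt{s+1}}+\frac{1}{\sqrt{L_{s+1}}}\sum_{i\in T}\bigl(\delta v_{ij}-\delta v_{il}\bigr).
\]
The node term is handled by Cauchy--Schwarz on the vector $e_j-e_l$, giving $\sqrt2\|\delta u\|_2/\sqrt{s+1}$.

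The edge term involves $2s$ distinct edges with coefficients $\pm1$. Cauchy--Schwarz gives at most $\sqrt{2s}\,\|\delta v\|_2/\sqrt{L_{s+1}}$. Since $L_{s+1}=s(s+1)/2$, this equals $2\|\delta v\|_2/\sqrt{s+1}$. The two pieces together are exactly $B_s$.

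For $s=0$ the size-one formula $F(\{i\})=u_i$ is used instead. There the bound is $\sqrt2\|\delta u\|_2\le B_0$. Seeds, however, are fixed rather than compared, so this case is only formal.

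\textbf{Induction along paths.} Tie-breaking is fixed, and each winning extension leads every alternative by more than $B_s$ in the reference profile. After perturbation its lead therefore stays strictly positive, so the same extension wins. Inducting on $s$ within each seed's path shows every greedy path is identical, and hence the terminal teams are unchanged.

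\textbf{Final comparison.} Let $S$ and $S'$ be distinct terminal teams of size $k$. Bound $|\delta F(S)-\delta F(S')|$ by splitting into the symmetric differences of nodes and of edges.
\begin{itemize}
\item \emph{Nodes.} The coefficient vector is $(\mathbf 1_S-\mathbf 1_{S'})/\sqrt k$, with squared norm $|S\triangle S'|/k\le 2$. Its contribution is at most $\sqrt2\|\delta u\|_2$.
\item \emph{Edges.} The coefficient vector has squared norm $|E(S)\triangle E(S')|/L_k\le 2$. Its contribution is at most $\sqrt2\|\delta v\|_2$.
\end{itemize}
The final winner's margin exceeds $\sqrt2(\|\delta u\|_2+\|\delta v\|_2)$, so it remains the strict argmax among terminal teams. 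A tie cannot arise, because the margin is strict.

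\textbf{Main obstacle.} The hardest part is justifying that the implemented objective coincides with \eqref{eq:selection_sum_form} in both profiles and at every size encountered. I would argue this from the hypothesis that every nonconstant denominator stays above the cutoff.

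The argument goes as follows. The $z$-score of an average over $n$ items equals $\sum_{\text{items}}(x-\mu)/(\sigma\sqrt n)$, which is a sum of standardized base values divided by $\sqrt n$. Summing the two HI channels with weights $\lambda_1,\lambda_2$ then gives $v$.

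Finally, note that $\delta u$ and $\delta v$ are perturbations of the \emph{standardized} vectors. The bounds therefore need no Lipschitz estimate for the map $Z$ itself.
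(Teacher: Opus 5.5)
Your proposal is correct and follows essentially the same route as the paper: the shared members and edges cancel in extension comparisons, Cauchy--Schwarz on the $2$ node and $2s$ edge coefficients gives $B_s$, induction runs along each seed's path, and the final comparison over distinct terminal teams is bounded by $\sqrt2(\|\delta u\|_2+\|\delta v\|_2)$. The only cosmetic difference is that the paper first states the overlap-sensitive bound $\sqrt{2-2r/k}\,\|\delta u\|_2+\sqrt{2-2L_r/L_k}\,\|\delta v\|_2$ with $r=|S\cap S'|$ before relaxing it, which your symmetric-difference count yields equally.
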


\begin{proof}[Proof of Proposition~\ref{prop:selection_stability}]
\emph{Candidate comparisons.}
At a reference state $S$ with $|S|=s<k$, consider two remaining candidates $j$ and $\ell$.
Define
\[
    G_S(j,\ell)=F(S\cup\{j\})-F(S\cup\{\ell\}).
\]
Both extended teams have size $s+1$. Expanding \eqref{eq:selection_sum_form} cancels the
quality of the existing members and all edges within $S$, giving
\[
    G_S(j,\ell)=\frac{u_j-u_\ell}{\sqrt{s+1}}
    +\frac{\sum_{i\in S}(v_{ij}-v_{i\ell})}{\sqrt{L_{s+1}}}.
\]
Comparing marginal gains gives the same expression, since the current score $F(S)$ also
cancels. Subtracting the corresponding expression for the two profiles yields the exact
perturbation
\begin{equation}
    \delta G_S(j,\ell)
    =\frac{\delta u_j-\delta u_\ell}{\sqrt{s+1}}
    +\frac{\sum_{i\in S}(\delta v_{ij}-\delta v_{i\ell})}{\sqrt{L_{s+1}}}.
    \label{eq:selection_extension_perturbation}
\end{equation}
The quality coefficient vector has two nonzero entries, each of magnitude
$1/\sqrt{s+1}$, and therefore norm $\sqrt{2/(s+1)}$. The edge coefficient vector has
$2s$ nonzero entries: the $s$ edges connecting $j$ to $S$ and the $s$ edges connecting
$\ell$ to $S$. These edges are distinct because $j,\ell\notin S$. Its norm is
\[
    \sqrt{\frac{2s}{L_{s+1}}}=\frac{2}{\sqrt{s+1}}.
\]
Cauchy--Schwarz and the triangle inequality therefore give
\begin{equation}
    |\delta G_S(j,\ell)|
    \le \frac{\sqrt2\|\delta u\|_2+2\|\delta v\|_2}{\sqrt{s+1}}
    = B_s.
    \label{eq:selection_extension_bound}
\end{equation}

\emph{Complete-team comparisons.}
For two size-$k$ teams $A,B$, subtraction similarly gives
\begin{equation}
    \begin{aligned}
        \delta\bigl(F(A)-F(B)\bigr)
        ={}&\frac{\sum_{i\in A\setminus B}\delta u_i
            -\sum_{i\in B\setminus A}\delta u_i}{\sqrt k}\\
        &+\frac{\sum_{e\in E(A)\setminus E(B)}\delta v_e
            -\sum_{e\in E(B)\setminus E(A)}\delta v_e}{\sqrt{L_k}}.
    \end{aligned}
    \label{eq:selection_complete_perturbation}
\end{equation}
Let $r=|A\cap B|$. The quality difference has $2(k-r)$ nonzero coefficients.
Moreover, $E(A)\cap E(B)=E(A\cap B)$, so each team has $L_k-L_r$ edges absent from the
other. The same norm calculation gives the overlap-sensitive bound
\begin{equation}
    \begin{aligned}
        \left|\delta\bigl(F(A)-F(B)\bigr)\right|
        &\le \sqrt{2-\frac{2r}{k}}\,\|\delta u\|_2
        +\sqrt{2-\frac{2L_r}{L_k}}\,\|\delta v\|_2\\
        &\le \sqrt2\bigl(\|\delta u\|_2+\|\delta v\|_2\bigr)
        =B_{\rm fin}.
    \end{aligned}
    \label{eq:selection_overlap_bound}
\end{equation}

\emph{Preservation of every seed run.}
Fix one seed. Both runs start with the same singleton. Suppose inductively that they have
reached the same reference state $S$, and let $j_S$ be the reference winner at this state.
For every competing candidate $\ell$, the assumed reference margin and
\eqref{eq:selection_extension_bound} imply
\[
    G'_S(j_S,\ell)
    =G_S(j_S,\ell)+\delta G_S(j_S,\ell)
    \ge G_S(j_S,\ell)-B_s>0.
\]
The perturbed run therefore chooses $j_S$ as well. Induction to size $k$ proves that this
seed reaches the same terminal team. Applying the argument to every seed preserves all
terminal teams.

Let $\mathcal C$ be the set of \emph{distinct} terminal teams in the reference run, and let
$S_{\rm g}$ be its returned team. Repeated occurrences of the same team among different
seeds do not create competing teams. For any $T\in\mathcal C\setminus\{S_{\rm g}\}$, the
assumed final margin and \eqref{eq:selection_overlap_bound} give
\[
    F'(S_{\rm g})-F'(T)
    \ge F(S_{\rm g})-F(T)-B_{\rm fin}>0.
\]
Thus the perturbed multi-start run returns $S_{\rm g}$. If there is only one distinct
terminal team, this final condition is vacuous. The induction follows exactly the seed
paths and terminal-team comparisons performed by Algorithm~\ref{alg:greedy}.
\end{proof}

\paragraph{A sharper certificate using signed comparisons.}
The norm bounds can be replaced by the exact projected perturbations. At a reference
state $S$, write
\[
    w_S(j)=\frac{\delta u_j}{\sqrt{s+1}}
        +\frac{\sum_{i\in S}\delta v_{ij}}{\sqrt{L_{s+1}}},
    \qquad \gamma_S(\ell)=G_S(j_S,\ell).
\]
The reference winner remains a strict winner at this state if and only if
\begin{equation}
    \max_{\ell\in\mathcal M\setminus(S\cup\{j_S\})}
    \bigl\{w_S(\ell)-w_S(j_S)-\gamma_S(\ell)\bigr\}<0.
    \label{eq:selection_signed_extension}
\end{equation}
Indeed, the expression inside the braces is exactly
$F'(S\cup\{\ell\})-F'(S\cup\{j_S\})$. The corresponding final condition is
\begin{equation}
    \max_{T\in\mathcal C\setminus\{S_{\rm g}\}}
    \bigl\{\delta F(T)-\delta F(S_{\rm g})
        -[F(S_{\rm g})-F(T)]\bigr\}<0.
    \label{eq:selection_signed_final}
\end{equation}
An empty maximum is $-\infty$. These conditions retain cancellation between quality and
HI perturbations and require only reference-winner comparisons. Applying them at every
reference state, followed by \eqref{eq:selection_signed_final}, gives the same induction.
Repeated reference states can share a single condition. Strict inequalities remove tie
ambiguity; a non-strict comparison is sufficient when a retained tie is resolved in favor
of the reference choice by the fixed priority rule.

Both versions are deterministic statements about finite profiles, including estimates
that share development examples or models. Their conditions preserve the selected team
relative to the reference profile.

\paragraph{Piecewise-constant selection as weights vary.}
Fix the profiling outputs and vary $(\lambda_1,\lambda_2)\in\mathbb R_{\ge0}^2$. Every
candidate comparison and complete-team comparison is affine in these two weights. There
are finitely many possible states and teams, so the nontrivial comparison equalities form
a finite collection of lines. Within each open cell of their arrangement, all comparison
signs are fixed. Induction over the seed runs and the final comparison then shows that the
returned team is constant throughout the cell. Identically tied comparisons are handled
by the fixed priority rule. A team's selection region may be a union of cells and need
not be convex. This accounts for constant-selection regions in a weight sweep; one color
in an accuracy heatmap can also represent several teams with the same accuracy.

\subsection{Pooling Geometry and Selection Perturbations}
\label{app:pooling_geometry}

For each source task $\tau\in\mathcal T$, collect the upper-triangular HI entries as
\[
    h_1^\tau=(1-R_{ij}^\tau)_{i<j},\qquad
    h_2^\tau=(J_{ij}^\tau)_{i<j},\qquad
    \bar h_\ell=\frac1{|\mathcal T|}\sum_{\tau\in\mathcal T}h_\ell^\tau,
    \quad \ell\in\{1,2\}.
\]
Thus $\bar h_1=1-\bar R$ follows from averaging the task-level Yule-$Q$ statistics in
\eqref{eq:all_pooling}. In this notation the selector uses
$v=\lambda_1Z(\bar h_1)+\lambda_2Z(\bar h_2)$ while retaining $u=Z(q^t)$ from the target
task. All identities below use population standard deviation as in
\eqref{eq:selection_base_standardization}. Their use in the numerical selector inherits the
active-channel conditions above; a numerically suppressed channel contributes zero to its
effective profile.

\paragraph{The direction of the standardized pool.}
For one HI channel, abbreviate $h^\tau=h_\ell^\tau$, and set
\[
    \sigma_\tau=\sigma(h^\tau),\qquad
    z_\tau=Z(h^\tau),\qquad
    W=\sum_{\tau\in\mathcal T}\sigma_\tau z_\tau.
\]
Constant task vectors have $\sigma_\tau=0$ and contribute zero to $W$. If $W\ne0$, then
\begin{equation}
    Z(\bar h)=\sqrt P\,\frac{W}{\|W\|_2}.
    \label{eq:pooling_standardized_direction}
\end{equation}
To prove the identity, center the average using $C_P$ from
\eqref{eq:selection_base_standardization}:
\[
    C_P\bar h
    =\frac1{|\mathcal T|}\sum_{\tau\in\mathcal T}C_Ph^\tau
    =\frac{W}{|\mathcal T|},\qquad
    \sigma(\bar h)=\frac{\|W\|_2}{|\mathcal T|\sqrt P}.
\]
Their ratio gives \eqref{eq:pooling_standardized_direction}. If $W=0$, the pooled vector
is constant and its standardized value is zero. Equal weighting of the raw task vectors
therefore combines their standardized directions with weights proportional to their
within-task dispersion across model pairs. Task-wide offsets vanish under centering,
whereas opposing centered directions can cancel in $W$.

\paragraph{Positive-affine invariance.}
Suppose, for each channel $\ell$, the pooled and target-task vectors obey
\begin{equation}
    \bar h_\ell=\alpha_\ell h_\ell^t+\beta_\ell\mathbf1,
    \qquad \alpha_\ell>0.
    \label{eq:pooling_positive_affine}
\end{equation}
Then $Z(\bar h_\ell)=Z(h_\ell^t)$. For a nonconstant vector this follows because
$C_P\bar h_\ell=\alpha_\ell C_Ph_\ell^t$ and
$\sigma(\bar h_\ell)=\alpha_\ell\sigma(h_\ell^t)$; if the target vector is constant,
both standardized vectors are zero. Consequently, selection with pooled HI and selection
with target-task HI have identical scores at every team size when both use the same
$q^t$. Their greedy paths and returned teams are identical, including ties resolved by the
same rule. This identity applies to the numerical implementation when the scaling
preserves the active or zero status of each channel at every relevant team size.

A sufficient source-level condition for \eqref{eq:pooling_positive_affine} is
$h_\ell^\tau=\alpha_{\tau\ell}h_\ell^\star+\beta_{\tau\ell}\mathbf1$ with
$\alpha_{\tau\ell}>0$ for the source tasks and the target. Averaging preserves this
common centered direction. The condition concerns each channel separately and imposes
no sharing requirement on target qualities $q^t$. Different target tasks can therefore
select different teams. Positive-affine agreement preserves standardized spacings as
well as pair order; agreement of pair order alone does not preserve sums of edge values.

\paragraph{Direction changes bound the effective edge perturbation.}
Consider any two nonconstant vectors $h,h'\in\mathbb R^P$, and let
\[
    \rho(h,h')=\frac{Z(h)^\top Z(h')}{P}.
\]
Their standardized norms are both $\sqrt P$, so expansion of the squared distance gives
\begin{equation}
    \|Z(h')-Z(h)\|_2^2=2P\bigl(1-\rho(h,h')\bigr).
    \label{eq:pooling_direction_distance}
\end{equation}
When the pooled vectors are nonconstant in both profiles for each channel, let
$d_\ell=Z(\bar h'_\ell)-Z(\bar h_\ell)$ and
$\rho_\ell=\rho(\bar h_\ell,\bar h'_\ell)$. Then
\begin{equation}
    \delta v=\lambda_1d_1+\lambda_2d_2,\qquad
    \|\delta v\|_2
    \le\sqrt{2P}\sum_{\ell=1}^2\lambda_\ell\sqrt{1-\rho_\ell}.
    \label{eq:pooling_edge_perturbation}
\end{equation}
The direct norm $\|\lambda_1d_1+\lambda_2d_2\|_2$ in the selection proposition retains
any cancellation between channels. If a channel is constant, the exact difference
$d_\ell$ remains defined by the zero convention, without requiring a correlation value.
Changing only the HI pool leaves $\delta u=0$; re-estimating the development profiles can
also change the target quality vector, in which case its $\delta u$ term is retained.

The pool-to-target alignment can also be expressed using task-to-task alignments. For
one channel, let $\rho_{\tau\nu}=z_\tau^\top z_\nu/P$ for nonconstant task vectors.
For a nonconstant target vector and $W\ne0$, substitution into
\eqref{eq:pooling_standardized_direction} gives
\begin{equation}
    \rho(h^t,\bar h)
    =\frac{\sum_\tau\sigma_\tau\rho_{t\tau}}
    {\sqrt{\sum_{\tau,\nu}\sigma_\tau\sigma_\nu\rho_{\tau\nu}}},
    \label{eq:pooling_alignment_decomposition}
\end{equation}
where zero-dispersion source vectors are omitted from the sums. Indeed,
$z_t^\top W=P\sum_\tau\sigma_\tau\rho_{t\tau}$ and
$\|W\|_2^2=P\sum_{\tau,\nu}\sigma_\tau\sigma_\nu\rho_{\tau\nu}$.
Equations~\eqref{eq:pooling_direction_distance}--\eqref{eq:pooling_alignment_decomposition}
connect task agreement to the perturbation that enters the actual selection comparisons.
Entrywise variance reduction under resampling is a different quantity: after
standardization, stability also depends on the pooled direction and the decision margins.

\subsection{From Selection Scores to Accuracy Gains}
\label{app:selection_gain}

We connect the accuracy comparison in \Cref{prop:quality_only_gain} to the
quality and HI terms in the implemented objective. The connection separates
the curvature of the Q-to-error relation, the choice of objective weights,
and the change from a target-task profile to a pooled development profile.
All teams in this subsection have size three and follow \eqref{eq:joint_model}.

\paragraph{From Yule's Q to a positive three-term score.}
Write $e_i=1-q_i$. For a pair with error probabilities $x,z$ and Yule's
coefficient $r$, let $f(x,z,r)$ denote its double-fault probability.
On positive contingency tables it is the unique solution $s$ of
\begin{equation}
    \Phi(s,x,z):=
    \ln\frac{s(1-x-z+s)}{(x-s)(z-s)}
       =\ln\frac{1+r}{1-r}.
    \label{eq:score_df_inverse}
\end{equation}
The feasible interior is
$\max\{0,x+z-1\}<s<\min\{x,z\}$, with $0<x,z<1$ and $-1<r<1$.
The left-hand side increases from $-\infty$ to $+\infty$ on this interval.
Define
\[
    T=\partial_s\Phi
      =\frac1s+\frac1{1-x-z+s}+\frac1{x-s}+\frac1{z-s}>0.
\]
Implicit differentiation gives
\begin{equation}
    \partial_x f=
       \frac{(1-x-z+s)^{-1}+(x-s)^{-1}}{T}\in(0,1),
    \qquad
    \partial_r f=\frac{2}{(1-r^2)T}>0,
    \label{eq:score_df_derivatives}
\end{equation}
and the expression for $\partial_z f$ follows by symmetry.

Choose an interior reference $(e_*,e_*,r_*)$ and put
$f_*=f(e_*,e_*,r_*)$,
$a_e=\partial_x f(e_*,e_*,r_*)=\partial_z f(e_*,e_*,r_*)$, and
$a_r=\partial_r f(e_*,e_*,r_*)$.
For each actual pair define its residual by the exact identity
\begin{equation}
    D_{ij}
    =f_*+a_e[(e_i-e_*)+(e_j-e_*)]
        +a_r(R^t_{ij}-r_*)+\rho_{ij}.
    \label{eq:score_df_expansion}
\end{equation}
This definition also applies to boundary tables using their actual $D_{ij}$.
If the segments from the reference to $(e_i,e_j,R^t_{ij})$ lie in a compact
interior region on which $\|\nabla^2 f\|_{\rm op}\le M$, Taylor's theorem gives
the quantitative bound
\begin{equation}
    |\rho_{ij}|\le\frac M2
        \left[(e_i-e_*)^2+(e_j-e_*)^2+(R^t_{ij}-r_*)^2\right].
    \label{eq:score_df_remainder}
\end{equation}
Such a finite $M$ exists on every compact region of positive contingency
tables, by the implicit function theorem and \eqref{eq:score_df_derivatives}.

Let $H_S=\HIerr(S;R^t)=1-\frac13\sum_{\{i,j\}\subset S}R^t_{ij}$.
Averaging \eqref{eq:score_df_expansion} over a team, then substituting into
\eqref{eq:joint_collision}, yields
\begin{equation}
    -K_S=c_0+A_q\bar q_S+A_H H_S+A_J J_S+\eta_S,
    \label{eq:score_collision_expansion}
\end{equation}
where $c_0$ is independent of the team and
\begin{equation}
    A_q=\theta+2(1-\theta)a_e>0,\qquad
    A_H=(1-\theta)a_r>0,\qquad A_J=1/d>0,
    \label{eq:score_positive_coefficients}
\end{equation}
with
\[
    \eta_S=-\frac{1-\theta}{3}\sum_{\{i,j\}\subset S}\rho_{ij},\qquad
    |\eta_S|\le
    \tau_S:=\frac{1-\theta}{3}\sum_{\{i,j\}\subset S}|\rho_{ij}|.
\]
To see the coefficient of quality explicitly, the average of
$(e_i-e_*)+(e_j-e_*)$ over the three pairs is $2(e_S-e_*)$.
The average $R^t_{ij}$ is $1-H_S$, and $e_S=1-\bar q_S$.
These substitutions give \eqref{eq:score_positive_coefficients}.
Thus the local directions of all three terms agree with the signs used
in the selection objective, with a second-order remainder in an interior
neighborhood.

\paragraph{Normalization and fixed objective weights.}
Let $F_t$ be the score computed using target-task qualities and target-task
HI, with the base-item normalization in \eqref{eq:z_score}. For size three,
it has the exact affine form
\begin{equation}
    F_t(S)=c_F+w_q\bar q_S+w_H H_S+w_J J_S,\qquad
    w_q=\frac{\sqrt3}{\sigma_q},\quad
    w_H=\frac{\lambda_1\sqrt3}{\sigma_H},\quad
    w_J=\frac{\lambda_2\sqrt3}{\sigma_J}.
    \label{eq:score_target_affine}
\end{equation}
Here each standard deviation is over its corresponding candidate-pool base
items, and $c_F$ collects their means. A constant channel has coefficient
zero; nonconstant channels satisfy the active-normalization conditions in
\Cref{app:selection_stability}.
Choose any scale $\alpha>0$ and define
\[
    \mathbf A=(A_q,A_H,A_J),\quad
    \mathbf w=(w_q,w_H,w_J),\quad
    X_S=(\bar q_S,H_S,J_S),\quad
    \mathbf b=\mathbf A-\alpha\mathbf w.
\]
For $w_q>0$, taking $\alpha=A_q/w_q$ matches the quality coefficient exactly.
The scale is used only to compare the two mathematical quantities; the
selector retains its original weights.
For two teams define the one-sided coefficient correction
\begin{equation}
    E_{\rm wt}(S,T)=
       \bigl[-\mathbf b^\top(X_S-X_T)\bigr]_+,\qquad [z]_+=\max\{z,0\}.
    \label{eq:score_weight_difference}
\end{equation}
Subtracting \eqref{eq:score_collision_expansion} and using
\eqref{eq:score_target_affine}, together with
$\mathbf b^\top(X_S-X_T)\ge-E_{\rm wt}(S,T)$, gives
\begin{equation}
    K_T-K_S
    \ge\alpha[F_t(S)-F_t(T)]
           -E_{\rm wt}(S,T)-\tau_S-\tau_T.
    \label{eq:score_target_collision_bound}
\end{equation}
This form keeps the specified values of $\lambda_1,\lambda_2$ and measures
their effect through the explicit coefficient differences $\mathbf b$.

\paragraph{Pooled development profiles and the returned team.}
Write $\widehat F$ for the actual score based on estimated target qualities
and pooled HI. Let $(u_t,v_t)$ and $(\widehat u,\widehat v)$ be the corresponding
standardized signals from \eqref{eq:selection_standardized_signals}, with
$R^t,J^t$ used for $v_t$. Put
$\delta u=\widehat u-u_t$ and $\delta v=\widehat v-v_t$.
For two teams with overlap $o=|S\cap T|$, \eqref{eq:selection_overlap_bound}
specializes to
\begin{equation}
\begin{aligned}
    B_{\rm prof}(S,T)
       &:=\sqrt{2-\frac{2o}{3}}\|\delta u\|_2
           +\sqrt{2-\frac{2\binom{o}{2}}{3}}\|\delta v\|_2,\\
    \left|[\widehat F(S)-\widehat F(T)]
                   -[F_t(S)-F_t(T)]\right|
       &\le B_{\rm prof}(S,T).
\end{aligned}
    \label{eq:score_profile_difference}
\end{equation}
Every normalization is recomputed on its own profile, so this expression
includes changes in both base-item values and normalization statistics.

\begin{proposition}[Selection-score condition for an accuracy gain]
\label{prop:score_accuracy_gain}
Under \eqref{eq:joint_model} and the active-normalization conditions above,
let $S_{\rm g}$ be the output of multi-start greedy selection on $\widehat F$,
and let $S_Q$ be the Quality-Only team from the same candidate pool.
For either Choice-Soft or PoE,
\begin{equation}
\begin{aligned}
    \operatorname{Acc}(S_{\rm g})-\operatorname{Acc}(S_Q)
    \ge{}&
       \alpha[\widehat F(S_{\rm g})-\widehat F(S_Q)]
       -\varepsilon_Q\\
       &-\alpha B_{\rm prof}(S_{\rm g},S_Q)
       -E_{\rm wt}(S_{\rm g},S_Q)-\tau_{S_{\rm g}}-\tau_{S_Q}.
\end{aligned}
    \label{eq:score_accuracy_gain}
\end{equation}
In particular, a score advantage exceeding the displayed correction terms
guarantees a strict accuracy improvement.
\end{proposition}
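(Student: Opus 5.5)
The proof chains four inequalities already established: the risk sandwich of \Cref{lem:joint_risk_bounds}, the score-to-collision bound \eqref{eq:score_target_collision_bound}, the profile perturbation bound \eqref{eq:score_profile_difference}, and the bookkeeping of the baseline correction.

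\emph{Reduction to collision probabilities.} Write $\mathcal R_S=1-\operatorname{Acc}(S)$ for the common aggregator (Choice-Soft or PoE). \Cref{lem:joint_risk_bounds} gives $\mathcal R_{S_{\rm g}}\le K_{S_{\rm g}}$ and $\mathcal R_{S_Q}\ge K_{S_Q}-\varepsilon_Q$. Hence
\[
\operatorname{Acc}(S_{\rm g})-\operatorname{Acc}(S_Q)\ge K_{S_Q}-K_{S_{\rm g}}-\varepsilon_Q ,
\]
exactly as in the proof of Proposition~\ref{prop:quality_only_gain}. The remaining task is to lower-bound $K_{S_Q}-K_{S_{\rm g}}$ by a score difference.

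\emph{Collision difference via the target score.} Apply \eqref{eq:score_target_collision_bound} with $S=S_{\rm g}$ and $T=S_Q$:
\[
K_{S_Q}-K_{S_{\rm g}}\ge\alpha[F_t(S_{\rm g})-F_t(S_Q)]-E_{\rm wt}(S_{\rm g},S_Q)-\tau_{S_{\rm g}}-\tau_{S_Q}.
\]
This step relies on three earlier facts: the expansion \eqref{eq:score_collision_expansion}, the exact affine form \eqref{eq:score_target_affine}, and the definition of $\mathbf b$. Both teams have size three, so the affine form applies to each. The constants $c_0$ and $c_F$ cancel in the difference. The sign condition $\alpha>0$ is needed when multiplying the score difference.

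\emph{Replacing the target score by the implemented score.} By \eqref{eq:score_profile_difference} with overlap $o=|S_{\rm g}\cap S_Q|$,
\[
F_t(S_{\rm g})-F_t(S_Q)\ge\widehat F(S_{\rm g})-\widehat F(S_Q)-B_{\rm prof}(S_{\rm g},S_Q).
\]
Multiplying by $\alpha>0$ and substituting into the previous two displays yields \eqref{eq:score_accuracy_gain}. If the right-hand side is positive, the accuracy difference is strictly positive, which gives the final claim.

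\emph{Main obstacle.} The hardest step is justifying that \eqref{eq:score_profile_difference} applies in the form written. That bound comes from \eqref{eq:selection_overlap_bound}, which requires both profiles to be in the active-normalization regime at size three, so that $F_t$ and $\widehat F$ both take the sum form \eqref{eq:selection_sum_form}. I would verify this from the stated hypotheses. I would also check that the quality vector in $F_t$ and $\widehat F$ is the same $q^t$, so $\delta u$ reflects only estimation changes and is zero for pure pooling. Nothing about greedy optimality is used: $S_{\rm g}$ enters only as a fixed team, so the bound holds for the returned team whatever the search path.
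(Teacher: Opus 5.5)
Your proposal is correct and follows the paper's proof: you reduce to $K_{S_Q}-K_{S_{\rm g}}-\varepsilon_Q$ exactly as in Proposition~\ref{prop:quality_only_gain}, apply \eqref{eq:score_target_collision_bound} with $S=S_{\rm g}$ and $T=S_Q$, and then replace $F_t$ by $\widehat F$ through \eqref{eq:score_profile_difference} with $\alpha>0$. Your observation that greedy optimality plays no role matches the paper, which uses the multi-start search only afterwards, in the terminal-team condition \eqref{eq:score_terminal_condition}.
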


\begin{proof}
The proof of \Cref{prop:quality_only_gain} gives
$\operatorname{Acc}(S_{\rm g})-\operatorname{Acc}(S_Q)
\ge K_Q-K_{S_{\rm g}}-\varepsilon_Q$.
Apply \eqref{eq:score_target_collision_bound} with $S=S_{\rm g}$ and $T=S_Q$.
Then \eqref{eq:score_profile_difference} implies
$F_t(S_{\rm g})-F_t(S_Q)
\ge\widehat F(S_{\rm g})-\widehat F(S_Q)-B_{\rm prof}(S_{\rm g},S_Q)$.
Substitution proves \eqref{eq:score_accuracy_gain}.
\end{proof}

To make the role of multi-start search explicit, let $\mathcal C$ be its set
of distinct terminal teams. Algorithm~\ref{alg:greedy} satisfies
$\widehat F(S_{\rm g})=\max_{T\in\mathcal C}\widehat F(T)$.
Set
\[
    E_{\max}=\max_{T\in\mathcal C}
       \{\alpha B_{\rm prof}(T,S_Q)+E_{\rm wt}(T,S_Q)+\tau_T+\tau_{S_Q}\}.
\]
Therefore the sufficient condition
\begin{equation}
    \alpha\left[\max_{T\in\mathcal C}\widehat F(T)-\widehat F(S_Q)\right]
        >\varepsilon_Q+E_{\max}
    \label{eq:score_terminal_condition}
\end{equation}
ensures that the returned team outperforms Quality-Only. The maximization
here is over the teams actually generated by the seed runs.

Finally, the HI component of the profile change is exactly
\[
    \delta v=
       \lambda_1[Z(1-\widehat{\bar R})-Z(1-R^t)]
       +\lambda_2[Z(\widehat{\bar J})-Z(J^t)].
\]
With exact target qualities, $\delta u=0$. Positive-affine agreement between
each pooled channel and its target counterpart makes $\delta v=0$ by
\eqref{eq:pooling_positive_affine}. More generally,
\eqref{eq:pooling_direction_distance} and \eqref{eq:pooling_edge_perturbation}
bound its norm through the standardized profile directions.
These terms connect target-task accuracy to the pooled signals used by the
selector, while \eqref{eq:score_weight_difference} and
\eqref{eq:score_df_remainder} account separately for the objective weights
and the curvature of the correctness association.

\subsection{Member-Specific Information in Regularized Stacking}
\label{app:stacking_theory}

The concatenated probability features used by \textsc{Stacking} contain both the team's mean
prediction and differences between its members. We analyze the additional reduction in regularized
cross-entropy available from these member-specific differences after the mean prediction is already
available to a linear softmax classifier.

\paragraph{Setting and comparison class.}
Fix a team $S$ of size $k\geq 2$ and relabel its members as $1,\ldots,k$. Let $c\geq 2$ be the
number of classes, and let $p_i(X)\in\Delta^{c-1}$ be each member's aligned choice distribution.
All expectations in this subsection are taken under one fixed distribution $D$ over $(X,Y)$.
This may be a population distribution or the empirical distribution of a development set.
Define the cross-entropy loss on logits using natural logarithms:
\[
    \ell(z,y)=\log\!\left(\sum_{a=1}^{c}\exp(z_a)\right)-z_y.
\]
For $\beta>0$, the optimal value of the Stacking objective is
\begin{equation}
    F_{\beta}(S)
    =\inf_{W_1,\ldots,W_k,b}
    \left\{
        \mathbb{E}_{D}\ell\!\left(\sum_{i=1}^{k}W_i^{\top}p_i(X)+b,Y\right)
        +\frac{\beta}{2}\sum_{i=1}^{k}\|W_i\|_{F}^{2}
    \right\},
    \label{eq:stacking_theory_full_risk}
\end{equation}
where $W_i\in\mathbb{R}^{c\times c}$ are the blocks of the weight matrix in
\Cref{eq:stacking_features,eq:stacking_objective}, and $b\in\mathbb{R}^{c}$ is unpenalized.
For an empirical $D$, this is exactly the average cross-entropy plus the $L_2$ penalty in
\Cref{eq:stacking_objective}. In the implementation, the data-fit gradient is averaged over the
development examples and the penalty contributes $\mathtt{l2}\,W$ to the weight gradient, so the
coefficient here is $\beta=\mathtt{l2}$.

Let $\bar p(X)=k^{-1}\sum_{i=1}^{k}p_i(X)$. The comparison class uses only this mean probability
vector as the input to a linear softmax classifier:
\begin{equation}
    G_{\beta/k}(\bar p)
    =\min_{M,b}
    \left\{
        \mathbb{E}_{D}\ell(M^{\top}\bar p(X)+b,Y)
        +\frac{\beta}{2k}\|M\|_{F}^{2}
    \right\}.
    \label{eq:stacking_theory_mean_risk}
\end{equation}
This is a restricted class within Stacking: setting $W_i=M/k$ reproduces its logits and penalty.
Thus the factor $\beta/k$ follows from the original regularizer. The comparison class is used only
for analysis; its input equals the \textsc{Choice-Soft} mean, followed by a learned linear softmax
map. Assume that \Cref{eq:stacking_theory_mean_risk} has a finite minimizer $(M_0,b_0)$.
For example, positive probability for every class is sufficient when $\beta>0$, after fixing
$\mathbf{1}^{\top}b=0$ to remove the irrelevant common shift in the logits.

Define the mean classifier, the concatenated member differences, and their residual alignment by
\begin{equation}
    \begin{aligned}
        \pi_0(X)&=\operatorname{softmax}(M_0^{\top}\bar p(X)+b_0),\\
        \delta\phi(X)&=[p_1(X)-\bar p(X);\ldots;p_k(X)-\bar p(X)],\\
        \Gamma&=\mathbb{E}_{D}\!\left[\delta\phi(X)(e_Y-\pi_0(X))^{\top}\right],
    \end{aligned}
    \label{eq:stacking_theory_residual_alignment}
\end{equation}
where $e_Y$ is the one-hot vector of the true class. Also set
\begin{equation}
    \Sigma_{\delta}=\mathbb{E}_{D}[\delta\phi(X)\delta\phi(X)^{\top}],
    \qquad
    L_{\delta}=\beta+\frac12\lambda_{\max}(\Sigma_{\delta}).
    \label{eq:stacking_theory_curvature}
\end{equation}
All these moments are finite because the probability features are bounded. To use the same
distribution throughout the analysis, write
\[
    H_D(S)=\mathrm{HI}_{dist}(S;J^D),\qquad
    J^D_{ij}=\mathbb{E}_{D}\!\left[\operatorname{JSD}_{2}(p_i(X),p_j(X))\right],
\]
where $\operatorname{JSD}_{2}$ uses logarithms to base $2$ and $H_D$ follows the pairwise averaging
rule in \Cref{eq:hi_dist}.

\begin{theorem}[Additional information available to regularized Stacking]
\label{thm:stacking_member_information}
Under the setting above, let
$\Delta_S=G_{\beta/k}(\bar p)-F_{\beta}(S)$. Then
\begin{equation}
    \frac{\|\Gamma\|_{F}^{2}}{2L_{\delta}}
    \;\leq\;\Delta_S
    \;\leq\;\frac{\|\Gamma\|_{F}^{2}}{2\beta}
    \;\leq\;\frac{2\ln 2\,(k-1)}{\beta}\,H_D(S).
    \label{eq:stacking_theory_gain_bounds}
\end{equation}
Consequently, $\Delta_S>0$ if and only if $\Gamma\neq 0$.
\end{theorem}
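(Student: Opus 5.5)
The plan is to rewrite the Stacking problem as the mean-only problem plus a separate block of member-specific weights, and then read off all three inequalities from convexity and smoothness of the softmax cross-entropy. Write $\delta p_i=p_i-\bar p$, so that $\sum_i\delta p_i=0$, and reparametrize each block as $W_i=M/k+V_i$ with $M=\sum_iW_i$, so that $\sum_iV_i=0$. The logits become $M^\top\bar p+b+V^\top\delta\phi$, where $V=[V_1;\ldots;V_k]$. Because the cross terms vanish when $\sum_iV_i=0$, the penalty splits as $\sum_i\|W_i\|_F^2=\|M\|_F^2/k+\|V\|_F^2$. Dropping the constraint $\sum_iV_i=0$ does not change the infimum: adding a common shift to all $V_i$ leaves the logits unchanged (since $\sum_i\delta p_i=0$) and only increases the penalty. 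Hence
\[
F_\beta(S)=\inf_{M,b,V}\mathcal J(M,b,V),\qquad \mathcal J(M,b,V)=\E_D\,\ell(M^\top\bar p+b+V^\top\delta\phi,Y)+\tfrac{\beta}{2k}\|M\|_F^2+\tfrac\beta2\|V\|_F^2,
\]
and $\mathcal J(M_0,b_0,0)=G_{\beta/k}(\bar p)$.

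\textbf{Upper bound.} At $\theta_0=(M_0,b_0,0)$ the gradients of $\mathcal J$ in $M$ and $b$ vanish, because $(M_0,b_0)$ is the mean-only minimizer. The gradient in $V$ is $\E[\delta\phi(\pi_0-e_Y)^\top]=-\Gamma$. The data term is convex and the penalty is an exact quadratic, so for every $\theta=(M,b,V)$
\[
\mathcal J(\theta)\ge G-\langle\Gamma,V\rangle+\tfrac\beta2\|V\|_F^2+\tfrac{\beta}{2k}\|M-M_0\|_F^2 .
\]
Minimizing the right-hand side over $V$ gives $\mathcal J(\theta)\ge G-\|\Gamma\|_F^2/(2\beta)$, which proves $\Delta_S\le\|\Gamma\|_F^2/(2\beta)$.

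\textbf{Lower bound.} Fix $(M_0,b_0)$ and move only $V$. The Hessian of $\ell(\cdot,y)$ in the logits is $\operatorname{diag}(\pi)-\pi\pi^\top\preceq\frac12I$. A second-order Taylor bound therefore gives
\[
\E\,\ell(z_0+V^\top\delta\phi,Y)\le\E\,\ell(z_0,Y)-\langle\Gamma,V\rangle+\tfrac14\,\mathrm{tr}(V^\top\Sigma_\delta V),
\]
and $\mathrm{tr}(V^\top\Sigma_\delta V)\le\lambda_{\max}(\Sigma_\delta)\|V\|_F^2$. Adding the penalty yields $F_\beta\le G-\langle\Gamma,V\rangle+\frac{L_\delta}{2}\|V\|_F^2$. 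Choosing $V=\Gamma/L_\delta$ gives $\Delta_S\ge\|\Gamma\|_F^2/(2L_\delta)$. Since $L_\delta$ is finite, the two bounds together immediately give $\Delta_S>0\iff\Gamma\ne0$.

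\textbf{JSD bound.} This is the step where the constants must come out exactly, and I expect it to be the main obstacle.
\begin{itemize}
\item The residual satisfies $\|e_Y-\pi_0\|_2^2\le2$. By Cauchy--Schwarz followed by Jensen, $\|\Gamma\|_F^2\le2\,\E\|\delta\phi\|_2^2$.
\item The variance identity gives $\E\|\delta\phi\|_2^2=\frac1k\sum_{i<j}\E\|p_i-p_j\|_2^2$.
\item A difference of two distributions is a zero-sum vector, so its positive and negative parts both have mass $\frac12\|v\|_1$. This gives $\|v\|_2^2\le\frac12\|v\|_1^2$. Using the crude bound $\|v\|_2\le\|v\|_1$ instead would lose a factor of $2$.
\item Apply Pinsker to each Kullback--Leibler term of the JSD against the midpoint $m$, using $\|p-m\|_1=\frac12\|p-q\|_1$. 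This gives $\mathrm{JSD}_{\rm nats}(p,q)\ge\frac18\|p-q\|_1^2$, hence $\|p-q\|_2^2\le4\ln2\,\JSD_2(p,q)$.
\end{itemize}
Chaining these,
\[
\frac{\|\Gamma\|_F^2}{2\beta}\le\frac1{\beta k}\sum_{i<j}4\ln2\,J^D_{ij}=\frac{4\ln2}{\beta k}\binom k2H_D(S)=\frac{2\ln2\,(k-1)}{\beta}H_D(S),
\]
which matches the stated constant.
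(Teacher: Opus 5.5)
Your proposal is correct and follows the paper's proof essentially step for step. It uses the same mean/member-specific weight split, convexity plus first-order stationarity for the upper bound, the $\operatorname{diag}(\pi)-\pi\pi^{\top}\preceq I/2$ smoothness bound with $V=\Gamma/L_\delta$ for the lower bound, and Cauchy--Schwarz, the variance identity, the zero-sum $\ell_2$--$\ell_1$ inequality and Pinsker for the JSD constant. The only cosmetic difference is that you remove the constraint $\sum_i V_i=0$ with a common-shift argument, whereas the paper keeps it and instead checks that the blocks of $\Gamma$ sum to zero, so that $U=\Gamma/L_\delta$ is feasible.
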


\begin{proof}
\emph{Step 1: separate the mean and member-specific weights.}
For arbitrary Stacking weights, define
\[
    M=\sum_{i=1}^{k}W_i,\qquad
    U_i=W_i-\frac{M}{k},\qquad
    U=[U_1;\ldots;U_k].
\]
Then $\sum_iU_i=0$, and every collection of weights admits this decomposition. Conversely, any
$M$ and $U$ satisfying $\sum_iU_i=0$ define weights $W_i=M/k+U_i$. The logits and penalty obey
\begin{align}
    \sum_iW_i^{\top}p_i+b
        &=M^{\top}\bar p+U^{\top}\delta\phi+b,
        \label{eq:stacking_theory_logit_decomposition}\\
    \sum_i\|W_i\|_{F}^{2}
        &=\frac{\|M\|_{F}^{2}}{k}+\|U\|_{F}^{2}.
        \label{eq:stacking_theory_penalty_decomposition}
\end{align}
The first identity uses $\sum_iU_i=0$; the cross terms in the second identity vanish for the
same reason. Therefore, with
\[
    \Psi(M,b,U)=
    \mathbb{E}_{D}\ell(M^{\top}\bar p+U^{\top}\delta\phi+b,Y)
    +\frac{\beta}{2k}\|M\|_{F}^{2}+\frac{\beta}{2}\|U\|_{F}^{2},
\]
we have $F_{\beta}(S)=\inf_{M,b,U:\,\sum_iU_i=0}\Psi(M,b,U)$ and
$G_{\beta/k}(\bar p)=\Psi(M_0,b_0,0)$. In particular, $\Delta_S\geq 0$.

\emph{Step 2: bound the improvement using the residual gradient.}
The finite optimum of the mean classifier satisfies
\begin{equation}
    \mathbb{E}_{D}[\bar p(\pi_0-e_Y)^{\top}]+\frac{\beta}{k}M_0=0,
    \qquad
    \mathbb{E}_{D}[\pi_0-e_Y]=0.
    \label{eq:stacking_theory_first_order}
\end{equation}
If $\mathbf{1}^{\top}b=0$ is imposed, stationarity initially gives the second equation on that
subspace. The bias gradient always has coordinate sum zero, so this also makes its full gradient
zero. Differentiation under the expectation is justified by bounded probability features and
bounded first derivatives of cross-entropy with respect to logits.

Apply convexity to the cross-entropy term at $(M_0,b_0,0)$, and expand the quadratic penalties.
The first-order terms in $M-M_0$ and $b-b_0$ cancel by
\Cref{eq:stacking_theory_first_order}, leaving
\begin{equation}
    \Psi(M,b,U)\geq G_{\beta/k}(\bar p)-\langle\Gamma,U\rangle_F
    +\frac{\beta}{2k}\|M-M_0\|_{F}^{2}
    +\frac{\beta}{2}\|U\|_{F}^{2}.
    \label{eq:stacking_theory_convex_lower_bound}
\end{equation}
Here $\langle A,B\rangle_F=\operatorname{tr}(A^{\top}B)$. Completing the square in $U$ gives
\[
    -\langle\Gamma,U\rangle_F+\frac{\beta}{2}\|U\|_{F}^{2}
    =\frac{\beta}{2}\left\|U-\frac{\Gamma}{\beta}\right\|_{F}^{2}
     -\frac{\|\Gamma\|_{F}^{2}}{2\beta}.
\]
Taking the infimum in \Cref{eq:stacking_theory_convex_lower_bound} yields
$F_{\beta}(S)\geq G_{\beta/k}(\bar p)-\|\Gamma\|_{F}^{2}/(2\beta)$, proving the upper
bound on $\Delta_S$. This argument uses the quadratic penalty on the weights and the bias
stationarity condition; it requires no strong convexity in the unpenalized bias.

\emph{Step 3: exhibit a feasible improvement.}
For a probability vector $\pi$, the Hessian of cross-entropy with respect to logits is
$B(\pi)=\operatorname{diag}(\pi)-\pi\pi^{\top}$. For every $v\in\mathbb{R}^{c}$,
\begin{align*}
    v^{\top}B(\pi)v
        &=\sum_{a<b}\pi_a\pi_b(v_a-v_b)^2\\
        &\leq 2\sum_a\pi_a(1-\pi_a)v_a^2
        \leq\frac12\|v\|_2^2.
\end{align*}
The last inequality uses $\pi_a(1-\pi_a)\leq 1/4$, so
$0\preceq B(\pi)\preceq I/2$. Fix $(M,b)=(M_0,b_0)$ and write
$f(U)=\Psi(M_0,b_0,U)$. For any matrix direction $V$, its second directional derivative obeys
\begin{align*}
    D^2f(U)[V,V]
        &\leq\beta\|V\|_{F}^{2}
          +\frac12\mathbb{E}_{D}\|V^{\top}\delta\phi\|_2^2\\
        &=\beta\|V\|_{F}^{2}
          +\frac12\operatorname{tr}(V^{\top}\Sigma_{\delta}V)
        \leq L_{\delta}\|V\|_{F}^{2}.
\end{align*}
Since $\nabla f(0)=-\Gamma$, this global curvature bound implies
\[
    f(U)\leq G_{\beta/k}(\bar p)-\langle\Gamma,U\rangle_F
              +\frac{L_{\delta}}{2}\|U\|_{F}^{2}.
\]
Partition $\Gamma$ into blocks $\Gamma_i\in\mathbb{R}^{c\times c}$. Their sum is zero:
\[
    \sum_i\Gamma_i
    =\mathbb{E}_{D}\!\left[
        \left(\sum_i(p_i-\bar p)\right)(e_Y-\pi_0)^{\top}
      \right]=0.
\]
Thus $U=\Gamma/L_{\delta}$ satisfies the constraint $\sum_iU_i=0$ and gives
\[
    F_{\beta}(S)\leq f(\Gamma/L_{\delta})
    \leq G_{\beta/k}(\bar p)-\frac{\|\Gamma\|_{F}^{2}}{2L_{\delta}}.
\]
This proves the lower bound on $\Delta_S$.

\emph{Step 4: relate the residual gradient to pairwise JSD.}
First, $\|e_Y-\pi_0\|_2^2\leq 2$. The triangle inequality and Cauchy--Schwarz therefore give
\begin{equation}
    \|\Gamma\|_{F}^{2}
    \leq\mathbb{E}_{D}\|\delta\phi\|_2^2\,
         \mathbb{E}_{D}\|e_Y-\pi_0\|_2^2
    \leq 2\mathbb{E}_{D}\|\delta\phi\|_2^2.
    \label{eq:stacking_theory_gradient_energy}
\end{equation}
For completeness, the constants connecting this energy to JSD follow directly from Pinsker's
inequality. For probability vectors $p,q$, put $m=(p+q)/2$ and use natural logarithms for
$\operatorname{KL}$ and $\operatorname{JSD}_{\mathrm{nat}}$. Then
\begin{align*}
    \operatorname{JSD}_{\mathrm{nat}}(p,q)
        &=\frac12\operatorname{KL}(p\|m)+\frac12\operatorname{KL}(q\|m)\\
        &\geq\frac14\|p-m\|_1^2+\frac14\|q-m\|_1^2
         =\frac18\|p-q\|_1^2.
\end{align*}
Because $p-q$ has coordinate sum zero, its positive and negative parts have equal total mass
$a$. Hence $\|p-q\|_2^2\leq 2a^2=\|p-q\|_1^2/2$. Converting to bits yields
\begin{equation}
    \operatorname{JSD}_{2}(p,q)
    =\frac{\operatorname{JSD}_{\mathrm{nat}}(p,q)}{\ln2}
    \geq\frac{\|p-q\|_2^2}{4\ln2}.
    \label{eq:stacking_theory_jsd_energy}
\end{equation}
Finally, expanding the squared norms around the mean gives
\begin{align*}
    \mathbb{E}_{D}\|\delta\phi\|_2^2
        &=\mathbb{E}_{D}\sum_i\|p_i-\bar p\|_2^2
         =\frac1k\sum_{i<j}\mathbb{E}_{D}\|p_i-p_j\|_2^2\\
        &\leq\frac{4\ln2}{k}\sum_{i<j}J^D_{ij}
         =2\ln2\,(k-1)H_D(S).
\end{align*}
Together with \Cref{eq:stacking_theory_gradient_energy}, this gives
$\|\Gamma\|_{F}^{2}\leq4\ln2\,(k-1)H_D(S)$ and proves the final inequality in
\Cref{eq:stacking_theory_gain_bounds}. If $\Gamma=0$, the upper bound and $\Delta_S\geq0$
give $\Delta_S=0$. If $\Gamma\neq0$, the lower bound is strictly positive because
$0<\beta\leq L_{\delta}<\infty$.
\end{proof}

\paragraph{Interpretation.}
Pairwise JSD quantifies the size of the probability differences available to Stacking. The matrix
$\Gamma$ identifies whether those differences align with the predictive residual left by the mean
classifier. The theorem concerns the additional value of member-specific features beyond that
classifier, measured by the optimal regularized cross-entropy under $D$. Improvements already
present in the mean probability vector are part of the comparison class. The theorem
complements the soft-aggregation analysis by characterizing the gain from retaining
member identity in a learned linear combiner.

\section{Implementation Details}
\label{app:implementation}

\subsection{Choice-Space Aligned Scoring Implementation}
\label{app:scoring_implementation}

For each example $x$ with label space $\mathcal{Y}_t = \{y_1, \ldots, y_{|\mathcal{Y}_t|}\}$, we:

\begin{enumerate}
    \item Force the model to generate each choice token $y_k$ (e.g., ``A'', ``B'', ``C'', ``D'')
    \item Extract the log-probability $\ell_i(y_k | x)$ from the model's output distribution
    \item Normalize across $\mathcal{Y}_t$:
    \begin{equation}
        p_i(y_k | x) = \frac{\exp(\ell_i(y_k | x))}{\sum_{y' \in \mathcal{Y}_t} \exp(\ell_i(y' | x))}
    \end{equation}
\end{enumerate}

This ensures $\sum_{y \in \mathcal{Y}_t} p_i(y | x) = 1$ with negligible OTHER mass.

\paragraph{Prompt Format.}
We use a chat-based prompt format with two messages:

\definecolor{c2mas}{RGB}{128,192,192}
\begin{tcolorbox}[title={Profiling Prompt Template}, 
    colframe=c2mas, colback=white, breakable]
\textbf{User Message:}
\begin{lstlisting}[basicstyle=	\ttfamily\small]
[Question]

Options:
[Option 1]
[Option 2]
...

Complete the sentence 'The correct answer is ' by outputting 
exactly one letter from: A, B, C, D. Output nothing else.
\end{lstlisting}

\textbf{Assistant Message (prefix):} (Note: there is a trailing \textbf{space} after ``is''.)
\begin{lstlisting}[basicstyle=\ttfamily\small]
The correct answer is 
\end{lstlisting}

\end{tcolorbox} 

\paragraph{Why normalize within the choice space.}
Outside-choice token mass can introduce divergence unrelated to preferences among the actual
answer options. The following entropy identity \citep{lin1991divergence} isolates this effect.

\begin{lemma}[Divergence due to outside-choice mass]
\label{prop:other_dominance}
Let $\mathcal{Y}$ be the evaluation label space (e.g., $\{A, B, C, D\}$) and $\mathcal{Y}' = \mathcal{Y} \cup \{\text{OTHER}\}$ be an extended space. Suppose two agents have distributions:
\begin{align}
    p_i &= (1 - \alpha_i) q + \alpha_i \delta_{\text{OTHER}} \\
    p_j &= (1 - \alpha_j) q + \alpha_j \delta_{\text{OTHER}}
\end{align}
where $q$ is a distribution over $\mathcal{Y}$ (identical for both agents), $\delta_{\text{OTHER}}$ is a point mass on OTHER, and $\alpha_i, \alpha_j \in [0, 1]$.

Then:
\begin{equation}
    \JSD(p_i, p_j) = \JSD\big([\alpha_i, 1-\alpha_i], [\alpha_j, 1-\alpha_j]\big)
\end{equation}

Thus all divergence arises from the difference in OTHER mass, although their common choice distribution is $q$ (the conditional distribution whenever $\alpha_i,\alpha_j<1$).
\end{lemma}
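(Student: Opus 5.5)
The plan is a direct computation: since both $p_i$ and $p_j$ are mixtures of a common $q$ on $\mathcal{Y}$ and a point mass on OTHER, the JSD should reduce to a divergence between the two-point marginals. Write $m=\tfrac12(p_i+p_j)$. On a label $y\in\mathcal{Y}$ we have $p_i(y)=(1-\alpha_i)q(y)$, $p_j(y)=(1-\alpha_j)q(y)$ and $m(y)=(1-\bar\alpha)q(y)$ with $\bar\alpha=(\alpha_i+\alpha_j)/2$. On OTHER we have $p_i=\alpha_i$, $p_j=\alpha_j$ and $m=\bar\alpha$. I will expand $\JSD(p_i,p_j)=\tfrac12\mathrm{KL}(p_i\|m)+\tfrac12\mathrm{KL}(p_j\|m)$ term by term over $\mathcal{Y}'$.

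The key step is the factorization on $\mathcal{Y}$. For each $y$ with $q(y)>0$, the log-ratio is
\[
\log\frac{p_i(y)}{m(y)}=\log\frac{1-\alpha_i}{1-\bar\alpha},
\]
and $q(y)$ cancels. Summing over $y$ with weights $(1-\alpha_i)q(y)$ and using $\sum_y q(y)=1$ gives
\[
(1-\alpha_i)\log\frac{1-\alpha_i}{1-\bar\alpha}.
\]
Adding the OTHER coordinate $\alpha_i\log(\alpha_i/\bar\alpha)$ shows that $\mathrm{KL}(p_i\|m)$ equals the KL divergence between $[\alpha_i,1-\alpha_i]$ and its midpoint $[\bar\alpha,1-\bar\alpha]$. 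The same holds for $j$, and averaging the two gives the two-point JSD, which is the claim.

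The only real obstacle is the edge cases, and I will handle them with the convention $0\log 0=0$. Labels with $q(y)=0$ contribute zero on both sides. If $\alpha_i=1$, the $\mathcal{Y}$-block of $p_i$ vanishes and contributes zero. Whenever $p_i$ puts positive mass on a label, $m$ does too, so no infinite terms appear. The case $\bar\alpha=1$ forces $\alpha_i=\alpha_j=1$, and then both sides are zero.

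For the interpretive sentence, I will note that when $\alpha_i,\alpha_j<1$, conditioning each $p$ on $\mathcal{Y}$ returns exactly $q$. All of the divergence is therefore carried by the OTHER masses; it vanishes if and only if $\alpha_i=\alpha_j$, since the two-point JSD is zero only for equal arguments. This is why the paper renormalizes within $\mathcal{Y}_t$ in \eqref{eq:scoring_distribution}. The identity holds for any logarithm base, because both sides scale by the same factor.
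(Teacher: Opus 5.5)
Your proof is correct, but it goes through the KL form of JSD, whereas the paper uses the entropy form. The paper writes $\JSD(p_i,p_j)=H\bigl(\tfrac{p_i+p_j}{2}\bigr)-\tfrac12\bigl(H(p_i)+H(p_j)\bigr)$. It then applies the grouping identity $H\bigl((1-\alpha)q+\alpha\delta_{\text{OTHER}}\bigr)=H(\alpha)+(1-\alpha)H(q)$, which holds because the supports are disjoint, to $p_i$, $p_j$ and the midpoint. Finally it observes that the net coefficient of $H(q)$, namely $(1-\bar\alpha)-\tfrac12\bigl((1-\alpha_i)+(1-\alpha_j)\bigr)$, is zero. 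That route is shorter and needs no case analysis, because the $0\log 0=0$ convention inside the binary entropy absorbs $\alpha\in\{0,1\}$. It does implicitly use $H(q)<\infty$, which is automatic for finite $\mathcal{Y}$.

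Your argument is instead the chain rule for KL when the conditionals on $\mathcal{Y}$ are identical. The likelihood ratio $p_i/m$ is constant on $\mathcal{Y}$, so each term separately satisfies $\mathrm{KL}(p_i\|m)=\mathrm{KL}\bigl([\alpha_i,1-\alpha_i]\,\|\,[\bar\alpha,1-\bar\alpha]\bigr)$. This per-term statement is slightly stronger than the paper's. It never subtracts entropies, so it would survive $H(q)=\infty$. It also extends unchanged to any $f$-divergence between two mixtures sharing the same conditional on $\mathcal{Y}$. The cost is the support bookkeeping, which you handle correctly: $m\ge p_i/2$ rules out infinite terms, and $\bar\alpha=1$ forces $\alpha_i=\alpha_j=1$.
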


\begin{proof}
Let $\bar{\alpha}=(\alpha_i+\alpha_j)/2$, and let $H$ denote Shannon entropy in bits.
Since $q$ and $\delta_{\text{OTHER}}$ have disjoint supports,
\begin{align}
    H(p_i) &= H(\alpha_i)+(1-\alpha_i)H(q), \\
    H(p_j) &= H(\alpha_j)+(1-\alpha_j)H(q), \\
    H\!\left(\frac{p_i+p_j}{2}\right)
        &= H(\bar{\alpha})+(1-\bar{\alpha})H(q).
\end{align}
Substituting these identities into the entropy expression for JSD cancels the $H(q)$ terms:
\begin{equation}
    \JSD(p_i,p_j)
    = H\left(\frac{\alpha_i + \alpha_j}{2}\right) - \frac{H(\alpha_i) + H(\alpha_j)}{2}
    = \JSD\big([\alpha_i,1-\alpha_i],[\alpha_j,1-\alpha_j]\big).
\end{equation}
where $H(p) = -p \log_2 p - (1-p) \log_2(1-p)$ is the binary entropy.

For $\alpha_i=0.1$ and $\alpha_j=0.9$, the divergence is
$1-H(0.1)\approx 0.531$ bits, entirely due to the difference in OTHER mass.
\end{proof}

Choice-space normalization removes this outside-mass contribution before computing $J^t$.

\subsection{Paired Bootstrap Procedure}
\label{app:bootstrap}

For comparing two methods $A$ and $B$ on a test set of $N$ examples:

\begin{enumerate}
    \item Compute per-example correctness: $c_A(x_n), c_B(x_n) \in \{0, 1\}$ for $n = 1, \ldots, N$
    \item For $b = 1, \ldots, B$ (we use $B = 2000$):
    \begin{enumerate}
        \item Sample indices $\{i_1, \ldots, i_N\}$ uniformly with replacement from $\{1, \ldots, N\}$
        \item Compute bootstrap accuracies:
        \begin{align}
            \text{Acc}_A^{(b)} &= \frac{1}{N} \sum_{n=1}^N c_A(x_{i_n}) \\
            \text{Acc}_B^{(b)} &= \frac{1}{N} \sum_{n=1}^N c_B(x_{i_n})
        \end{align}
        \item Compute bootstrap difference: $\Delta^{(b)} = \text{Acc}_A^{(b)} - \text{Acc}_B^{(b)}$
    \end{enumerate}
    \item Compute 95\% confidence interval: $[\text{percentile}(\{\Delta^{(b)}\}, 2.5), \text{percentile}(\{\Delta^{(b)}\}, 97.5)]$
    \item Mark as significant if the CI excludes 0
\end{enumerate}

The paired structure (using the same bootstrap sample for both methods) preserves the correlation between $c_A$ and $c_B$, providing more statistical power than unpaired bootstrap.

\subsection{Hyperparameter Selection}
\label{app:hyperparameter_selection}

We maintain strict development--test separation when selecting the objective weights. We first
perform a grid search using only development splits to identify a stable high-performing region
for the two heterogeneity weights, and then fix $(\lambda_1,\lambda_2)=(0.13,0.05)$ for all
datasets and aggregation rules. No test-set performance is used for per-task hyperparameter
tuning.

To examine whether this choice reflects a task-specific artifact, we further conduct
leave-one-task-out hyperparameter selection over the seven primary benchmarks. For each held-out
benchmark, the weight pair is selected using the development splits of the remaining six
benchmarks and then applied to the held-out benchmark. Across the held-out tasks, the grid contains
broad high-performing plateaus rather than isolated optima: multiple neighboring weight pairs
achieve the same development-set performance. The fixed weight pair used in our main experiments
lies within this high-performing region for 6 out of 7 held-out tasks. This indicates that the
selected weights capture a stable quality--heterogeneity trade-off rather than a task-specific
tuning artifact.

\subsection{Computational Cost of Team Selection}
\label{app:selection_cost}

With $m$ candidates and target size $k$, Algorithm~\ref{alg:greedy} tries $m$ seeds and
at most $k-1$ additions per seed. If within-team quality and pairwise sums are cached,
evaluating an extension from a size-$s$ team needs $s$ pairwise lookups plus constant-time
quality and normalization operations. Evaluating all remaining candidates therefore costs
$O(ms)$ per round and $O(m\sum_{s=1}^{k-1}s)=O(mk^2)$ per seed. The total cost is
$O(m^2k^2)$, with $O(m^2)$ storage for the pairwise matrices. These are operation counts for
cached score evaluation. At the experimental size $m=11$, $k=3$, exhaustive enumeration is also
feasible; multi-start search extends the same scoring rule to larger pools.

\subsection{Aggregation Rules}
\label{app:aggregation_rules}

\paragraph{Choice-Soft.}
\textsc{Choice-Soft} aggregates by averaging the members' aligned choice distributions. For a team
$S_t$ and instance $x$, we compute the mean distribution as follows:
\begin{equation}
    p_{\textsc{cs}}^t(y\mid x)
    = \frac{1}{|S_t|}\sum_{i\in S_t} p_i^t(y\mid x).
    \label{eq:choice_soft}
\end{equation}
We then predict $\hat{y}_{S_t}(x)=\arg\max_{y\in\mathcal{Y}_t} p_{\textsc{cs}}^t(y\mid x)$.

\paragraph{PoE.}
\textsc{PoE} (Product of Experts) combines distributions multiplicatively, rewarding labels that
receive consistently high probability across members. We form the aggregated distribution as:
\begin{equation}
    p_{\textsc{poe}}^t(y\mid x)
    = \frac{\prod_{i\in S_t} p_i^t(y\mid x)}{\sum_{y'\in\mathcal{Y}_t}\prod_{i\in S_t} p_i^t(y'\mid x)},
    \label{eq:poe}
\end{equation}
and predict $\hat{y}_{S_t}(x)=\arg\max_{y\in\mathcal{Y}_t} p_{\textsc{poe}}^t(y\mid x)$.

\paragraph{DS.}
\textsc{DS} (Dawid--Skene) models each member as a noisy labeler with an agent-specific confusion matrix estimated on $D_t^{\mathrm{dev}}$. Let $\hat{y}_i(x)=\arg\max_{y\in\mathcal{Y}_t} p_i^t(y\mid x)$ denote the label predicted by model $i$. We estimate a prior $\pi^t(y)$ and, for each model $i$, a confusion matrix $A_i^t\in\mathbb{R}^{|\mathcal{Y}_t|\times|\mathcal{Y}_t|}$ via additive smoothing (with a small constant $\epsilon$), where $A_i^t(a,b)$ estimates $\mathbb{P}(\hat{y}_i=b\mid y=a)$.
At inference time, we compute the posterior up to proportionality:
\begin{equation}
    p_{\textsc{ds}}^t(y\mid x)\;\propto\;\pi^t(y)\prod_{i\in S_t} A_i^t\!\left(y,\,\hat{y}_i(x)\right).
    \label{eq:ds}
\end{equation}
We then renormalize over $\mathcal{Y}_t$ and predict
$\hat{y}_{S_t}(x)=\arg\max_{y\in\mathcal{Y}_t} p_{\textsc{ds}}^t(y\mid x)$.

\paragraph{Stacking.}
\textsc{Stacking} trains a lightweight combiner on $D_t^{\mathrm{dev}}$. For a team $S_t$ and instance
$x$, we concatenate members' choice distributions:
\begin{equation}
    \phi(x) = \big[p_{i_1}^t(\cdot\mid x);\ldots;p_{i_k}^t(\cdot\mid x)\big]
    \in\mathbb{R}^{k|\mathcal{Y}_t|}.
    \label{eq:stacking_features}
\end{equation}
We learn a linear classifier parametrized by $W\in\mathbb{R}^{k|\mathcal{Y}_t|\times|\mathcal{Y}_t|}$ and $b\in\mathbb{R}^{|\mathcal{Y}_t|}$ via $L_2$-regularized cross-entropy. Let $\pi_{W,b}(y\mid x)=\mathrm{softmax}(W^\top\phi(x)+b)_y$. We train the stacking combiner by minimizing:
\begin{equation}
\begin{aligned}
    \min_{W,b}\; \mathcal{L}(W,b)
    ={}& \frac{1}{|D_t^{\mathrm{dev}}|}
    \sum_{(x,y)\in D_t^{\mathrm{dev}}}
    -\log \pi_{W,b}(y\mid x) \\
    &+ \frac{\beta}{2}\|W\|_F^2 .
\end{aligned}
\label{eq:stacking_objective}
\end{equation}
At test time, we predict $\hat{y}_{S_t}(x)=\arg\max_{y\in\mathcal{Y}_t} \pi_{W,b}(y\mid x)$.

\section{Supplementary Results}
\label{app:additional_experiments}

This section provides additional results that complement the main experiments. We first report the
complete benchmark tables omitted from the main text for space, then analyze the team compositions
selected by \cmas{}, and finally provide robustness and ablation analyses for team size and
objective validity.

\subsection{Complete Benchmark Results}
\label{app:complete_results}

\Cref{tab:complete_results} reports the full per-benchmark accuracy across all 11 open-weight candidate models, the Self-Consistency baseline, and \cmas{} under four aggregation rules, separated into (a) 7 primary benchmarks and (b) 6 OOD benchmarks.

\begin{table}[htbp]
    \centering
    \refstepcounter{table}
    \label{tab:complete_results}
    \caption*{Table~\ref{tab:complete_results}: Complete per-benchmark results across all 13 benchmarks. We report the full open-weight candidate pool ($m=11$), the single-model Self-Consistency baseline, and \cmas{} under four aggregation rules. (a) covers the seven primary benchmarks used in \Cref{tab:main_results}; (b) covers the six OOD benchmarks, where heterogeneity matrices are pooled only on the primary set and applied to OOD teams (dev$\rightarrow$test). \textbf{Avg.} reports the average accuracy within each panel. Abbreviations: ARC-C: ARC-Challenge; CSQA: CommonsenseQA; OBQA: OpenBookQA.}
    \begingroup
    \small
    \setlength{\tabcolsep}{0.8pt}
    \renewcommand{\arraystretch}{1.08}

        \caption*{(a) Primary benchmarks (7).}
        \centering
            \begin{tabular*}{\linewidth}{@{\extracolsep{\fill}}l*{8}{l}@{}}
                \toprule[1.25pt]
                \multirow[c]{2}{*}{\textbf{Method}}
                & \multicolumn{7}{c}{\textbf{Dataset}}
                & \multirow[c]{2}{*}{\textbf{Avg.}} \\
                & \textbf{ARC-C} & \textbf{CSQA} & \textbf{LogiQA2} & \textbf{MedQA} & \textbf{MMLU} & \textbf{MMLU-Pro} & \textbf{OBQA} & \\
                \midrule[1.1pt]
                \rowcolor[rgb]{0.93,0.93,0.93}
                \multicolumn{9}{c}{\textbf{Open-Weight Individual Models}} \\
                \texttt{Llama-3.1-8B} & $80.15$ & $75.28$ & $54.03$ & $61.98$ & $67.70$ & $37.92$ & $81.74$ & $65.54$ \\
                \texttt{Qwen3-8B} & $89.61$ & $79.78$ & $67.23$ & $61.05$ & $71.72$ & $45.79$ & $83.24$ & $71.20$ \\
                \texttt{Gemma-2-9B} & $88.48$ & $78.56$ & $63.20$ & $59.74$ & $74.44$ & $44.01$ & $85.49$ & $70.56$ \\
                \texttt{Ministral-3-8B} & $79.40$ & $62.08$ & $59.93$ & $55.52$ & $67.88$ & $41.20$ & $68.45$ & $62.07$ \\
                \texttt{Granite-3.3-8B} & $80.90$ & $72.66$ & $53.00$ & $51.22$ & $65.45$ & $35.02$ & $80.52$ & $62.68$ \\
                \texttt{GLM-4-9B} & $85.86$ & $74.44$ & $56.55$ & $55.81$ & $64.61$ & $31.84$ & $84.93$ & $64.86$ \\
                \texttt{Nemotron-Nano-9B} & $88.95$ & $74.72$ & $65.07$ & $60.21$ & $72.94$ & $45.88$ & $88.95$ & $70.96$ \\
                \texttt{EuroLLM-9B} & $75.94$ & $72.85$ & $45.88$ & $50.00$ & $59.93$ & $26.97$ & $73.88$ & $57.92$ \\
                \texttt{OLMo-3-7B} & $72.47$ & $71.54$ & $42.88$ & $41.48$ & $56.18$ & $32.87$ & $66.57$ & $54.86$ \\
                \texttt{Apertus-8B} & $72.00$ & $65.54$ & $46.82$ & $48.22$ & $55.71$ & $29.68$ & $68.63$ & $55.23$ \\
                \texttt{InternLM3-8B} & $88.20$ & $75.94$ & $67.13$ & $59.64$ & $71.16$ & $42.70$ & $84.36$ & $69.88$ \\
                \midrule
                \rowcolor[rgb]{0.93,0.93,0.93}
                \multicolumn{9}{c}{\textbf{Single-Model Aggregation}} \\
                Self-Consistency & $89.14$ & $78.65$ & $67.32$ & $61.70$ & $73.41$ & $44.76$ & $88.58$ & $71.94$ \\
                \midrule
                \rowcolor[rgb]{0.93,0.93,0.93}
                \multicolumn{9}{c}{\textbf{\cmas{} Team Selection}} \\
                \cmas{} (Choice-Soft) & $91.48$ & $83.90$ & $70.88$ & $66.39$ & $76.59$ & $48.50$ & $90.45$ & $75.45$ \\
                \cmas{} (PoE) & $91.57$ & $83.90$ & $70.69$ & $65.36$ & $76.69$ & $48.97$ & $89.51$ & $75.24$ \\
                \cmas{} (DS) & $90.92$ & $82.87$ & $70.32$ & $65.17$ & $75.47$ & $47.10$ & $90.45$ & $74.61$ \\
                \cmas{} (Stacking) & $91.67$ & $84.18$ & $70.60$ & $65.54$ & $76.12$ & $48.97$ & $91.57$ & $75.52$ \\
                \bottomrule[1.25pt]
            \end{tabular*}
    \endgroup
\end{table}

\begin{table}[htbp]
    \centering
    \caption*{Table~\ref{tab:complete_results} (continued).}
    \begingroup
    \small
    \setlength{\tabcolsep}{0.8pt}
    \renewcommand{\arraystretch}{1.08}
        \caption*{(b) OOD benchmarks (6).}
        \centering
            \begin{tabular*}{\linewidth}{@{\extracolsep{\fill}}l*{7}{l}@{}}
                \toprule[1.25pt]
                \multirow[c]{2}{*}{\textbf{Method}}
                & \multicolumn{6}{c}{\textbf{Dataset}}
                & \multirow[c]{2}{*}{\textbf{Avg.}} \\
                & \textbf{AQUA-RAT} & \textbf{C-Eval} & \textbf{MathQA} & \textbf{RACE} & \textbf{ReClor} & \textbf{StrategyQA} & \\
                \midrule[1.1pt]
                \rowcolor[rgb]{0.93,0.93,0.93}
                \multicolumn{8}{c}{\textbf{Open-Weight Individual Models}} \\
                \texttt{Llama-3.1-8B} & $33.99$ & $49.53$ & $33.24$ & $60.77$ & $62.45$ & $67.32$ & $51.22$ \\
                \texttt{Qwen3-8B} & $42.60$ & $75.09$ & $40.07$ & $63.11$ & $81.46$ & $67.88$ & $61.70$ \\
                \texttt{Gemma-2-9B} & $30.90$ & $55.34$ & $32.21$ & $64.51$ & $77.06$ & $70.13$ & $55.03$ \\
                \texttt{Ministral-3-8B} & $31.37$ & $58.80$ & $27.15$ & $58.80$ & $67.79$ & $58.52$ & $50.41$ \\
                \texttt{Granite-3.3-8B} & $32.40$ & $45.51$ & $31.09$ & $60.96$ & $66.48$ & $64.89$ & $50.22$ \\
                \texttt{GLM-4-9B} & $30.81$ & $65.73$ & $30.43$ & $60.39$ & $61.89$ & $67.42$ & $52.78$ \\
                \texttt{Nemotron-Nano-9B} & $45.32$ & $54.21$ & $40.73$ & $64.14$ & $82.12$ & $63.20$ & $58.29$ \\
                \texttt{EuroLLM-9B} & $27.81$ & $46.54$ & $24.91$ & $56.93$ & $56.46$ & $64.33$ & $46.16$ \\
                \texttt{OLMo-3-7B} & $33.61$ & $37.92$ & $30.81$ & $58.71$ & $55.24$ & $63.67$ & $46.66$ \\
                \texttt{Apertus-8B} & $27.06$ & $47.19$ & $28.46$ & $53.75$ & $50.09$ & $61.70$ & $44.71$ \\
                \texttt{InternLM3-8B} & $32.12$ & $84.08$ & $29.31$ & $63.76$ & $72.47$ & $62.55$ & $57.38$ \\
                \midrule
                \rowcolor[rgb]{0.93,0.93,0.93}
                \multicolumn{8}{c}{\textbf{Single-Model Aggregation}} \\
                Self-Consistency & $45.51$ & $81.46$ & $39.23$ & $67.60$ & $81.65$ & $71.25$ & $64.45$ \\
                \midrule
                \rowcolor[rgb]{0.93,0.93,0.93}
                \multicolumn{8}{c}{\textbf{\cmas{} Team Selection}} \\
                \cmas{} (Choice-Soft) & $45.69$ & $80.24$ & $41.95$ & $67.98$ & $84.18$ & $69.76$ & $64.97$ \\
                \cmas{} (PoE) & $45.22$ & $78.93$ & $41.57$ & $67.79$ & $84.55$ & $69.66$ & $64.62$ \\
                \cmas{} (DS) & $44.19$ & $80.52$ & $42.04$ & $68.35$ & $83.90$ & $70.79$ & $64.97$ \\
                \cmas{} (Stacking) & $46.72$ & $85.96$ & $43.91$ & $68.35$ & $84.83$ & $70.69$ & $66.74$ \\
                \bottomrule[1.25pt]
            \end{tabular*}
    \endgroup
\end{table}

\begin{table}[t]
    \centering
    \caption{OOD test accuracy (\%) on the remaining 6 benchmarks. For \cmas{}, we pool heterogeneity matrices using only the seven primary benchmarks in \Cref{tab:main_results} and apply them to select teams on these 6 benchmarks (dev$\rightarrow$test). $\uparrow$ indicates gain over Random-$k$ (pp). \textbf{Bold} marks the best within each aggregation block.}
    \label{tab:ood_results}
    \begingroup
    \footnotesize
    \setlength{\tabcolsep}{0.8pt}
    \renewcommand{\arraystretch}{1.08}
        \begin{tabular*}{\linewidth}{@{\extracolsep{\fill}}cl*{7}{l}@{}}
            \toprule[1.25pt]
            \multicolumn{2}{c}{\multirow[c]{2}{*}{\textbf{Method}}}
            & \multicolumn{6}{c}{\textbf{Dataset}}
            & \multirow[c]{2}{*}{\textbf{Avg.}} \\
            \multicolumn{2}{c}{}
            & \textbf{AQUA-RAT} & \textbf{C-Eval} & \textbf{MathQA} & \textbf{RACE} & \textbf{ReClor} & \textbf{StrategyQA} & \\
            \midrule[1.1pt]
            \rowcolor[rgb]{0.93,0.93,0.93}
            \multicolumn{9}{c}{\textbf{Closed-Source Models (for reference)}} \\
            \multicolumn{2}{l}{Gemini-2.5-Flash} & $26.12$ & $73.41$ & $30.15$ & $63.86$ & $79.21$ & $62.73$ & $55.91$ \\
            \multicolumn{2}{l}{GPT-4o} & $29.96$ & $72.47$ & $29.49$ & $65.07$ & $84.36$ & $80.06$ & $60.24$ \\
            \midrule
            \rowcolor[rgb]{0.93,0.93,0.93}
            \multicolumn{9}{c}{\textbf{Single-Model Aggregation}} \\
            \multicolumn{2}{l}{Self-Consistency} & $45.51$ & $81.46$ & $39.23$ & $67.60$ & $81.65$ & $71.25$ & $64.45$ \\
            \midrule
            \rowcolor[rgb]{0.93,0.93,0.93}
            \multicolumn{9}{c}{\textbf{Team Selection Baselines}} \\
            \multirow[c]{4}{*}{\rotatebox[origin=c]{90}{\textit{Choice-Soft}}} & Random-$k$ & $39.10$ & $65.71$ & $36.83$ & $66.31$ & $75.55$ & $66.63$ & $58.36$ \\
            & Caruana & $\mathbf{47.00}$ & $\mathbf{84.55}$ & $40.92$ & $67.23$ & $83.71$ & $\mathbf{70.13}$ & $\mathbf{65.59}$ \\
            & Quality-Only & $45.32$ & $80.24$ & $\mathbf{42.88}$ & $\mathbf{67.98}$ & $82.96$ & $69.76$ & $64.86$ \\
            & \textbf{\cmas{} (Ours)} & \scoregain{$45.69$}{6.59} & \scoregain{$80.24$}{14.53} & \scoregain{$41.95$}{5.12} & \scoregain{$\mathbf{67.98}$}{1.67} & \scoregain{$\mathbf{84.18}$}{8.63} & \scoregain{$69.76$}{3.13} & \scoregain{$64.97$}{6.61} \\
            \midrule
            \multirow[c]{4}{*}{\rotatebox[origin=c]{90}{\textit{PoE}}} & Random-$k$ & $39.24$ & $66.63$ & $36.70$ & $66.47$ & $76.38$ & $66.76$ & $58.70$ \\
            & Caruana & $\mathbf{47.00}$ & $\mathbf{83.24}$ & $41.48$ & $\mathbf{68.07}$ & $\mathbf{84.64}$ & $\mathbf{69.94}$ & $\mathbf{65.73}$ \\
            & Quality-Only & $46.72$ & $78.93$ & $\mathbf{42.04}$ & $67.79$ & $84.46$ & $69.66$ & $64.93$ \\
            & \textbf{\cmas{} (Ours)} & \scoregain{$45.22$}{5.98} & \scoregain{$78.93$}{12.30} & \scoregain{$41.57$}{4.87} & \scoregain{$67.79$}{1.32} & \scoregain{$84.55$}{8.17} & \scoregain{$69.66$}{2.90} & \scoregain{$64.62$}{5.92} \\
            \midrule
            \multirow[c]{4}{*}{\rotatebox[origin=c]{90}{\textit{DS}}} & Random-$k$ & $38.71$ & $68.38$ & $36.33$ & $66.64$ & $75.47$ & $68.29$ & $58.97$ \\
            & Caruana & $44.76$ & $\mathbf{84.08}$ & $39.79$ & $67.23$ & $83.15$ & $\mathbf{70.79}$ & $\mathbf{64.97}$ \\
            & Quality-Only & $\mathbf{44.94}$ & $80.52$ & $40.36$ & $\mathbf{68.35}$ & $82.87$ & $\mathbf{70.79}$ & $64.64$ \\
            & \textbf{\cmas{} (Ours)} & \scoregain{$44.19$}{5.48} & \scoregain{$80.52$}{12.14} & \scoregain{$\mathbf{42.04}$}{5.71} & \scoregain{$\mathbf{68.35}$}{1.71} & \scoregain{$\mathbf{83.90}$}{8.43} & \scoregain{$\mathbf{70.79}$}{2.50} & \scoregain{$\mathbf{64.97}$}{6.00} \\
            \midrule
            \multirow[c]{4}{*}{\rotatebox[origin=c]{90}{\textit{Stacking}}} & Random-$k$ & $41.04$ & $71.90$ & $38.43$ & $67.21$ & $78.81$ & $69.35$ & $61.12$ \\
            & Caruana & $46.35$ & $84.93$ & $43.26$ & $\mathbf{68.45}$ & $83.80$ & $\mathbf{70.69}$ & $66.25$ \\
            & Quality-Only & $46.54$ & $\mathbf{85.96}$ & $43.63$ & $68.35$ & $83.43$ & $\mathbf{70.69}$ & $66.43$ \\
            & \textbf{\cmas{} (Ours)} & \scoregain{$\mathbf{46.72}$}{5.68} & \scoregain{$\mathbf{85.96}$}{14.06} & \scoregain{$\mathbf{43.91}$}{5.48} & \scoregain{$68.35$}{1.14} & \scoregain{$\mathbf{84.83}$}{6.02} & \scoregain{$\mathbf{70.69}$}{1.34} & \scoregain{$\mathbf{66.74}$}{5.62} \\
            \bottomrule[1.25pt]
        \end{tabular*}
    \endgroup
\end{table}

\subsection{Team Composition Analysis}
\label{app:team_composition}

To further examine how heterogeneity-aware selection changes the selected team, we report in \Cref{tab:team_composition} the benchmarks where \cmas{} selects a different team from Quality-Only under \textsc{Stacking} aggregation. On the seven primary benchmarks, the two methods select identical teams on four benchmarks, with a mean Jaccard similarity of $0.79$. The three differing teams share the same pattern: \cmas{} keeps the strongest shared anchor (Nemotron, Gemma-2, or Qwen3) and substitutes one model from the central high-accuracy cluster with a peripheral one (GLM-4, Ministral-3). This pattern aligns with the heterogeneity landscape in \Cref{fig:mds}, where peripheral models provide complementary error coverage despite slightly lower individual accuracy.

\begin{table}[htbp]
    \centering
    \caption{Team composition changes made by \cmas{} relative to Quality-Only under \textsc{Stacking} aggregation. We report only benchmarks where the selected teams differ. $\Delta$ Acc. denotes the accuracy difference (pp) between \cmas{} and Quality-Only; arrows indicate gains or losses.}
    \label{tab:team_composition}
    \begingroup
    \small
    \setlength{\tabcolsep}{0.8pt}
    \renewcommand{\arraystretch}{1.18}
        \begin{tabular*}{\linewidth}{@{\extracolsep{\fill}}lccc@{}}
            \toprule[1.25pt]
            \textbf{Benchmark} & \textbf{Quality-Only Team} & \textbf{\cmas{} Team} & \textbf{$\Delta$ Acc.} \\
            \midrule[1.1pt]
            CommonsenseQA &
            \makecell[l]{\modeltag{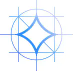}{Gemma-2} \\ \modeltag{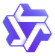}{Qwen3} \\ \removedtag{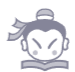}{InternLM3}} &
            \makecell[l]{\modeltag{figures/logos/gemma.png}{Gemma-2} \\ \modeltag{figures/logos/qwen.png}{Qwen3} \\ \addedtag{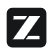}{GLM-4}} &
            \better{$\uparrow2.43$} \\
            MMLU-Pro &
            \makecell[l]{\modeltag{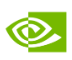}{Nemotron} \\ \modeltag{figures/logos/qwen.png}{Qwen3} \\ \removedtag{figures/logos/gemma.png}{Gemma-2}} &
            \makecell[l]{\modeltag{figures/logos/nemotron.png}{Nemotron} \\ \modeltag{figures/logos/qwen.png}{Qwen3} \\ \addedtag{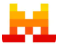}{Ministral-3}} &
            \better{$\uparrow1.78$} \\
            OpenBookQA &
            \makecell[l]{\modeltag{figures/logos/nemotron.png}{Nemotron} \\ \modeltag{figures/logos/gemma.png}{Gemma-2} \\ \removedtag{figures/logos/internlm.png}{InternLM3}} &
            \makecell[l]{\modeltag{figures/logos/nemotron.png}{Nemotron} \\ \modeltag{figures/logos/gemma.png}{Gemma-2} \\ \addedtag{figures/logos/glm.png}{GLM-4}} &
            \better{$\uparrow1.40$} \\
            \bottomrule[1.25pt]
        \end{tabular*}
    \endgroup
\end{table}

\subsection{Robustness and Ablation Analysis}
\label{app:robustness_ablation}

We examine the robustness of \cmas{} along two axes. \Cref{fig:additional_ablations} sweeps team size $k\in\{2,3,4,5\}$ under all four aggregation rules and shows that \cmas{} maintains a consistent margin over Quality-Only and Caruana across budgets, confirming that the gains reported with $k=3$ are not an artifact of a specific team size. \Cref{fig:objective_validity} then assesses whether the optimization objective is a faithful proxy for downstream accuracy: across the 13
benchmarks, the Spearman rank correlation between objective scores and ground-truth accuracy over all $\binom{11}{3}$ teams averages $\rho=0.751$, supporting the validity of our selection criterion.

\begin{figure}[htbp]
    \centering
    \begin{subfigure}{0.48\textwidth}
        \centering
        \includegraphics[width=\linewidth]{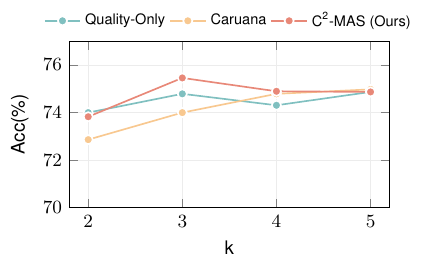}
        \caption{Team size sweep ($k$): Choice-Soft.}
        \label{fig:ablation_k_choice_soft}
    \end{subfigure}
    \hfill
    \begin{subfigure}{0.48\textwidth}
        \centering
        \includegraphics[width=\linewidth]{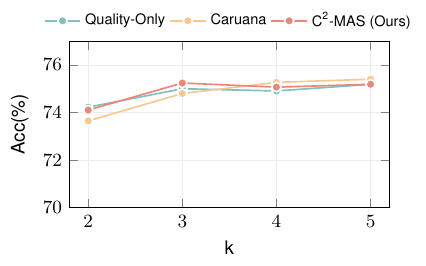}
        \caption{Team size sweep ($k$): PoE.}
        \label{fig:ablation_k_poe}
    \end{subfigure}

    \vspace{4pt}
    \begin{subfigure}{0.48\textwidth}
        \centering
        \includegraphics[width=\linewidth]{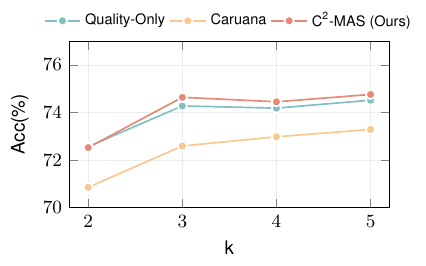}
        \caption{Team size sweep ($k$): DS.}
        \label{fig:ablation_k_ds}
    \end{subfigure}
    \hfill
    \begin{subfigure}{0.48\textwidth}
        \centering
        \includegraphics[width=\linewidth]{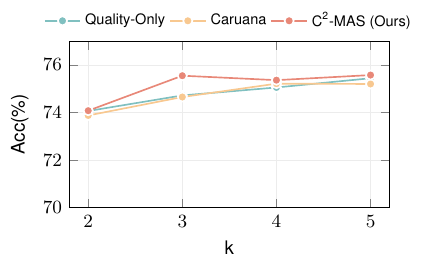}
        \caption{Team size sweep ($k$): Stacking.}
        \label{fig:ablation_k_stacking}
    \end{subfigure}
    \caption{Performance scaling with team size ($k$) across different aggregators.
    We compare the average test accuracy of \cmas{} (ours) against Quality-Only and Caruana baselines as the team size $k$ varies from 2 to 5. The evaluation is performed under four distinct inference-time aggregation strategies: (a) \textsc{Choice-Soft}, (b) \textsc{PoE}, (c) \textsc{DS}, and (d) \textsc{Stacking}.
    \cmas{} demonstrates consistent improvements over baselines across all strategies, particularly in \textsc{Stacking} and \textsc{DS}, indicating that our heterogeneity-aware selection is robust to the choice of downstream combination mechanism.
    }
    \label{fig:additional_ablations}
\end{figure}

\begin{figure}[htbp]
    \centering
    \includegraphics[width=0.96\linewidth]{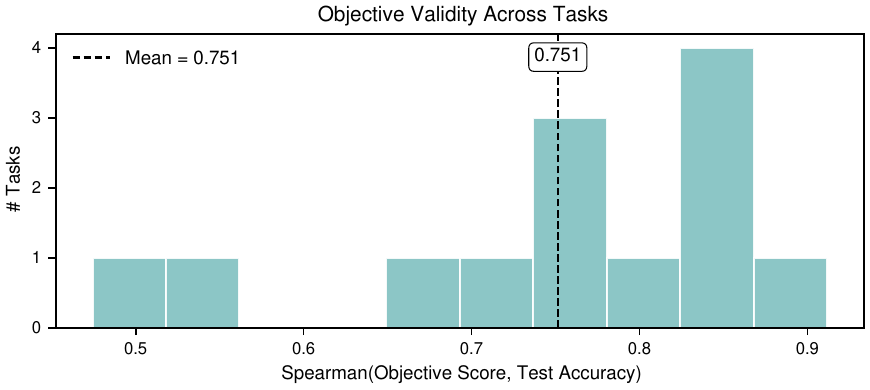}
    \caption{Validity of the optimization objective.
    To assess the reliability of our selection criterion, we compute the Spearman rank correlation coefficient ($\rho$) between the team scores derived from our objective function and their actual downstream test accuracy, calculated across all $\binom{11}{3}$ possible team combinations for each task.
    The histogram illustrates the distribution of these correlations across the 13 benchmarks.
    With a high mean correlation of \textbf{0.751}, the results confirm that our proposed objective serves as an effective proxy for ground-truth performance, allowing \cmas{} to accurately rank candidate teams based solely on profiling data.
    }
    \label{fig:objective_validity}
\end{figure}

\subsection{Stability of Pooled Heterogeneity Estimates}
\label{app:pooled_stability_results}

\Cref{fig:hi_matrix_stability} reports the variability of pairwise HI entries under repeated
subsampling of development examples. The reduced variability after pooling complements the
selection analysis: \Cref{app:pooling_geometry} maps changes in pooled, centered signals to
changes in standardized pairwise scores, and \Cref{prop:selection_stability} identifies when
those changes preserve the selected team.

\begin{figure}[htbp]
    \centering
    \includegraphics[width=\linewidth]{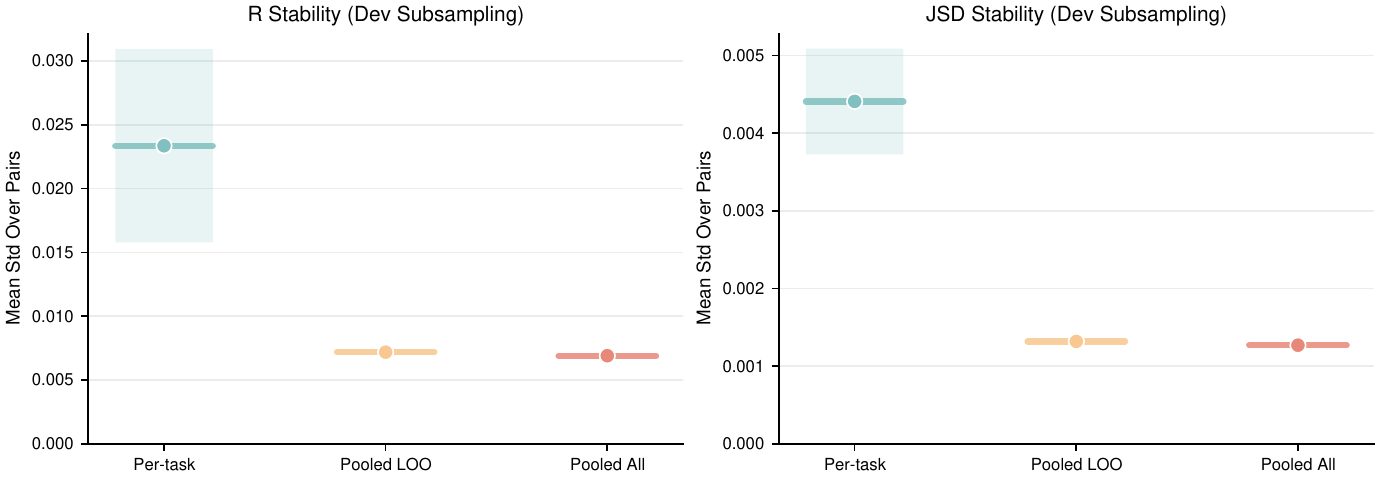}
    \caption{Empirical stability of pooled HI under dev subsampling. We subsample 70\% of
    each task's dev set for 50 repetitions, recompute per-task HI matrices (Yule's-$Q$ $R$ and JSD),
    and form pooled matrices from these noisy estimates. Pooling substantially reduces the
    across-repetition standard deviation of pairwise entries, complementing the selection-stability analysis in \Cref{app:selection_stability,app:pooling_geometry}.}
    \label{fig:hi_matrix_stability}
\end{figure}

\FloatBarrier
\section{Limitations}
\label{app:limitations}

Our current formulation of $\mathrm{HI}_{err}$ and $\mathrm{HI}_{dist}$ targets discriminative tasks with a shared finite label space, and does not directly cover open-ended generation, code synthesis, or multi-turn interactive settings. However, the decoupled pipeline extends naturally to generative settings by redesigning the metrics: $\mathrm{HI}_{err}$ can be adapted to pass@$k$ decorrelation (co-failures on unit tests), and $\mathrm{HI}_{dist}$ to embedding-based distances (e.g., BERTScore) between open-ended outputs. The selection procedure itself is unchanged, and we leave empirical study to future work.

Our profiling also assumes access to representative development data for each candidate model. Although cross-task pooling reduces estimation noise and improves transfer to held-out benchmarks, the selected team may still be sensitive to distribution shift when development tasks poorly match
deployment conditions.

Finally, \cmas{} addresses offline team selection rather than adaptive per-instance routing or interactive collaboration. The selected team is fixed for a task and can be paired with different aggregation rules, but our experiments do not study dynamic protocols such as debate, tool use, or iterative verification. Studying how the heterogeneity signals interact with such protocols is a natural next step.

\end{document}